\newcommand{\Bo}[1]{}
\newcommand{\comment}[1]{}
\newtheorem{theorem}{Theorem}
\newtheorem{lemma}[theorem]{Lemma}
\newtheorem{corollary}[theorem]{Corollary}
\def\eqref#1{equation~\ref{#1}}
\def\1{\bm{1}}
\DeclareMathAlphabet{\mathsfit}{\encodingdefault}{\sfdefault}{m}{sl}
\SetMathAlphabet{\mathsfit}{bold}{\encodingdefault}{\sfdefault}{bx}{n}
\DeclareMathOperator*{\argmin}{arg\,min}
\newcommand{\reals}{\mathbb{R}}
\newcommand{\state}{s} 
\newcommand{\obs}{x}
\newcommand{\latent}{z} 
\newcommand{\enc}{E}
\newcommand{\dyn}{F}
\newcommand{\dec}{D}
\title{Control-Aware Representations for Model-based Reinforcement Learning}
\author{%
  \begin{tabular}{c@{\hspace*{.8cm}}c@{\hspace*{.8cm}}c}
  {\bf Brandon Cui} & {\bf Yinlam Chow} & {\bf Mohammad Ghavamzadeh}\\[.1cm]
  {\normalfont Facebook AI Research} & {\normalfont Google Research} &  {\normalfont Google Research} \\[.1cm]
  {\normalfont bcui@fb.com} & {\normalfont yinlamchow@google.com} & {\normalfont ghavamza@google.com} \\
  \end{tabular}
}
\date{April 2020}
\begin{document}

\maketitle

\begin{abstract}
\vspace{-0.08in}
A major challenge in modern reinforcement learning (RL) is efficient control of dynamical systems from high-dimensional sensory observations. 
Learning controllable embedding (LCE) is a promising approach that addresses this challenge by embedding the observations into a lower-dimensional latent space, estimating the latent dynamics, and utilizing it to perform control in the latent space. Two important questions in this area are how to learn a representation that is amenable to the control problem at hand, and how to achieve an end-to-end framework for representation learning and control. In this paper, we take a few steps towards addressing these questions. We first formulate a LCE model to learn representations that are suitable to be used by a policy iteration style algorithm  
in the latent space. We call this model {\em control-aware representation learning} (CARL). We derive a loss function for CARL that has close connection to the prediction, consistency, and curvature (PCC) principle for representation learning. We derive three implementations of CARL. In the {\em offline} implementation, we replace the locally-linear control algorithm (e.g.,~iLQR) used by the existing LCE methods with a RL algorithm, namely model-based soft actor-critic, and show that it results in significant improvement. In {\em online} CARL, we interleave representation learning and control, and demonstrate further gain in performance. Finally, we propose {\em value-guided} CARL, a variation in which we optimize a weighted version of the CARL loss function, where the weights depend on the TD-error of the current policy. We evaluate the proposed algorithms by extensive experiments on benchmark tasks and compare them with several LCE baselines.
\vspace{-0.085in}
\end{abstract}


\vspace{-0.1in}
\section{Introduction}
\label{sec:intro}
\vspace{-0.1in}

Control of non-linear dynamical systems is a key problem in control theory. Many methods have been developed with different levels of success in different classes of such problems. The majority of these methods assume that a model of the system is known and the underlying state of the system is low-dimensional and observable. These requirements limit the usage of these techniques in controlling dynamical systems from high-dimensional raw sensory data (e.g.,~image and audio), where the system dynamics is unknown, a scenario often seen in modern reinforcement learning (RL). 

Recent years have witnessed the rapid development of a large arsenal of model-free RL algorithms, such as DQN~\citep{mnih2013playing}, TRPO~\citep{schulman15}, PPO~\citep{schulman2017proximal}, and SAC~\citep{pmlr-v80-haarnoja18b}, with impressive success in solving high-dimensional control problems. However, most of this success has been limited to simulated environments (e.g.,~computer games), mainly due to the fact that these algorithms often require a large number of samples from the environment. This restricts their applicability in real-world physical systems, for which data collection is often a difficult process. On the other hand, model-based RL algorithms, such as PILCO~\citep{PILCO2011Deisenroth}, MBPO~\citep{MBPO2019Janner}, and Visual Foresight~\citep{ebert2018visual}, despite their success, still have many issues in handling the difficulties of learning a model (dynamics) in a high-dimensional (pixel) space. 

To address this issue, a class of algorithms have been developed that are based on learning a low-dimensional latent (embedding) space and a latent model (dynamics), and then using this model to control the system in the latent space. This class has been referred to as {\em learning controllable embedding} (LCE) and includes recently developed algorithms, such as E2C~\citep{E2C}, RCE~\citep{Banijamali2018RobustLC}, SOLAR~\citep{zhang19m}, PCC~\citep{Levine2020Prediction}, Dreamer~\citep{Hafner2020Dream}, and PC3~\citep{shu2020predictive}. The following two properties are extremely important in designing LCE models and algorithms. {\bf First}, to learn a representation that is the most suitable for the control process at hand. This suggests incorporating the control algorithm in the process of learning representation. This view of learning control-aware representations is aligned with the value-aware and policy-aware model learning, VAML~\citep{Farahmand18IV} and PAML~\citep{Abachi20PA}, frameworks that have been recently proposed in model-based RL. {\bf Second}, to interleave the representation learning and control processes, and to update them both, using a unifying objective function. This allows to have an end-to-end framework for representation learning and control.  

LCE methods, such as SOLAR and Dreamer, have taken steps towards the second objective by performing representation learning and control in an online fashion. This is in contrast to offline methods like E2C, RCE, PCC, and PC3, that learn a representation once and then use it in the entire control process. On the other hand, methods like PCC and PC3 address the first objective by adding a term to their representation learning loss function that accounts for the curvature of the latent dynamics. This term regularizes the representation towards smoother latent dynamics, which are suitable for the locally-linear controllers, e.g.,~iLQR~\citep{li2004iterative}, used by these methods. 

In this paper, we take a few steps towards the above two objectives. We first formulate a LCE model to learn representations that are suitable to be used by a policy iteration (PI) style algorithm in the latent space. We call this model {\em control-aware representation learning} (CARL). We derive a loss function for CARL that exhibits a close connection to the prediction, consistency, and curvature (PCC) principle for representation learning, proposed in~\cite{Levine2020Prediction}.  We derive three implementations of CARL: {\em offline}, {\em online}, and {\em value-guided}. Similar to offline LCE methods, such as E2C, RCE, PCC, and PC3, in {\em offline} CARL, we first learn a representation and then use it in the entire control process. However, in offline CARL, we replace the locally-linear control algorithm (e.g., iLQR) used by these LCE methods with a PI-style (actor-critic) RL algorithm. Our choice of RL algorithm is the model-based implementation of soft actor-critic (SAC)~\citep{pmlr-v80-haarnoja18b}. Our experiments show significant performance improvement by replacing iLQR with SAC. {\em Online} CARL is an iterative algorithm in which at each iteration, we first learn a latent representation by minimizing the CARL loss, and then perform several policy updates using SAC in this latent space. Our experiments with online CARL show further performance gain over its offline version. Finally, in {\em value-guided} CARL (V-CARL), we optimize a weighted version of the CARL loss function, in which the weights depend on the TD-error of the current policy. This would help to further incorporate the control algorithm in the representation learning process. We evaluate the proposed algorithms by extensive experiments on benchmark tasks and compare them with several LCE baselines.



\vspace{-0.1in}
\section{Problem Formulation}
\label{sec:prob-formulation}
\vspace{-0.1in}

We are interested in learning control policies for non-linear dynamical systems, where the states $s\in\mathcal S \subseteq \mathbb R^{n_s}$ are not fully observed and we only have access to their high-dimensional observations $x\in\mathcal X \subseteq \mathbb R^{n_x},\;n_x\gg n_s$. This scenario captures many practical applications in which we interact with a system only through high-dimensional sensory signals, such as image and audio. We assume that the observations $x$ have been selected such that we can model the system in the observation space using a Markov decision process (MDP)\footnote{A method to ensure observations are Markovian is to buffer them for several time steps~\citep{mnih2013playing}.} $\mathcal M_{\mathcal X}=\langle \mathcal X, \mathcal A, r, P,\gamma \rangle$, where $\mathcal X$ and $\mathcal A$ are observation and action spaces; $r:\mathcal X\times\mathcal A\rightarrow\mathbb R$ is the reward function with maximum value $R_{\max}$, defined by the designer of the system to achieve the control objective;\footnote{For example, in a goal tracking problem in which the agent (robot) aims at finding the shortest path to reach the observation goal $x_g$ (the observation corresponding to the goal state $s_g$), we may define the reward for each observation $x$ as the negative of its distance to $x_g$, i.e.,~$-\|x-x_g\|^2$.} $P:\mathcal X\times\mathcal A\rightarrow\mathbb P(\mathcal X)$ is the unknown transition kernel; and $\gamma\in (0,1)$ is the discount factor.  
Our goal is to find a mapping from observations to control signals, $\mu:\mathcal X\rightarrow\mathbb P(\mathcal A)$, with maximum expected return, i.e.,~$J(\mu) = \mathbb E[\sum_{t=0}^\infty \gamma^tr(x_t,a_t)\mid P,\mu]$. 


Since the observations $x$ are high-dimensional and the observation dynamics $P$ is unknown, solving the control problem in the observation space may not be efficient. As discussed in Section~\ref{sec:intro}, the class of {\em learning controllable embedding} (LCE) algorithms addresses this issue by learning a low-dimensional latent (embedding) space $\mathcal Z\subseteq\mathbb R^{n_z},\;n_z\ll n_x$ and a latent state dynamics, and controlling the system there. The main idea behind LCE is to learn an {\em encoder} $E:\mathcal X\rightarrow\mathbb P(\mathcal Z)$, a {\em latent space dynamics} $F:\mathcal Z\times\mathcal A\rightarrow\mathbb P(\mathcal Z)$, and a {\em decoder} $D:\mathcal Z\rightarrow\mathbb P(\mathcal X)$,\footnote{Some recent LCE models, such as PC3~\citep{shu2020predictive}, are advocating latent models without a decoder. Although we are aware of the merits of such approach, we use a decoder in the models proposed in this paper.} such that a good or optimal controller (policy) in $\mathcal Z$ minimizes the expected loss in the observation space $\mathcal X$. This means that if we model the control problem in $\mathcal Z$ as a MDP $\mathcal M_{\mathcal Z}=\langle \mathcal Z, \mathcal A, \bar{r}, F, \gamma \rangle$ and solve it using a model-based RL algorithm to obtain a policy $\pi:\mathcal Z\rightarrow\mathbb P(\mathcal A)$, the image of $\pi$ in the observation space, i.e.,~$(\pi\circ E)(a|x) = \int_zdE(z|x)\pi(a|z)$, should have a high return. Thus, the loss function to learn $\mathcal Z$ and $(E,F,D)$ from observations $\{(x_t,a_t,r_t,x_{t+1})\}$ should be designed to comply with this goal.

This is why in this paper, we propose a LCE framework that tries to incorporate the control algorithm used in the latent space in the representation learning process. We call this model, {\em control-aware representation learning} (CARL). In CARL, we set the class of control (RL) algorithms used in the latent space to approximate policy iteration (PI), and more specifically to soft actor-critic (SAC)~\citep{pmlr-v80-haarnoja18b}. Before describing CARL in details in the following sections, we present a number of useful definitions and notations here. 

For any policy $\mu$ in $\mathcal X$, we define its value function $U_\mu$ and Bellman operator $T_\mu$ as 

\vspace{-0.15in}
\begin{small}
\begin{equation}
\label{eq:value-Bellman-observation}
U_\mu(x) = \mathbb E[\sum_{t=0}^\infty\gamma^tr_\mu(x_t) \mid P_\mu,x_0=x], \qquad\qquad T_\mu[U](x) = \mathbb E_{x'\sim P_\mu(\cdot|x)}[r_\mu(x) + \gamma U(x')],
\end{equation}
\end{small}
\vspace{-0.15in}

for all $x\!\in\!\mathcal X$ and $U:\mathcal X\!\rightarrow\!\mathbb R$, where $r_\mu(x)\!=\!\int_a d\mu(a|x)r(x,a)$ and $P_\mu(x'|x)\!=\!\int_a d\mu(a|x)P(x'|x,a)$ are the reward function and dynamics induced by $\mu$. 

Similarly, for any policy $\pi$ in $\mathcal Z$, we define its induced reward function and dynamics as $\bar r_\pi(z)=\int_a d\pi(a|z)\bar r(z,a)$ and $F_\pi(z'|z)=\int_a d\pi(a|z)F(z'|z,a)$. We also define its value function $V_\pi$ and Bellman operator $T_\pi$, for all $z\in\mathcal Z$ and $V:\mathcal Z\rightarrow\mathbb R$, as 

\vspace{-0.15in}
\begin{small}
\begin{equation}
\label{eq:value-Bellman-latent}
V_\pi(z) = \mathbb E[\sum_{t=0}^\infty\gamma^t\bar{r}_\pi(z_t) \mid F_\pi,z_0=z], \qquad T_\pi[U](z) = \mathbb E_{z'\sim F_\pi(\cdot|z)}[{\bar r}_\pi(z) + \gamma V(z')].
\end{equation}
\end{small}
\vspace{-0.15in}


For any policy $\pi$ and value function $V$ in the latent space $\mathcal Z$, we denote by $\pi\circ E$ and $V\circ E$, their image in the observation space $\mathcal X$, given encoder $E$, and define them as

\vspace{-0.15in}
\begin{small}
\begin{equation}
\label{eq:policy-value-image}
(\pi\circ E)(a|x) = \int_z dE(z|x)\pi(a|z), \qquad\qquad (V\circ E)(x) = \int_z dE(z|x)V(z).
\end{equation}
\end{small}
\vspace{-0.15in}

\vspace{-0.1in}
\section{A Control Perspective for CARL Model}
\label{sec:CARL-Control}
\vspace{-0.1in}

In this section, we formulate our LCE model, which we refer to as {\em control-aware representation learning} (CARL). As described in Section~\ref{sec:prob-formulation}, CARL is a model for learning a low-dimensional latent space $\mathcal Z$ and the latent dynamics, from data generated in the observation space $\mathcal X$, such that this representation is suitable to be used by a policy iteration (PI) algorithm in $\mathcal Z$. In order to derive the loss function used by CARL to learn $\mathcal Z$ and its dynamics, i.e.,~$(E,F,D,\bar r)$, we first describe how the representation learning can be interleaved with PI in $\mathcal Z$. Algorithm~\ref{alg:PI-Latent} contains the pseudo-code of the resulting algorithm, which we refer to as {\em latent space learning policy iteration} (LSLPI). 

Each iteration $i$ of LSLPI starts with a policy $\mu^{(i)}$ in the observation space $\mathcal X$, which is the mapping of the improved policy in $\mathcal Z$ in iteration $i-1$, i.e.,~$\pi_+^{(i-1)}$, back in $\mathcal X$ through the encoder $E^{(i-1)}$ (Lines~\ref{line:LSLPI-actor} and~\ref{line:LSLPI-projX}). We then compute $\pi^{(i)}$, the current policy in $\mathcal Z$, as the image of $\mu^{(i)}$ in $\mathcal Z$ through the encoder $E^{(i)}$ (Line~\ref{line:LSLPI-distillation}). Note that $E^{(i)}$ is the encoder learned at the end of iteration $i-1$ (Line~\ref{line:LSLPI-rep-learn}). We then use the latent space dynamics $F^{(i)}$ learned at the end of iteration $i-1$ (Line~\ref{line:LSLPI-rep-learn}), and first compute the value function of $\pi^{(i)}$ in the policy evaluation or {\em critic} step, i.e.,~$V^{(i)}=V_{\pi^{(i)}}$ (Line~\ref{line:LSLPI-critic}), and then use $V^{(i)}$ to compute the improved policy $\pi_+^{(i)}$, as the greedy policy w.r.t.~$V^{(i)}$, i.e.,~$\pi^{(i+1)}=\mathcal G[V^{(i)}]$, in the policy improvement or {\em actor} step (Line~\ref{line:LSLPI-actor}). Using the samples in the buffer $\mathcal D$, together with the current policies in $\mathcal Z$, i.e.,~$\pi^{(i)}$ and $\pi_+^{(i)}$, we learn the new representation $(E^{(i+1)},F^{(i+1)},D^{(i+1)},\bar{r}^{(i+1)})$ (Line~\ref{line:LSLPI-rep-learn}). Finally, we generate samples $\mathcal D^{(i+1)}$ by following $\mu^{(i+1)}$, the image of the improved policy $\pi_+^{(i)}$ back in $\mathcal X$ using the old encoder $E^{i}$ (Line~\ref{line:LSLPI-projX}), and add it to the buffer $\mathcal D$ (Line~\ref{line:LSLPI-sample-gen}), and the algorithm iterates. It is important to note that both critic and actor operate in the low-dimensional latent space $\mathcal Z$. 


\begin{algorithm}[t]
\begin{small}
\caption{Latent Space Learning with Policy Iteration (LSLPI)}
\label{alg:PI-Latent}
\begin{algorithmic}[1]
\STATE {\bf Inputs}: $E^{(0)}$, $F^{(0)}$, $D^{(0)}$;
\STATE {\bf Initialization:} $\mu^{(0)}=$ random policy; $\quad$ $\mathcal D \leftarrow$ samples generated from $\mu^{(0)}$;
\FOR{$i = 0,1,\ldots$}
\STATE Compute $\pi^{(i)}$ such that $\mu^{(i)} = \pi^{(i)}\circ E^{(i)}$; \hfill {\color{gray}\# {\em set $\pi^{(i)}$ to be the image of $\mu^{(i)}$ in the latent space}} \label{line:LSLPI-distillation}
\STATE Compute the value function of $\pi^{(i)}$ and set $V^{(i)}= V_{\pi^{(i)}}$; \hfill {\color{gray}\# {\em policy evaluation (critic)}} \label{line:LSLPI-critic}
\STATE Compute the greedy policy w.r.t.~$V^{(i)}$ and set $\pi_+^{(i)}=\mathcal G[V^{(i)}]$; \hfill {\color{gray}\# {\em policy improvement (actor)}} \label{line:LSLPI-actor}
\STATE Set $\mu^{(i+1)} = \pi_+^{(i)} \circ E^{(i)}$; \hfill {\color{gray}\# {\em project the improved policy $\pi_+^{(i)}$ back into the observation space}} \label{line:LSLPI-projX}
\STATE Learn $(E^{(i+1)},F^{(i+1)},D^{(i+1)},{\bar r}^{(i+1)})$ from $\mathcal D$, $\pi^{(i)}$, and $\pi_+^{(i)}$; \hfill {\color{gray}\# {\em representation learning}} \label{line:LSLPI-rep-learn}
\STATE Generate samples $\mathcal D^{(i+1)}=\{(x_t,a_t,r_t,x'_t)\}_{t=1}^n$ from $\mu^{(i+1)}$; $\quad$ $\mathcal D \leftarrow \mathcal D \cup \mathcal D^{(i+1)}$; \label{line:LSLPI-sample-gen}
\ENDFOR
\end{algorithmic}
\end{small}
\end{algorithm}


LSLPI is a PI algorithm in $\mathcal Z$. However, what is desired is that it also acts as a PI algorithm in $\mathcal X$, i.e.,~it results in (monotonic) policy improvement in $\mathcal X$, i.e.,~$U_{\mu^{(i+1)}}\geq U_{\mu^{(i)}}$. Therefore, we define the representation learning loss function in CARL, such that it ensures that LSLPI also results in policy improvement in $\mathcal X$. The following theorem, whose proof is reported in Appendix~\ref{sec:API-CARL-Proofs}, shows the relationship between the value functions of two consecutive polices generated by LSLPI in $\mathcal X$.

\begin{theorem}
\label{thm:PI-Observation-Space}
Let $\mu$, $\mu_+$, $\pi$, $\pi_+$, and $(E,F,D,\bar r)$ be the policies $\mu^{(i)}$, $\mu^{(i+1)}$, $\pi^{(i)}$, $\pi_+^{(i)}$, and the learned latent representation $(E^{(i+1)},F^{(i+1)},D^{(i+1)},\bar r^{(i+1)})$ at iteration $i$ of the LSLPI algorithm (Algorithm~\ref{alg:PI-Latent}). Then, the following holds for the value functions of $\mu$ and $\mu_+$: 

\vspace{-0.175in}
\begin{small}
\begin{equation}
\label{eq:PI-observation}
U_{\mu_+}(x) \geq U_{\mu}(x) - \Big(\frac{\gamma}{1-\gamma}\sum_{\widetilde\pi\in\{\pi,\pi_+\}}\Delta(\enc,\dyn,\dec,\bar{r},\widetilde\pi,x) + \frac{R_{\max}}{1-\gamma}\cdot \underbrace{D_{\text{KL}}\big((\pi\circ E)(\cdot|x)\;||\;\mu(\cdot|x)\big)}_{\mathrm {L_{reg}(E,E_-,\pi,x)}}\Big),
\end{equation}
\end{small}
\vspace{-0.175in}

for all $x\in\mathcal X$, where the error term $\Delta$ for a policy $\pi$ is given by

\vspace{-0.175in}
\begin{small}
\begin{align}
\label{eq:error-term-PI-observation}
\Delta(&E,F,D,\bar{r},\pi,x) = \frac{R_{\max}}{1-\gamma}\overbrace{\sqrt{\frac{-1}{2}\int_zdE(z|x)\log D(x|z)}}^{(\mathrm {I})=\mathrm {L_{ed}(E,D,x)}} \;\; + \;\; 2\overbrace{\big|r_{\pi\circ E}(x) - \int_zdE(z|x)\bar r_\pi(z)\big|}^{(\mathrm {II})=\mathrm {L_r(E,\bar r,\pi,x)}} \\ 
&+ \frac{\gamma R_{\max}}{\sqrt{2}(1-\gamma)}\Big(\underbrace{\sqrt{D_{\text{KL}}\big(P_{\pi\circ E}(\cdot|x) \; || \; (D\circ F_\pi\circ E)(\cdot|x)\big)}}_{(\mathrm {III})=\mathrm {L_p(E,F,D,\pi,x)}} + \underbrace{\sqrt{D_{\text{KL}}\big((E \circ P_{\pi\circ E})(\cdot|x)\;||\;(F_\pi\circ E)(\cdot|x)\big)}}_{(\mathrm {IV})}\Big). \nonumber
\end{align}
\end{small}
\vspace{-0.175in}
\end{theorem}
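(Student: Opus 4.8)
The plan is to prove the bound as a chain of four inequalities, in which the only \emph{exact} policy improvement happens inside the latent MDP $\mathcal M_{\mathcal Z}$ and all the error is incurred by two ``value simulation'' transfers between $\mathcal Z$ and $\mathcal X$. Writing $\nu_\pi := \pi\circ E$ for the image of a latent policy under the \emph{current} encoder $E = E^{(i+1)}$, and reading Line~\ref{line:LSLPI-projX} so that $\mu_+ = \pi_+\circ E$ is composed with the freshly-learned encoder (so that only $\mu = \mu^{(i)} = \pi\circ E_-$ carries encoder drift --- exactly the asymmetry reflected in the theorem having $L_{\mathrm{reg}}$ only for $\pi$), the target chain is $U_{\mu_+}(x)\ \geq\ (V_{\pi_+}\circ E)(x) - \tfrac{\gamma}{1-\gamma}\Delta(E,F,D,\bar r,\pi_+,x)\ \geq\ (V_{\pi}\circ E)(x) - \tfrac{\gamma}{1-\gamma}\Delta(E,F,D,\bar r,\pi_+,x)\ \geq\ U_{\mu}(x) - \tfrac{\gamma}{1-\gamma}\big(\Delta(\cdot,\pi)+\Delta(\cdot,\pi_+)\big) - \tfrac{R_{\max}}{1-\gamma}D_{\mathrm{KL}}\big((\pi\circ E)(\cdot|x)\,\|\,\mu(\cdot|x)\big)$. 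The middle inequality is ordinary latent policy iteration: since $\pi_+=\mathcal G[V_\pi]$ is greedy w.r.t.\ $V_\pi$ (the value of $\pi$ in $\mathcal M_{\mathcal Z}$), $T_{\pi_+}[V_\pi]\geq T_\pi[V_\pi]=V_\pi$ pointwise on $\mathcal Z$; monotonicity and the $\gamma$-contraction of $T_{\pi_+}$ give $V_{\pi_+}=\lim_k T_{\pi_+}^k[V_\pi]\geq V_\pi$, and integrating against $dE(z|x)$ yields $(V_{\pi_+}\circ E)(x)\geq(V_\pi\circ E)(x)$.

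The heart of the proof is the simulation lemma: for any latent policy $\widetilde\pi$ with image $\nu=\widetilde\pi\circ E$, bound $|U_\nu(x)-(V_{\widetilde\pi}\circ E)(x)|$ by expanding both fixed-point equations, $U_\nu = r_\nu + \gamma P_\nu U_\nu$ on $\mathcal X$ and $V_{\widetilde\pi}(z)=\bar r_{\widetilde\pi}(z)+\gamma\int dF_{\widetilde\pi}(z'|z)V_{\widetilde\pi}(z')$ on $\mathcal Z$, and matching them term by term. The one-step reward mismatch $\big|r_{\widetilde\pi\circ E}(x)-\int dE(z|x)\bar r_{\widetilde\pi}(z)\big|$ is precisely term~(II), $L_r$. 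For the transition contribution I would route through the ``decoded-model'' value $\widehat U$, i.e.\ the value of $\nu$ under the kernel $D\circ F_{\widetilde\pi}\circ E$: comparing $U_\nu$ with $\widehat U$ is a textbook simulation step whose transition error is $\|P_{\widetilde\pi\circ E}(\cdot|x)-(D\circ F_{\widetilde\pi}\circ E)(\cdot|x)\|_{\mathrm{TV}}\leq\sqrt{\tfrac12 D_{\mathrm{KL}}(\cdot\|\cdot)}$ by Pinsker --- term~(III), $L_p$. Comparing $\widehat U$ with $V_{\widetilde\pi}\circ E$ then needs (a) reading the latent value off a decoded observation, i.e.\ controlling the round trip $(E\circ D)(\cdot|z')$, which is where the decoder reconstruction term~(I), $L_{\mathrm{ed}}=\sqrt{-\tfrac12\int dE(z|x)\log D(x|z)}$, enters, and (b) matching the latent one-step law $F_{\widetilde\pi}\circ E$ against the pushforward $E\circ P_{\widetilde\pi\circ E}$ of the true dynamics through the encoder --- term~(IV), again Pinsker together with $\|V_{\widetilde\pi}\|_\infty\leq R_{\max}/(1-\gamma)$. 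Writing $\delta:=U_\nu-(V_{\widetilde\pi}\circ E)$ and collecting, the residual $\gamma\int dP_\nu(x'|x)\,\delta(x')$ closes a recursion $\|\delta\|_\infty\leq(\text{one-step errors})+\gamma\|\delta\|_\infty$, producing the $\tfrac{1}{1-\gamma}$ and $\tfrac{\gamma}{1-\gamma}$ prefactors that appear inside $\Delta$.

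For the last link I would bridge $\mu=\pi\circ E_-$ to $\pi\circ E$ by a policy-perturbation argument: the two policies share the state space and differ only in their action distributions, so a performance-difference/telescoping estimate gives $|U_{\mu}(x)-U_{\pi\circ E}(x)|\lesssim\tfrac{R_{\max}}{1-\gamma}\sup_x\|(\pi\circ E)(\cdot|x)-\mu(\cdot|x)\|_{\mathrm{TV}}$, and one more application of Pinsker replaces the total-variation distance by $D_{\mathrm{KL}}\big((\pi\circ E)(\cdot|x)\,\|\,\mu(\cdot|x)\big)$, i.e.\ $L_{\mathrm{reg}}$. Chaining the four inequalities and absorbing constants yields the claimed bound pointwise in $x$.

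The main obstacle is the simulation lemma, specifically justifying that the transition discrepancy decomposes into the $\mathcal X$-side prediction divergence~(III) \emph{and} the $\mathcal Z$-side consistency divergence~(IV) without double counting, and tracing exactly how the reconstruction cross-entropy~(I) upper-bounds the encode--decode round-trip error (bounding a total-variation distance by a square-rooted negative log-likelihood is the least routine estimate, and fixes the constants $R_{\max}/(1-\gamma)$ and $\gamma R_{\max}/(\sqrt2(1-\gamma))$ in $\Delta$). Everything else --- the latent PI monotonicity, Pinsker, and the geometric-series closure --- is standard; the remaining care is purely in matching which $(E,F,D,\bar r)$ and which policy image LSLPI has in force at each line of Algorithm~\ref{alg:PI-Latent}.
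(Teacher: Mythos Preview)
Your proposal is correct and follows essentially the same route as the paper: the appendix proof (Theorem~\ref{thm:policy_improve}) is precisely your chain---a two-sided ``simulation lemma'' $|U_{\widetilde\pi\circ E}(x)-(V_{\widetilde\pi}\circ E)(x)|\leq\tfrac{\gamma}{1-\gamma}\Delta(\cdot,\widetilde\pi,x)$ (their Lemma~\ref{lem:policy_eval}, proved via a Bellman-residual bound Theorem~\ref{thm:policy_eval} that decomposes exactly into your terms (I)--(IV) and then telescoped geometrically), sandwiching the latent-PI inequality $V_{\pi_+}\geq V_\pi$, and closed by a TRPO/Pinsker policy-perturbation estimate for the distillation error $L_{\mathrm{reg}}$. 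Your caution about which encoder is in force at each step is well placed; the paper's main-text and appendix indices are off by one, but your reading (that $\mu_+$ is exactly $\pi_+\circ E$ while $\mu$ carries the encoder drift) matches the structure of the appendix proof.
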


It is easy to see that LSLPI guarantees (policy) improvement in $\mathcal X$, if the terms in the parentheses on the RHS of (\ref{eq:PI-observation}) are zero. We now describe these terms. The last term on the RHS of (\ref{eq:PI-observation}) is the KL between $\pi^{(i)} \circ E$ and $\mu^{(i)}=\pi^{(i)}\circ E^{(i)}$. This term can be seen as a regularizer to keep the new encoder $E$ close to the old one $E^{(i)}$. The four terms in (\ref{eq:error-term-PI-observation}) are: {\bf (I)} The encoding-decoding error to ensure $x\approx (D\circ E)(x)$; {\bf (II)} The error that measures the mismatch between the reward of taking action according to policy $\pi\circ E$ at $x\in\mathcal X$, and the reward of taking action according to policy $\pi$ at the image of $x$ in $\mathcal Z$ under $E$; {\bf (III)} The error in predicting the next observation through paths in $\mathcal X$ and $\mathcal Z$. This is the error between $x'$ and $\hat{x}'$ shown in Fig.~\ref{fig:Prediction-Consistency}(a); and {\bf (IV)} The error in predicting the next latent state through paths in $\mathcal X$ and $\mathcal Z$. This is the error between $z'$ and $\tilde{z}'$ shown in Fig.~\ref{fig:Prediction-Consistency}(b). 


\vspace{-0.05in}
\begin{figure}[h]
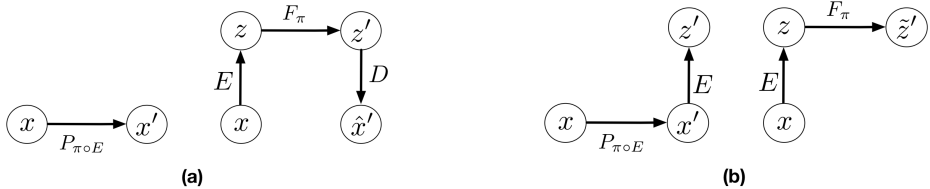

\centering
\includegraphics[width=2in]{Figures/Fig-Prediction.png}
\hspace{0.75in}
\includegraphics[width=2in]{Figures/Fig-Consistency.png}
\vspace{-0.05in}
\caption{{\bf (a)} Paths from the current observation $x$ to the next one, (left) in $\mathcal X$ and (right) through $\mathcal Z$. {\bf (b)} Paths from the current observation $x$ to the next latent state, (left) through $\mathcal X$ followed by encoding and (right) starting with encoding and through $\mathcal Z$.}
\label{fig:Prediction-Consistency}
\vspace{-0.05in}
\end{figure}

{\bf Representation Learning in CARL:} $\;$ Theorem~\ref{thm:PI-Observation-Space} provides us with a recipe (loss function) to learn the latent space $\mathcal Z$ and $(E,F,D,\bar r)$. In CARL, we propose to learn a representation for which the terms in the parentheses on the RHS of (\ref{eq:PI-observation}) are small. 
As mentioned earlier, the second term, $L_{\text{reg}}(E,E_-,\pi,x)$, can be considered as a regularizer to keep the new encoder $E$ close to the old one $E_-$. Term (II) that measures the mismatch between rewards can be kept small, or even zero, if the designer of the system selects the rewards in a compatible way. Although CARL allows us to learn a reward function in the latent space, similar to several other LCE works~\citep{E2C,Banijamali2018RobustLC,Levine2020Prediction,shu2020predictive}, in this paper, we assume that a compatible reward function is given. Terms (III) and (IV) are the equivalent of the {\em prediction} and {\em consistency} terms in PCC~\citep{Levine2020Prediction} for a particular latent space policy $\pi$. Since PCC has been designed for an offline setting (i.e.,~one-shot representation learning and control), the prediction and consistency terms are independent of a particular policy and are defined for state-action pairs. While CARL is designed for an online setting (i.e.,~interleaving representation learning and control), and thus, its loss function at each iteration depends on the current latent space policies $\pi$ and $\pi_+$. As we will see in Section~\ref{subsec:offline-CARL}, our offline implementation of CARL uses a loss function very similar to that of PCC. Note that (IV) is slightly different than the consistency term in PCC. However, if we upper-bound it using Jensen inequality as \begin{small}$(\mathrm {IV}) \leq L_c(E,F,\pi,x):= \int_{x'\in\mathcal X}dP_{\pi\circ E}(x'|x)\cdot D_\text{KL}\big( E(\cdot|x')\;||\;(F_\pi\circ E)(\cdot|x)\big)$\end{small},
%
%
we obtain the loss term $L_{\text{c}}(E,F,\pi,x)$, which is very similar to the consistency term in PCC. Similar to PCC, we also add a curvature loss to the loss function of CARL to encourage having a smoother latent space dynamics $F_\pi$. Putting all these terms together, we obtain the CARL loss function as 

\vspace{-0.175in}
\begin{small}
\begin{align}
(E^*,F^*,D^*) \in \argmin_{(E,F,D)}\sum_{x\sim \mathcal D}& \lambda_{\text{ed}} L_{\text{ed}}(E,D,x) + \lambda_{\text{p}} L_{\text{p}}(E,F,D,\pi,x) 
+ \lambda_{\text{c}} L_{\text{c}}(E,F,\pi,x) \nonumber \\ 
&+ \lambda_{\text{cur}}L_{\text{cur}}(F,\pi,x) + \lambda_{\text{reg}} L_{\text{reg}}(E,E_-,\pi,x),
\label{eq:CARL-loss}
\end{align}
\end{small}
\vspace{-0.185in}

where $(\lambda_{\text{ed}},\lambda_{\text{p}},\lambda_{\text{c}},\lambda_{\text{cur}},\lambda_{\text{reg}})$ are hyper-parameters of the algorithm, $(L_{\text{ed}},L_{\text{p}})$ are the encoding-decoding and prediction losses defined in (\ref{eq:error-term-PI-observation}), $L_{\text{c}}$ is the consistency loss defined above, $L_{\text{cur}}$ is the curvature loss, and $L_{\text{reg}}$ is the regularizer that ensures the new encoder remains close to the old one. We set $\lambda_{\text{reg}}$ to a small value not to allow $L_{\text{reg}}$ to play a major role in our implementations. 


\vspace{-0.1in}
\section{Different Implementations of CARL}
\label{sec:CARL-algos}
\vspace{-0.1in}

The CARL loss function in (\ref{eq:CARL-loss}) introduces an optimization problem that takes a policy $\pi$ as input and learns a representation suitable for its evaluation and improvement. To optimize this loss in practice, similar to the PCC model~\citep{Levine2020Prediction}, we define $\widehat{P}=D\circ F_\pi\circ E$ as a latent variable model that factorizes as $\widehat{P}(x_{t+1},z_t,\hat{z}_{t+1}|x_t,\pi)=\widehat{P}(z_t|x_t)\widehat{P}(\hat{z}_{t+1}|z_t,\pi)\widehat{P}(x_{t+1}|\hat{z}_{t+1})$, and use a variational approximation to the interactable negative log-likelihood of the loss terms in (\ref{eq:CARL-loss}). The variational bounds for these terms can be obtained similar to Eqs.~6 and~7 in~\cite{Levine2020Prediction}. Below we describe three instantiations of the CARL model in practice. Implementation details can be found in Algorithm~\ref{alg:CARL_implementation} in Appendix~\ref{app:CARL Algorithm}. While CARL is compatible with most PI-style (actor-critic) RL algorithms, following a recent work, MBRL~\citep{MBPO2019Janner}, we choose SAC as the RL algorithm in CARL. Since most actor-critic algorithms are based on first-order gradient updates, as discussed in Section~\ref{sec:CARL-Control}, we regularize the curvature of the latent dynamics $F$ (see Eqs.~8 and~9 in~\citealt{Levine2020Prediction}) in CARL to improve its empirical stability and performance in policy learning.

\vspace{-0.075in}
\subsection{Offline CARL}
\label{subsec:offline-CARL}
\vspace{-0.075in}

We first implement CARL in an offline setting, where we first generate a (relatively) large batch of observation samples $\{(x_t,a_t,r_t,x'_t)\}_{t=1}^N$ using an exploratory (e.g.,~random) policy. We then use this batch to optimize the CARL loss function (\ref{eq:CARL-loss}) via a variational approximation scheme, as described above, and learn a latent representation $\mathcal Z$ and $(E,F,D)$. Finally, we solve the decision problem in $\mathcal Z$ using a model-based RL algorithm, which in our case is model-based soft actor-critic (SAC)~\citep{pmlr-v80-haarnoja18b}. The learned policy $\hat{\pi}^*$ in $\mathcal Z$ is then used to control the system from observations as $a_t\sim(\hat{\pi}^*\circ E)(\cdot|x_t)$. This is the setting that has been used in several recent LCE works, such as E2C~\citep{E2C}, RCE~\citep{Banijamali2018RobustLC}, PCC~\citep{Levine2020Prediction}, and PC3~\citep{shu2020predictive}. Our offline implementation is different than these works in {\bf 1)} we replace the locally linear control algorithm, namely iterative LQR (iLQR)~\citep{li2004iterative}, used in them with model-based SAC, which results in significant performance improvement as shown in our experimental results in Section~\ref{sec:experiments}, and {\bf 2)} we optimize the CARL loss function that as mentioned above, despite close connection is still different than the one used by PCC. 

The CARL loss function presented in Section~\ref{sec:CARL-Control} has been designed for an online setting in which at each iteration, it takes a policy as input and learns a representation that is suitable for evaluating and improving this policy. However, in the offline setting, the learned representation should be good for any policy generated in the course of running the PI-style control algorithm. Therefore, we marginalize out the policy from the (online) CARL's loss function and use the RHS of the following corollary (proof in Appendix~\ref{app:offline_CARL}) to construct the CARL loss function used in our offline experiments.

\begin{corollary}
Let $\mu$ and $\mu_+$ be two consecutive policies in $\mathcal X$ genrated by a PI-style control algorithm in the latent space constructed by $(E,\!F,\!D,\!\bar r)$. Then, the following holds for the value functions of $\mu$ and $\mu_+$, where $\Delta$ is defined by (\ref{eq:error-term-PI-observation}) (in modulo to replacing sampled action $a\!\sim\!\pi\!\circ\!\enc$ with action $a$):

\vspace{-0.15in}
\begin{small}
\begin{equation}
\label{eq:policy_improve_offline}
U_{\mu_+}(x) \geq U_\mu(x)-\frac{2\gamma}{1-\gamma} \cdot \max_{a\in\mathcal A}\;\Delta(\enc,\dyn,\dec,\bar{r},a,x),\,\,\forall x\in\mathcal X.
\end{equation}
\end{small}
\vspace{-0.275in}
\end{corollary}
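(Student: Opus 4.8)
The plan is to derive the corollary as a direct consequence of Theorem~\ref{thm:PI-Observation-Space} by eliminating the policy-dependence from its right-hand side. Recall that in the offline setting we want a bound valid uniformly over all policies $\pi$ that may arise during the run of the PI-style algorithm, so the natural first move is to upper-bound each policy-dependent quantity on the RHS of (\ref{eq:PI-observation}) by its worst case over actions. Concretely, for the two error terms $\Delta(\enc,\dyn,\dec,\bar r,\pi,x)$ and $\Delta(\enc,\dyn,\dec,\bar r,\pi_+,x)$, I would observe that each of the four constituent sub-terms (I)--(IV) in (\ref{eq:error-term-PI-observation}) is obtained from a quantity indexed by a sampled action $a\sim(\pi\circ\enc)(\cdot|x)$ (or $a\sim\pi(\cdot|z)$ after encoding) by taking an expectation over that action. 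Since an expectation over $a$ is bounded above by the maximum over $a\in\mathcal A$, we get $\Delta(\enc,\dyn,\dec,\bar r,\widetilde\pi,x)\le \max_{a\in\mathcal A}\Delta(\enc,\dyn,\dec,\bar r,a,x)$ for each $\widetilde\pi\in\{\pi,\pi_+\}$, where $\Delta(\enc,\dyn,\dec,\bar r,a,x)$ is the version of (\ref{eq:error-term-PI-observation}) with the sampled action replaced by a fixed $a$ (this is exactly the ``modulo'' clause in the corollary statement). This needs a small amount of care term by term: for (I) the encoding-decoding loss has no action at all, so it is trivially dominated; for (II), (III), (IV) one writes $r_{\pi\circ\enc}(x)=\int_a d(\pi\circ\enc)(a|x)\,r(x,a)$, $P_{\pi\circ\enc}(\cdot|x)=\int_a d(\pi\circ\enc)(a|x)\,P(\cdot|x,a)$, etc., and uses convexity of $|\cdot|$ and of $\sqrt{D_{\mathrm{KL}}(\cdot\|\cdot)}$ in its arguments together with Jensen to push the expectation over $a$ outside, then bound it by the max.

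Next I would handle the regularizer term $L_{\mathrm{reg}}(\enc,\enc_-,\pi,x)=D_{\mathrm{KL}}((\pi\circ\enc)(\cdot|x)\,\|\,\mu(\cdot|x))$. In the offline setting there is no ``old'' encoder $\enc_-$ to drift from — the representation is fixed once and for all before control begins — so the correct reading is that $\mu$ is taken to be exactly $\pi\circ\enc$ (equivalently, the policy in $\mathcal X$ is \emph{defined} as the image of the latent policy under the single fixed encoder), making this KL divergence identically zero. With the regularizer gone, the RHS of (\ref{eq:PI-observation}) collapses to $U_\mu(x) - \frac{\gamma}{1-\gamma}\sum_{\widetilde\pi\in\{\pi,\pi_+\}}\Delta(\enc,\dyn,\dec,\bar r,\widetilde\pi,x)$, and applying the two-term bound from the previous paragraph gives $U_{\mu_+}(x)\ge U_\mu(x)-\frac{\gamma}{1-\gamma}\cdot 2\max_{a\in\mathcal A}\Delta(\enc,\dyn,\dec,\bar r,a,x)=U_\mu(x)-\frac{2\gamma}{1-\gamma}\max_{a\in\mathcal A}\Delta(\enc,\dyn,\dec,\bar r,a,x)$, which is precisely (\ref{eq:policy_improve_offline}).

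The main obstacle I anticipate is the term-by-term justification that pulling the action-expectation out and replacing it by a maximum is legitimate for the square-root-of-KL terms (III) and (IV): $D_{\mathrm{KL}}$ is jointly convex in its pair of arguments, and $\sqrt{\cdot}$ is concave, so one must be slightly careful about the direction of Jensen — the clean route is to bound $\sqrt{D_{\mathrm{KL}}(\mathbb{E}_a[\cdot]\,\|\,\mathbb{E}_a[\cdot])}\le \sqrt{\mathbb{E}_a D_{\mathrm{KL}}(\cdot\|\cdot)}\le \sqrt{\max_a D_{\mathrm{KL}}(\cdot\|\cdot)}=\max_a\sqrt{D_{\mathrm{KL}}(\cdot\|\cdot)}$, using joint convexity of KL for the first inequality and monotonicity of $\sqrt{\cdot}$ for the last. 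A secondary subtlety worth a sentence in the proof is confirming that both $\widetilde\pi=\pi$ and $\widetilde\pi=\pi_+$ contribute the \emph{same} uniform bound $\max_{a}\Delta(\cdots,a,x)$ — which they do, since after replacing the sampled action with a free variable $a$ the resulting expression no longer references the policy at all — so the sum over the two policies simply produces the factor of $2$. Everything else is bookkeeping that follows the structure of the proof of Theorem~\ref{thm:PI-Observation-Space} in Appendix~\ref{sec:API-CARL-Proofs}.
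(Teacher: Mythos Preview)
Your proposal is correct and follows essentially the same approach as the paper's proof in Appendix~\ref{app:offline_CARL}: invoke Theorem~\ref{thm:PI-Observation-Space} (equivalently Theorem~\ref{thm:policy_improve}), observe that in the offline setting the encoder is fixed so the distillation/regularizer KL term vanishes, and then replace each policy-dependent $\Delta(\enc,\dyn,\dec,\bar r,\widetilde\pi,x)$ by the worst-case $\max_{a\in\mathcal A}\Delta(\enc,\dyn,\dec,\bar r,a,x)$, which yields the factor of $2$. The paper's own proof is in fact terser than yours---it simply cites ``analogous arguments as in Corollary~\ref{coro:policy_eval}'' for the max-over-actions step---so your explicit term-by-term treatment via joint convexity of $D_{\mathrm{KL}}$ and monotonicity of $\sqrt{\cdot}$ is a welcome elaboration rather than a deviation.
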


\vspace{-0.05in}
\subsection{Online CARL}
\label{subsec:online-CARL}
\vspace{-0.05in}

In the online implementation of CARL, at each iteration $i$, the current policy $\pi^{(i)}$ is the improved policy of the last iteration, $\pi_+^{(i-1)}$. We first generate a relatively (to offline CARL) small batch samples from the image of this policy in $\mathcal X$, i.e.,~$\mu^{(i)}=\pi^{(i)}\circ E^{(i-1)}$, and then learn a representation $(E^{(i)},F^{(i)},D^{(i)})$ suitable for evaluating and improving the image of $\mu^{(i)}$ in $\mathcal Z$ under the new encoder $E^{(i)}$. This means that with the new representation, the current policy that was the image of $\mu^{(i)}$ in $\mathcal Z$ under $E^{(i-1)}$, should be replaced by its image $\pi^{(i)}$ under the new encoder, i.e.,~$\pi^{(i)}\!\circ\! E^{(i)}\approx \mu^{(i)}$. In online CARL, this is done by a {\em policy distillation} step in which we minimize the following loss:\footnote{Our experiments reported in Appendix~\ref{app:Additional Experiments} show that adding distillation improves the performance in online CARL. Thus, all the results reported for online CARL and value-guided CARL in the main paper are with policy distillation.}

\vspace{-0.125in}
\begin{small}
\begin{equation}
\label{eq:policy-distillation}
\pi^{(i)} \in \argmin_\pi \sum_{x\sim\mathcal D} D_{\text{KL}}\big((\pi\circ E^{(i)})(\cdot|x) \;||\; (\pi_+^{(i-1)}\circ E^{(i-1)})(\cdot|x)\big).
\end{equation}
\end{small}
\vspace{-0.125in}

After the current policy $\pi^{(i)}$ was set, we perform multiple steps of (model-based) SAC in $\mathcal Z$ using the current model, $(F^{(i)},\bar r^{(i)})$, and then send the resulting policy $\pi_+^{(i)}$ to the next iteration. 



\vspace{-0.05in}
\subsection{Value-Guided CARL}
\label{subsec:V-CARL}
\vspace{-0.05in}

In the previous two implementations of CARL, we learn the model $(E,F,D)$ using the loss function~(\ref{eq:CARL-loss}). Theorem~\ref{thm:PI-Observation-Space} shows that minimizing this loss guarantees performance improvement. While this loss depends on the current policy $\mu$ (through the latent space policy and encoder), it does not use its value function $U_\mu$. To incorporate this extra piece of information in the representation learning process, we utilize results from variational model-based policy optimization (VMBPO) work by~\cite{chow2020Variational}. Using Lemma~3 in~\cite{chow2020Variational}, we can include the value function in the observation space dynamics as\footnote{In general, this can also be applied to reward learning, but for simplicity we only focus on learning dynamics.}

\vspace{-0.275in}
\begin{small}
\begin{equation}
\label{eq:optimal-posteriors-closed-form}
\begin{split}
P^*(x'|x,a) = P(x'|x,a)\cdot\exp\big(\tau\cdot\frac{r(x,a)+ \gamma\tilde{U}_\mu(x')-\tilde{W}_\mu(x,a) }{\gamma}\big),
\end{split}
\end{equation}
\end{small}
\vspace{-0.15in}


where \begin{small}$\tilde{U}_\mu(x):=\frac{1}{\tau}\log\mathbb E\left[\exp\left(\tau\cdot\sum_{t=0}^\infty\gamma^tr_{\mu,t} \right)\mid P_\mu,x_0=x\right]$\end{small} is the \emph{risk-adjusted} value function of policy $\mu$, and $\tilde{W}_\mu(x)$ is its corresponding state-action value function, i.e., \begin{small}$\tilde{W}_\mu(x,a):=r(x,a)+\gamma\cdot\frac{1}{\tau}\log\mathbb E_{x'\sim P(\cdot|x,a)}\left[\exp(\tau\cdot U_\mu(x'))\right]$\end{small}. The reason for referring to $\tilde{W}_\mu$ and $\tilde{U}_\mu$ as risk-adjusted value functions is that the Bellman operator is no longer defined by the expectation operator over $P(x'|x,a)$, but instead is defined by the exponential risk \begin{small}$\rho_\tau(U(\cdot)|x,a)=\frac{1}{\tau}\log\mathbb E_{x'\sim P(\cdot|x,a)}[\exp(\tau\cdot U(x'))]$\end{small}, with the temperature parameter $\tau>0$~\citep{ruszczynski2006optimization}. The modified dynamics $P^*$ is a re-weighting of $P$ using the exponential-twisting weights \begin{small}$\exp(\frac{\tau}{\gamma}\cdot w(x,a,x'))$, where $w(x,a,x') := r(x,a)+\gamma \tilde{U}_\mu(x')-\tilde{W}_\mu(x,a)$\end{small} is the temporal difference (TD) error of the risk-adjusted value functions.

To incorporate the value-guided transition model $P^*$ into the CARL loss function, we need to modify all the loss terms that depend on $P$, i.e.,~the prediction loss $L_{\text{p}}(E,F,D,\pi,x)$ and the consistency loss $L_{\text{c}}(E,F,\pi,x)$. Because of the regularization term $L_{\text{reg}}(E,E_-,\pi,x)$ that enforces the policy $\pi\circ\enc$ to be close to $\mu$, we may replace the transition dynamics $P_{\pi\circ\enc}$ in the prediction loss $L_{\text{p}}(E,F,D,\pi,x)$ with $P_{\mu}$. Since $\log P_{\mu}(x'|x)$ does not depend on the representation, minimizing the prediction loss would be equivalent to maximizing the expected log-likelihood (MLE) $L_{\text{mle}}(E,F,D,\pi,x)=-\int_{x'}dP_{\mu}(x'|x)\cdot\log (D\circ F_\pi\circ E)(x'|x)$. Now if we replace dynamics $P$ with its value-guided counterpart $P^*$ in the MLE loss, we obtain the modified prediction loss

 
\vspace{-0.175in}
\begin{small}
\begin{equation}
\begin{split}
L^\prime_{\text{p}}(E,F,&\,D,\pi,x)=-\int_{a}d\mu(a|x)\int_{x'} dP(x'|x,a) \cdot \exp(\frac{\tau}{\gamma}\cdot w(x,a,x'))\cdot\log (D\circ F_\pi\circ E)(x'|x),
\end{split}
\end{equation}
\end{small}
\vspace{-0.2in}
  
which corresponds to a weighted MLE loss in which the weights are the exponential TD errors. 

Using analogous arguments, we may write the value-guided version of the consistency loss $L_{\text{c}}(E,F,\pi,x)=\int_{x'\in\mathcal X}dP_{\mu}(x'|x)\cdot D_\text{KL}(E(\cdot|x')||(F_\pi\circ E)(\cdot|x))$ as
 
\vspace{-0.15in}
\begin{small}
\begin{equation}
\begin{split}
L'_{\text{c}}(E,F,&D,\pi,x) =\int_{a}d\mu(a|x)\int_{x'} dP(x'|x,a) \cdot\exp(\frac{\tau}{\gamma}\cdot w(x,a,x'))\cdot D_\text{KL}(E(\cdot|x')||(F_\pi\circ E)(\cdot|x)).
\end{split}
\end{equation}
\end{small}
\vspace{-0.15in}
 
To complete the value-guided CARL (V-CARL) implementation, we need to compute the risk-adjusted value functions $\tilde{U}_\mu$ and $\tilde{W}_\mu$ to construct the weight $w(x,a,x')$. Here we provide a recipe for the case when $\tau$ is small (see Appendix~\ref{app:Value-guided CARL} for details), in which the risk-adjusted value functions can be approximated by their risk-neutral counterparts, i.e.,~$\tilde{U}_\mu(x)\!\approx\! U_\mu(x)$, and $\tilde{W}_\mu(x,a)\!\approx\! W_\mu(x,a)\!:=\!r(x,a)+\int_{x'}\!dP(x'|x,a)U_\mu(x')$. Following Lemma~\ref{lem:policy_eval} in Appendix~\ref{app:policy_eval}, we can approximate $U_{\mu}(x)$ with $(V_\pi\circ E)(x)$ and $W_\mu(x,a)$ with $(Q_\pi\circ E)(x,a)$. Together with the compatibility of the rewards, i.e., $r(x,a)\approx (\bar{r}\circ E)(x,a)$, 
the weight $w(x,a,x')$ can be approximated by $\begin{small}\widehat{w}(x,a,x'):=\int_{z,z'}dE(z|x)\cdot dE(z'|x')\cdot(\bar{r}(z,a)-Q_\pi(z,a)+\gamma V_\pi(z'))\end{small}$, which is simply the TD error in the latent space. 

\vspace{-0.1in}
\section{Experimental Results}
\label{sec:experiments}
\vspace{-0.1in}


In this section, we experiment with the following continuous control domains (see Appendix \ref{app:Experimental Details} for more descriptions): (i) Planar System, (ii) Inverted Pendulum, (iii) Cartpole, (iv) 3-link manipulator, and compare the performance of our CARL algorithms with two LCE baselines: PCC \citep{Levine2020Prediction} and SOLAR \citep{zhang19m}. These tasks have underlying start and goal states that are ``not'' observable to the algorithms, instead the algorithms only have access to the start and goal image observations. To evaluate the performance of the control algorithms, similar to \cite{Levine2020Prediction}, we report the $\%$-time spent in the goal. The initial policy that is used for data generation is uniformly random. To measure performance reproducibility for each experiment, we 1) train $25$ models and 2)  perform $10$ control tasks for each model. For SOLAR, due to its high computation cost we only train and evaluate $10$ different models. Besides the average results, we also report the results from the best LCE models, averaged over the $10$ control tasks.

\vspace{-0.05in}
\begin{paragraph}{General Results}
Table~\ref{tab:main_results} shows the mean and standard deviations of the metrics on different control tasks. To compare the data efficiency of different methods, we also report the number of samples required to train the latent space and controller in each method. Our main experimental observation is four-fold. First, by replacing the iLQR control algorithm used by PCC with model-based SAC, offline CARL achieves significantly better performance in all of the tasks. This can be attributed to the advantage that SAC is more robust and effective operating in non-(locally)-linear environments. More details in the comparisons between PCC and offline CARL can be found in Appendix \ref{app:Additional Experiments}, in which we explicitly compare their control performance and latent representation maps. Second, in all the tasks online CARL is more data-efficient than the offline counterparts, i.e., we can achieve similar or better performance with fewer environment samples. In particular, online CARL is notably superior in the planar, cartpole, and swingup tasks, in which similar performance can be achieved with $2$, $2.5$, and $4$ times less samples, respectively (see Figure \ref{fig:uniform_improvement}). From Figure \ref{fig:latent_space_improvement}, interestingly the latent representations of online CARL also tend to improve progressively with the value of the policy.  
Third, in the simpler tasks (Planar, Swingup, Cartpole), value-guided CARL (V-CARL) manages to achieve even better performance. This corroborates with our hypothesis that extra improvement can be delivered by CARL when its LCE model is more accurate in regions of the $\mathcal Z$ space with higher temporal difference --- regions with higher anticipated future return. Unfortunately, in three-pole, the performance of V-CARL is worse than its online counterpart. This is likely due to instability in representation learning when the sample variance is amplified by the exponential-TD weight. Finally, SOLAR requires much more samples to learn a reasonable latent space for control, and with limited data it fails to converge to a good policy. Note that we are able to obtain better results in the planar task when the goal location is fixed (see Appendix \ref{sec:Additional SOLAR Results}). Yet even with the fine-tuned latent space from~\cite{zhang19m}, its performance is incomparable with that of CARL algorithms.
\end{paragraph}




\begin{table}[h]
\begin{small}
\centering
\begin{tabular}{|l | l | l | l | l|}
\hline
\textbf{Environment} & \textbf{Algorithm} & \textbf{Number of
Samples} & \textbf{Avg $\%$-Goal} & \textbf{Best $\%$-Goal} \\
\hline
Planar & PCC & 5000 & $38.85 \pm 2.45$ & $62.5\pm 10.42$\\
\hline
Planar & Offline CARL & 5000 & $63.43\pm2.78$ & $\textbf{79.51} \pm \textbf{0.38}$\\
\hline
Planar & Online CARL & 3072 & $68.03 \pm 1.69$ & $79.02 \pm 0.38$\\
\hline
Planar & Value-Guided CARL & 3200 & $\textbf{71.05} \pm \textbf{1.46}$ & $\textbf{79.51}\pm \textbf{0.38}$\\
\hline
Planar & SOLAR & 5000 (VAE) + 16000 (Control) & $5.82\pm 2.50$ & $9.13 \pm 3.54$\\

\hline\hline
Swingup & PCC & 5000 & $86.60 \pm  1.00$ & $97.40\pm 0.61$\\
\hline
Swingup & Offline CARL & 5000 & $88.43\pm 2.02$ & $\textbf{98.50} \pm \textbf{0.0}$\\
\hline
Swingup & Online CARL & 1408 & $95.04\pm 0.96$ & $\textbf{98.50} \pm \textbf{0.0}$ \\
\hline
Swingup & Value-Guided CARL & 1408 & $\textbf{96.50} \pm \textbf{0.25}$ & $\textbf{98.50}\pm \textbf{0.0}$\\
\hline 
Swingup & SOLAR & 5200 (VAE) + 40000 (Control) & $16.1 \pm 0.69$& $22.45\pm 1.96$ \\
\hline\hline
Cartpole & PCC & 10000 & $83.64\pm 0.63$ & $\textbf{100.0} \pm \textbf{0.0}$\\
\hline
Cartpole & Offline CARL & 10000 & $91.11 \pm 1.50$ & $\textbf{100.0} \pm \textbf{0.0}$ \\
\hline
Cartpole & Online CARL & 5120 & $95.34 \pm 1.17$ & $\textbf{100.0} \pm \textbf{0.0}$\\
\hline
Cartpole & Value-Guided CARL & 5120 & $\textbf{95.79} \pm \textbf{1.06}$ & $\textbf{100.0} \pm \textbf{0.0}$\\
\hline
Cartpole & SOLAR & 5000 (VAE) + 40000 (Control) & $10.61 \pm 2.58$ & $12.33\pm 2.96$\\
\hline\hline
Three-pole & PCC & 4096 & $4.41 \pm 0.75$ & $36.20 \pm 7.06$ \\
\hline
Three-pole & Offline CARL & 4096 & $\textbf{63.20}\pm \textbf{1.77}$ & $88.55\pm 0.0$\\
\hline
Three-pole & Online CARL & 2944 & $62.17\pm2.28$ & $\textbf{90.05} \pm \textbf{0.0}$ \\
\hline
Three-pole & Value-Guided CARL & 2816 & $55.06\pm2.42$ & $89.05\pm 0.0$\\
\hline
Three-pole & SOLAR & 2000 (VAE) + 20000 (Control) & $0\pm 0$ & $0\pm 0$\\
\hline
\end{tabular}
\caption{ Mean $\pm$ standard error results ($\%$-goal) and samples used for different LCE algorithms. }
\label{tab:main_results}
\end{small}
\vspace{-0.25in}
\end{table}

\begin{figure}
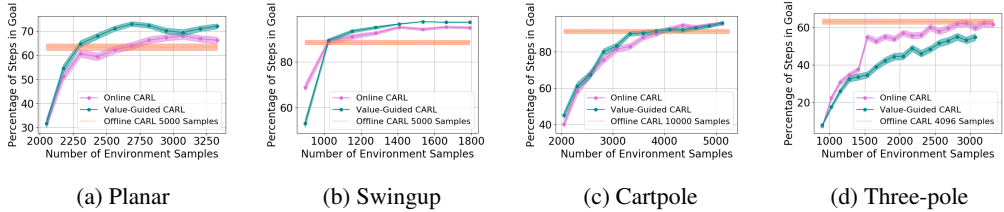

    \centering
    \begin{subfigure}[b]{0.24\textwidth}
        \includegraphics[width=1.25in]{Figures/planar.png}
        \caption{Planar}
    \end{subfigure}
    \begin{subfigure}[b]{0.24\textwidth}
        \includegraphics[width=1.25in]{Figures/swingup.png}
        \caption{Swingup}
    \end{subfigure}
    \begin{subfigure}[b]{0.24\textwidth}
        \includegraphics[width=1.25in]{Figures/cartpole.png}
        \caption{Cartpole}
    \end{subfigure}
    \begin{subfigure}[b]{0.24\textwidth}
        \includegraphics[width=1.25in]{Figures/three_pole.png}
        \caption{Three-pole}
    \end{subfigure}
    \vspace{-0.05in}
    \caption{Training curves of offline CARL, online CARL, and value-guided CARL. The shaded region represents mean $\pm$ standard error. Online variants of CARL is more data-efficient than offline CARL.}
    \label{fig:uniform_improvement}
    \vspace{-0.2in}
\end{figure}

\begin{figure}
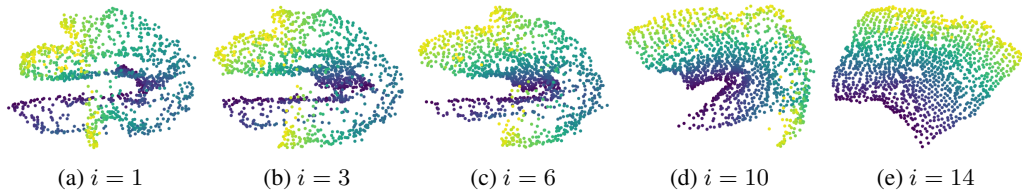

    \centering
    \begin{subfigure}[b]{0.19\textwidth}
        \includegraphics[width=1.05in]{Latent_Maps/Online/evolve_2_1.png}
        \caption{$i=1$}
    \end{subfigure}
    \begin{subfigure}[b]{0.19\textwidth}
        \includegraphics[width=1.05in]{Latent_Maps/Online/evolve_2_3.png}
        \caption{$i=3$}
    \end{subfigure}
    \begin{subfigure}[b]{0.19\textwidth}
        \includegraphics[width=1.05in]{Latent_Maps/Online/evolve_2_5.png}
        \caption{$i=6$}
    \end{subfigure}
    \begin{subfigure}[b]{0.19\textwidth}
        \includegraphics[width=1.05in]{Latent_Maps/Online/evolve_2_7.png}
        \caption{$i=10$}
    \end{subfigure}
    \begin{subfigure}[b]{0.19\textwidth}
        \includegraphics[width=1.05in]{Latent_Maps/Online/evolve_2_12.png}
        \caption{$i=14$}
    \end{subfigure}
    \vspace{-0.05in}
    \caption{Evolution of the latent representation of the Planar problem learned by online CARL. Here $i$ represents the number of LCE model-learning episodes (Algorithm \ref{alg:CARL_implementation} in Appendix \ref{app:CARL Algorithm})}
    \label{fig:latent_space_improvement}
    \vspace{-0.15in}
\end{figure}

\begin{figure}[th!]
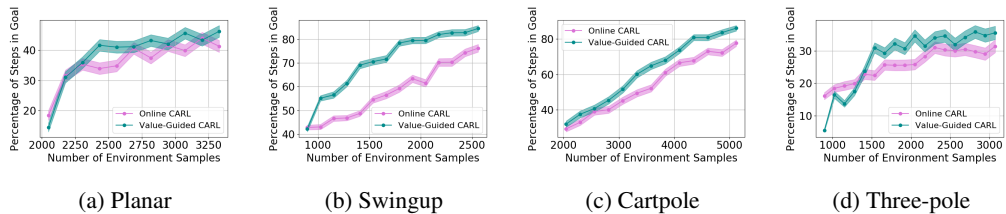

    \centering
    \begin{subfigure}[b]{0.24\textwidth}
        \includegraphics[width=1.25in]{Figures/planar_biased.png}
        \caption{Planar}
    \end{subfigure}
    \begin{subfigure}[b]{0.24\textwidth}
        \includegraphics[width=1.25in]{Figures/swingup_biased.png}
        \caption{Swingup}
    \end{subfigure}
    \begin{subfigure}[b]{0.24\textwidth}
        \includegraphics[width=1.25in]{Figures/cartpole_biased.png}
        \caption{Cartpole}
    \end{subfigure}
    \begin{subfigure}[b]{0.24\textwidth}
        \includegraphics[width=1.25in]{Figures/threepole_biased.png}
        \caption{Three-pole}
    \end{subfigure}
    \vspace{-0.05in}
    \caption{Training curves comparing Online CARL and Value-Guided CARL when the initial samples are from the biased regions as described in Appendix \ref{app:Experimental Details}. }
    \label{fig:VMBPO}
    \vspace{-0.15in}
\end{figure}

\begin{paragraph}{Results with Environment-biased Sampling} In the previous experiments, all the online LCE algorithms are warm-started with data collected by a uniformly random policy over the entire environment. With this uniform data collection, we do not observe a significant difference between online CARL and V-CARL. This is because with sufficient data the LCE dynamics is accurate enough on most parts of the state-action space for control. To further illustrate the advantage of V-CARL over online CARL, we modify the experimental setting by gathering initial samples only from a specific region of the environment (see Appendix \ref{app:Experimental Details} for details). Figure~\ref{fig:VMBPO} shows the learning curves of online CARL and V-CARL in this case. Clearly with biased initial data both algorithms experience a certain level of performance degradation. Yet, V-CARL clearly outperforms online CARL. This again verifies our conjecture that value-aware LCE models are more robust to initial data distribution and more superior in policy optimization.
\end{paragraph}

\vspace{-0.1in}
\section{Conclusions}
\label{sec:conclu}
\vspace{-0.1in}

In this paper, 
we argued for incorporating control in the representation learning process and for the interaction between control and representation learning in learning controllable embedding (LCE) algorithms. We proposed a LCE model called {\em control-aware representation learning} (CARL) that learns representations suitable for policy iteration (PI) style control algorithms. We proposed three implementations of CARL that combine representation learning with model-based soft actor-critic (SAC), as the controller, in offline and online fashions. In the third implementation, called {\em value-guided} CARL, we further included the control process in representation learning by optimizing a weighted version of the CARL loss function, in which the weights depend on the TD-error of the current policy. We evaluated the proposed algorithms on benchmark tasks and compared them with several LCE baselines. The experiments show the importance of SAC as the controller and of the online implementation. Future directions include {\bf 1)} investigating other PI-style algorithms in place of SAC, {\bf 2)} developing LCE models suitable for value iteration style algorithms, and {\bf 3)} identifying other forms of bias for learning an effective embedding and latent dynamics.



\newpage
\section*{Broader Impact}

Controlling non-linear dynamical systems from high-dimensional observation is challenging. Direct application of model-free and model-based reinforcement learning algorithms to this problem may not be efficient, due to requiring a large number of samples from the real system (model-free) and the challenges of learning a model in a high-dimensional space (model-based). A popular approach to address this problem is {\em learning controllable embedding} (LCE), i.e.,~learning a low-dimensional latent space and a latent space model, and performing the optimal control in this learned latent space. This work is a step towards end-to-end representation learning and control in this setting. We propose methods that interleave representation learning and control, which allows us to learn control-aware representations, i.e.,~representations that are suitable for the control problem at hand. 


\bibliographystyle{plainnat}
\bibliography{Control-Aware-PCC}

\begin{thebibliography}{27}
\providecommand{\natexlab}[1]{#1}
\providecommand{\url}[1]{\texttt{#1}}
\expandafter\ifx\csname urlstyle\endcsname\relax
  \providecommand{\doi}[1]{doi: #1}\else
  \providecommand{\doi}{doi: \begingroup \urlstyle{rm}\Url}\fi

\bibitem[Abachi et~al.(2020)Abachi, Ghavamzadeh, and Farahmand]{Abachi20PA}
R.~Abachi, M.~Ghavamzadeh, and A.~Farahmand.
\newblock Policy-aware model learning for policy gradient methods.
\newblock \emph{preprint arXiv:2003.00030}, 2020.

\bibitem[Banijamali et~al.(2018)Banijamali, Shu, Ghavamzadeh, Bui, and
  Ghodsi]{Banijamali2018RobustLC}
E.~Banijamali, R.~Shu, M.~Ghavamzadeh, H.~Bui, and A.~Ghodsi.
\newblock Robust locally-linear controllable embedding.
\newblock In \emph{Proceedings of the Twenty First International Conference on
  Artificial Intelligence and Statistics}, pages 1751--1759, 2018.

\bibitem[Borkar(2002)]{borkar2002q}
V.~Borkar.
\newblock {Q}-learning for risk-sensitive control.
\newblock \emph{Mathematics of operations research}, 27\penalty0 (2):\penalty0
  294--311, 2002.

\bibitem[{Breivik} and {Fossen}(2005)]{breivik2005principles}
M.~{Breivik} and T.~{Fossen}.
\newblock Principles of guidance-based path following in 2{D} and 3{D}.
\newblock In \emph{Proceedings of the 44th IEEE Conference on Decision and
  Control}, pages 627--634, 2005.

\bibitem[Chow et~al.(2020)Chow, Cui, Ryu, and Ghavamzadeh]{chow2020Variational}
Y.~Chow, B.~Cui, M.~Ryu, and M.~Ghavamzadeh.
\newblock Variational model-based policy optimization.
\newblock In \emph{arXiv}, 2020.

\bibitem[Deisenroth and Rasmussen(2011)]{PILCO2011Deisenroth}
M.~Deisenroth and C.~Rasmussen.
\newblock {PILCO}: A model-based and data-efficient approach to policy search.
\newblock In \emph{Proceedings of the 28th International Conference on Machine
  Learning}, page 465–472, 2011.

\bibitem[Ebert et~al.(2018)Ebert, Finn, Dasari, Xie, Lee, and
  Levine]{ebert2018visual}
F.~Ebert, C.~Finn, S.~Dasari, A.~Xie, A.~Lee, and S.~Levine.
\newblock Visual {F}oresight: Model-based deep reinforcement learning for
  vision-based robotic control.
\newblock \emph{preprint arXiv:1812.00568}, 2018.

\bibitem[Farahmand(2018)]{Farahmand18IV}
A.~Farahmand.
\newblock Iterative value-aware model learning.
\newblock In \emph{Advances in Neural Information Processing Systems 31}, pages
  9072--9083, 2018.

\bibitem[{Furuta} et~al.(1991){Furuta}, {Yamakita}, and
  {Kobayashi}]{furuta1991swing}
K.~{Furuta}, M.~{Yamakita}, and S.~{Kobayashi}.
\newblock Swing up control of inverted pendulum.
\newblock In \emph{Proceedings of International Conference on Industrial
  Electronics, Control and Instrumentation}, pages 2193--2198, 1991.

\bibitem[{Geva} and {Sitte}(1993)]{geva1993cartpole}
S.~{Geva} and J.~{Sitte}.
\newblock A cartpole experiment benchmark for trainable controllers.
\newblock \emph{IEEE Control Systems Magazine}, 13\penalty0 (5):\penalty0
  40--51, 1993.

\bibitem[Haarnoja et~al.(2018)Haarnoja, Zhou, Abbeel, and
  Levine]{pmlr-v80-haarnoja18b}
T.~Haarnoja, A.~Zhou, P.~Abbeel, and S.~Levine.
\newblock Soft actor-critic: Off-policy maximum entropy deep reinforcement
  learning with a stochastic actor.
\newblock In \emph{Proceedings of the 35th International Conference on Machine
  Learning}, pages 1861--1870, 2018.

\bibitem[Hafner et~al.(2020)Hafner, Lillicrap, Ba, and
  Norouzi]{Hafner2020Dream}
D.~Hafner, T.~Lillicrap, J.~Ba, and M.~Norouzi.
\newblock Dream to control: Learning behaviors by latent imagination.
\newblock In \emph{International Conference on Learning Representations}, 2020.

\bibitem[Janner et~al.(2019)Janner, Fu, Zhang, and Levine]{MBPO2019Janner}
M.~Janner, J.~Fu, M.~Zhang, and S.~Levine.
\newblock When to trust your model: Model-based policy optimization.
\newblock In \emph{Advances in Neural Information Processing Systems 32}, pages
  12519--12530. 2019.

\bibitem[Kingma and Ba(2014)]{kingma2014adam}
D.~Kingma and J.~Ba.
\newblock Adam: A method for stochastic optimization, 2014.

\bibitem[Kingma and Welling(2013)]{kingma2013auto}
D.~Kingma and M.~Welling.
\newblock Auto-encoding variational bayes, 2013.

\bibitem[Lai et~al.(2015)Lai, Zhang, Wu, and She]{lai2015singularity}
X.~Lai, A.~Zhang, M.~Wu, and J.~She.
\newblock Singularity-avoiding swing-up control for underactuated three-link
  gymnast robot using virtual coupling between control torques.
\newblock \emph{International Journal of Robust and Nonlinear Control},
  25\penalty0 (2):\penalty0 207--221, 2015.

\bibitem[Levine et~al.(2020)Levine, Chow, Shu, Li, Ghavamzadeh, and
  Bui]{Levine2020Prediction}
N.~Levine, Y.~Chow, R.~Shu, A.~Li, M.~Ghavamzadeh, and H.~Bui.
\newblock Prediction, consistency, curvature: Representation learning for
  locally-linear control.
\newblock In \emph{Proceedings of the 8th International Conference on Learning
  Representations}, 2020.

\bibitem[Li and Todorov(2004)]{li2004iterative}
W.~Li and E.~Todorov.
\newblock Iterative linear quadratic regulator design for nonlinear biological
  movement systems.
\newblock In \emph{ICINCO (1)}, pages 222--229, 2004.

\bibitem[Mnih et~al.(2013)Mnih, Kavukcuoglu, Silver, Graves, Antonoglou,
  Wierstra, and Riedmiller]{mnih2013playing}
V.~Mnih, K.~Kavukcuoglu, D.~Silver, A.~Graves, I.~Antonoglou, D.~Wierstra, and
  M.~Riedmiller.
\newblock Playing {A}tari with deep reinforcement learning.
\newblock \emph{preprint arXiv:1312.5602}, 2013.

\bibitem[Ruszczy{\'n}ski and Shapiro(2006)]{ruszczynski2006optimization}
A.~Ruszczy{\'n}ski and A.~Shapiro.
\newblock Optimization of convex risk functions.
\newblock \emph{Mathematics of operations research}, 31\penalty0 (3):\penalty0
  433--452, 2006.

\bibitem[Schulman et~al.(2015{\natexlab{a}})Schulman, Levine, Abbeel, Jordan,
  and Moritz]{schulman15}
J.~Schulman, S.~Levine, P.~Abbeel, M.~Jordan, and P.~Moritz.
\newblock Trust region policy optimization.
\newblock In \emph{Proceedings of the 32nd International Conference on Machine
  Learning}, pages 1889--1897, 2015{\natexlab{a}}.

\bibitem[Schulman et~al.(2015{\natexlab{b}})Schulman, Levine, Abbeel, Jordan,
  and Moritz]{schulman2015trust}
J.~Schulman, S.~Levine, P.~Abbeel, M.~Jordan, and P.~Moritz.
\newblock Trust region policy optimization.
\newblock In \emph{International conference on machine learning}, pages
  1889--1897, 2015{\natexlab{b}}.

\bibitem[Schulman et~al.(2017)Schulman, Wolski, Dhariwal, Radford, and
  Klimov]{schulman2017proximal}
J.~Schulman, F.~Wolski, P.~Dhariwal, A.~Radford, and O.~Klimov.
\newblock Proximal policy optimization algorithms.
\newblock \emph{preprint arXiv:1707.06347}, 2017.

\bibitem[Shu et~al.(2020)Shu, Nguyen, Chow, Pham, Than, Ghavamzadeh, Ermon, and
  Bui]{shu2020predictive}
R.~Shu, T.~Nguyen, Y.~Chow, T.~Pham, K.~Than, M.~Ghavamzadeh, S.~Ermon, and
  H.~Bui.
\newblock Predictive coding for locally-linear control.
\newblock \emph{preprint arXiv:2003.01086}, 2020.

\bibitem[{Spong}(1995)]{spong1995swing}
M.~{Spong}.
\newblock The swing up control problem for the acrobot.
\newblock \emph{IEEE Control Systems Magazine}, 15\penalty0 (1):\penalty0
  49--55, 1995.

\bibitem[Watter et~al.(2015)Watter, Springenberg, Boedecker, and
  Riedmiller]{E2C}
M.~Watter, J.~Springenberg, J.~Boedecker, and M.~Riedmiller.
\newblock Embed to control: A locally linear latent dynamics model for control
  from raw images.
\newblock In \emph{Advances in Neural Information Processing Systems 28}, pages
  2746--2754. 2015.

\bibitem[Zhang et~al.(2019)Zhang, Vikram, Smith, Abbeel, Johnson, and
  Levine]{zhang19m}
M.~Zhang, S.~Vikram, L.~Smith, P.~Abbeel, M.~Johnson, and S.~Levine.
\newblock {SOLAR}: Deep structured representations for model-based
  reinforcement learning.
\newblock In \emph{Proceedings of the 36th International Conference on Machine
  Learning}, pages 7444--7453, 2019.

\end{thebibliography}


\newpage
\appendix


\section{Approximate Latent Policy Evaluation}
\label{app:policy_eval}
To start with, for any value function $U:\mathcal X\rightarrow\reals$, at any observation $x\in\mathcal X$ and any arbitrary policy $\mu$ the $\mu$-induced observation-space Bellman operator can be written as:
\begin{small}
 \[
T_\mu[U](x):=\int_{a\in\mathcal A} d\mu(a|x)\int_{x'\in\mathcal X}dP(x'|x,a)\cdot\big(r(x,a)+\gamma\cdot U(x')\big),
 \]
 \end{small}
On the other hand, utilizing the LCE model parameterization and the policy parameterization $\mu(a|x)=\pi\circ E(a|x)$, where $\pi\circ E$ corresponds to sampling a latent state $z$ from the LCE encoder $E$ and applying a latent space policy $\pi$, we also define an approximate Bellman operator for any observation $x\in\mathcal X$ one can re-write this function as
 \begin{small}
  \[
T^\prime_\mu[U](x):=\int_{z\in\mathcal Z} dE(z|x)\cdot \int_{a\in\mathcal A} d\pi(a|z)\cdot\int_{z'\in\mathcal Z}d\dyn(z'|z,a)\cdot\int_{x'\in\mathcal X}d\dec(x'|z')\cdot (r(x,a)+\gamma U(x')).
 \]
 \end{small}
 On the other hand for any value function $V:\mathcal Z\rightarrow\mathbb R$, consider the following latent-space Bellman operator:
  \begin{small}
   \[
\mathcal T_{\pi}[V](z):= \int_{a\in\mathcal A} d\pi(a|z)\cdot \int_{z'\in\mathcal Z}dF(z'|z,a)\cdot\left(\bar{r}(z,a)+\gamma\cdot V(z')\right),
 \]
  \end{small}
 where $\bar{r}(z,a)$ is a latent reward function that approximates the corresponding observation reward induced by the policy $\pi\circ E$, i.e., $|\int_{a\in\mathcal A}\int_{z\in\mathcal Z}d\pi(a|z)d\enc(z|x)(\bar{r}(z,a)-r(x,a))|\approx 0$, and $V:\mathcal Z\rightarrow\mathbb R$ is the latent value function. 
 
Using the results from \cite{Farahmand18IV}, first we bound the difference of the observation Bellman backup ${T}_\mu[U]$, for $\mu=\pi\circ\enc$, and the latent Bellman backup $T^\prime_\mu[U]$ w.r.t. an arbitrary value function $V$ using the following inequality:
\begin{small}
\begin{equation}\label{eq:prediction}
\begin{split}
\big|T_\mu[U](x) &- T^\prime_\mu[U](x) \big|\leq \gamma \|U\|_\infty\cdot D_{\text{TV}}\bigg(P_{\pi\circ\enc}(\cdot|x)||(\dec\circ\dyn\circ\pi\circ\enc)(\cdot|x)\bigg)\\
&\leq \frac{\gamma \|U\|_\infty}{\sqrt{2}}\cdot \sqrt{D_{\text{KL}}\bigg(P_{\pi\circ\enc}(\cdot|x)||(\dec\circ\dyn\circ\pi\circ\enc)(\cdot|x)\bigg)},\,\,\forall x\in\mathcal X,
\end{split}
\end{equation}
\end{small}
in which $\|U\|_\infty=\max_{x\in\mathcal X}|U(x)|$, and the second inequality is by Pinsker inequality. In a $\gamma$-discounting MDP, whose immediate reward is bounded by magnitude $R_{\max}$, this quantity is bounded by $R_{\max}/(1-\gamma)$. This implies that the difference of these Bellman operators can be bounded by a prediction loss.
Alternatively one can derive the above TV-divergence without the dependency of the policy by considering the worst-case actions as follows:
\begin{small}
\begin{equation}\label{eq:prediction_worst_action}
\begin{split}
D_{\text{TV}}\bigg(P_{\pi\circ\enc}(\cdot|x)||(\dec\circ\dyn\circ\pi\circ\enc)&(\cdot|x)\bigg) \leq \max_{a\in\mathcal A} \,D_{\text{TV}}\bigg(P(\cdot|x,a)||(\dec\circ\dyn\circ\enc)(\cdot|x,a)\bigg)\\
\leq & \frac{1}{\sqrt{2}}\max_{a\in\mathcal A} \,\sqrt{D_{\text{KL}}\bigg(P(\cdot|x,a)||(\dec\circ\dyn\circ\enc)(\cdot|x,a)\bigg)}.
\end{split}
\end{equation}
\end{small}
This upper bound corresponds to the prediction loss in PCC.

Second, for any arbitrary observation value function $U$, we have the following result that to connects the observation Bellman residual $\left|T^\prime_\mu[U](x)-U(x)\right|$ and the latent Bellman residual $\int_{z\in\mathcal Z}d\enc(z|x)\left|\mathcal T_{\pi}[V](z)-V(z)\right|$ when the latent value function is $V(z)=\int_{\widetilde{x}\in\mathcal X}d\dec(\widetilde{x}|z) U(\widetilde{x})$. 
\begin{lemma}\label{lem:1}
For any observation $x\in\mathcal X$, encoder-transition-decoder tuple $(\enc,\dyn,\dec)$, latent-space policy $\pi$, such that the policy is parameterized as $\mu=\pi\circ\enc$, and value function $V$, such that the latent function is defined as $V(z)=\int_{\widetilde{x}\in\mathcal X}d\dec(\widetilde{x}|z) U(\widetilde{x})$, the following statement holds:
\begin{small}
\begin{align}
&\left|T^\prime_\mu[U](x)\!-\!U(x)\right|\geq \left|\int_{z\in\mathcal Z}d\enc(z|x)(\mathcal T_{\pi}[V](z)-V(z))\right|\label{eq:lem1}\\
&\!-\!\left|\int_{z\in\mathcal Z}d\enc(z|x)\int_{a\in\mathcal A}d\pi(a|z)(r(x,a)-\bar{r}(z,a))\right|-\frac{R_{\max}}{1-\gamma}\sqrt{\frac{-1}{2}\int_{z\in\mathcal Z}d\enc(z|x)\log \dec(x|z)},\,\,\,\forall x\in\mathcal X\nonumber.
\end{align}
\end{small}
\end{lemma}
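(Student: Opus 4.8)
The plan is to reduce \eqref{eq:lem1} to an \emph{exact} identity between the two Bellman backups, followed by a single application of the triangle inequality, with the leftover reconstruction term handled by the encoding--decoding estimate already established in the proof of Theorem~\ref{thm:PI-Observation-Space}. Concretely, first I would unfold the definitions of $T'_\mu[U](x)$ and $\mathcal T_\pi[V](z)$ and substitute $V(z')=\int_{\widetilde x\in\mathcal X}d\dec(\widetilde x|z')U(\widetilde x)$. Since $\int_{z'}d\dyn(z'|z,a)=1$ and $\int_{x'}d\dec(x'|z')=1$, the immediate reward $r(x,a)$ passes through the inner integrals unchanged, while the continuation term collapses to $\gamma\int_{z'}d\dyn(z'|z,a)V(z')$; matching the result against $\int_z d\enc(z|x)\mathcal T_\pi[V](z)$ term by term yields
\[
\int_{z}d\enc(z|x)\,\mathcal T_\pi[V](z)=T'_\mu[U](x)-\int_{z}d\enc(z|x)\int_{a}d\pi(a|z)\big(r(x,a)-\bar r(z,a)\big).
\]

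Subtracting $\int_z d\enc(z|x)V(z)$ from both sides, using the identity $\int_z d\enc(z|x)V(z)=\int_{x'}d(\dec\circ\enc)(x'|x)U(x')$, and then adding and subtracting $U(x)$, I would arrive at
\begin{align*}
\int_z d\enc(z|x)\big(\mathcal T_\pi[V](z)-V(z)\big)
&=\big(T'_\mu[U](x)-U(x)\big)+\Big(U(x)-\int_{x'}d(\dec\circ\enc)(x'|x)U(x')\Big)\\
&\quad-\int_z d\enc(z|x)\int_a d\pi(a|z)\big(r(x,a)-\bar r(z,a)\big).
\end{align*}
Taking absolute values, applying the triangle inequality on the right-hand side, and rearranging to isolate $\big|T'_\mu[U](x)-U(x)\big|$ gives \eqref{eq:lem1} up to the term $\big|U(x)-\int_{x'}d(\dec\circ\enc)(x'|x)U(x')\big|$.

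For that last term I would write it as $\big|\int_{x'}\big(d\delta_x(x')-d(\dec\circ\enc)(x'|x)\big)U(x')\big|$, bound it by $\|U\|_\infty\le R_{\max}/(1-\gamma)$ times the total-variation distance between $\delta_x$ and $(\dec\circ\enc)(\cdot|x)$, and then invoke Pinsker's inequality together with Jensen's inequality on $-\log$, namely $-\log\int d\enc(z|x)\dec(x|z)\le-\int d\enc(z|x)\log\dec(x|z)$, to obtain $\big|U(x)-\int_{x'}d(\dec\circ\enc)(x'|x)U(x')\big|\le\frac{R_{\max}}{1-\gamma}\sqrt{-\tfrac{1}{2}\int_z d\enc(z|x)\log\dec(x|z)}$. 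This coincides with the bound on term $(\mathrm I)$ in Theorem~\ref{thm:PI-Observation-Space}, so I would simply cite that derivation.

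The Bellman algebra of the first step and the triangle inequality are routine; the only real obstacle I anticipate is the reconstruction bound above, where one must be careful about the measure-theoretic meaning of comparing the point mass $\delta_x$ with the decoder density and about the direction of Jensen's inequality. Since the identical estimate already appears in the proof of Theorem~\ref{thm:PI-Observation-Space}, there is nothing genuinely new to do there.
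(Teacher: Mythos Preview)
Your proposal is correct and follows essentially the same route as the paper: establish the exact identity $T'_\mu[U](x)-\int_z d\enc(z|x)\,\mathcal T_\pi[V](z)=\int_z d\enc(z|x)\int_a d\pi(a|z)\big(r(x,a)-\bar r(z,a)\big)$, apply the triangle inequality after inserting $\pm U(x)$, and bound the leftover $\big|U(x)-\int_{x'}d(\dec\circ\enc)(x'|x)U(x')\big|$ via total variation, Pinsker, and Jensen. The only caveat is organizational: in the paper this reconstruction bound is \emph{first} proved inside Lemma~\ref{lem:1} itself (the paper applies Pinsker per $z$ and then Jensen on $\sqrt{\cdot}$, whereas you apply Pinsker to the mixture and then Jensen on $-\log$; both work), and Theorem~\ref{thm:PI-Observation-Space} is downstream of this lemma, so citing Theorem~\ref{thm:PI-Observation-Space} for that step would be circular---you should prove it here, as you in fact sketched.
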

\begin{proof}
Using the definitions of Bellman operators, we have the following chain of inequalities:
\begin{small}
 \[
 \begin{split}
&\left|T^\prime_\mu[U](x)-\int_{z\in\mathcal Z}d\enc(z|x)\mathcal T_{\pi}[V](z)\right| \leq\left|\int_{z\in\mathcal Z}d\enc(z|x)\int_{a\in\mathcal A}d\pi(a|z)\cdot\right.\\
&\quad\left.\left(\int_{x'\in\mathcal X}\int_{z'\in\mathcal Z}d\dec(x'|z')\cdot d\dyn(z'|z,a)\cdot\gamma\cdot U(x')-\int_{z'\in\mathcal Z}d\dyn(z'|z,a)\cdot\gamma\cdot V(z')\right)\right|\\
&\quad+\left|\int_{z\in\mathcal Z}d\enc(z|x)\int_{a\in\mathcal A} d\pi(a|z)\int_{z'\in\mathcal Z} d\dyn(z'|z,a)\cdot\left(\int_{x'\in\mathcal X}d\dec(x'|z')r(x,a)-\bar{r}(z,a)\right)\right|\\
\leq &\left|\int_{z\in\mathcal Z}d\enc(z|x)\int_{a\in\mathcal A}d\pi(a|z)(r(x,a)-\bar{r}(z,a))\right|,
 \end{split}
 \]
 \end{small}
 where the inequalities are based on triangular inequality, and the first term in this inequality vanishes due to the definition $V(z)=\int_{\widetilde{x}\in\mathcal X}d\dec(\widetilde{x}|z) U(\widetilde{x})$.
Furthermore, using basic arithmetic manipulations and triangular inequality, at any $x\in\mathcal X$ we bound the Bellman residual $\left|T^\prime_\mu[U](x)-U(x)\right|$ as
\begin{small}
 \begin{align}
 &\bigg|T^\prime_\mu[U](x)-U(x)\bigg|\\
 =&\bigg|T^\prime_\mu[U](x)-\int_{z\in\mathcal Z}d\enc(z|x)(\mathcal T_{\pi}[V](z)-V(z))+\int_{z\in\mathcal Z}d\enc(z|x)(\mathcal T_{\pi}[V](z)-V(z))-U(x)\bigg|\nonumber\\
 \geq & \bigg|\int_{z\in\mathcal Z}d\enc(z|x)(\mathcal T_{\pi}[V](z)-V(z))\bigg|-\bigg|T^\prime_\mu[U](x)-\int_{z\in\mathcal Z}d\enc(z|x)\mathcal T_{\pi}[V](z)\bigg|-\left|U(x)-\int_{z\in\mathcal Z}d\enc(z|x)V(z)\right|\nonumber\\
 \geq&  \bigg|\int_{z\in\mathcal Z}d\enc(z|x)(\mathcal T_{\pi}[V](z)-V(z))\bigg| -\left|\int_{z\in\mathcal Z}d\enc(z|x)\int_{a\in\mathcal A}d\pi(a|z)(r(x,a)-\bar{r}(z,a))\right|\nonumber\\
 &-\left|\int_{\widetilde{x}\in\mathcal X}\left(\int_{z\in\mathcal Z}d\enc(z|x)d\dec(\widetilde{x}|z)-d\mathbf 1\{x=\widetilde{x}\}\right)U(\widetilde{x})\right|.\nonumber
 \end{align}
 \end{small}
 The last term in the above inequality can be further upper-bounded as follows:
 \begin{small}
 \[
\left|\int_{\widetilde{x}\in\mathcal X}\left(\int_{z\in\mathcal Z}d\enc(z|x)d\dec(\widetilde{x}|z)-d\mathbf 1\{x=\widetilde{x}\}\right)U(\widetilde{x})\right|\leq \int_{\widetilde{x}\in\mathcal X} \left|\int_{z\in\mathcal Z}d\enc(z|x)\dec(\widetilde{x}|z)-\mathbf 1\{x=\widetilde{x}\}\right|\cdot\|U\|_\infty.
 \]
 \end{small}
 Furthermore this upper bound can be further bounded by
 \begin{small}
\[
\begin{split}
&\int_{\widetilde{x}\in\mathcal X}\int_{z\in\mathcal Z}d\enc(z|x)\left|\dec(\widetilde{x}|z)-\mathbf 1\{x=\widetilde{x}\}\right|\leq\int_{z\in\mathcal Z}d\enc(z|x)\sqrt{\frac{1}{2}D_{\text{KL}}(\mathbf 1\{\cdot=x\}||\dec(\cdot|z))}\\
=&\int_{z\in\mathcal Z}d\enc(z|x)\sqrt{\frac{-1}{2}\log \dec(x|z)}\leq \sqrt{\frac{-1}{2}\int_{z\in\mathcal Z}d\enc(z|x)\log \dec(x|z)},
\end{split}
\]
\end{small}
where the first inequality follows from Pinsker's inequality and the second inequality follows from the concavity of $\sqrt{(\cdot)}$. Combining this result with the above inequality completes the proof.
 \end{proof}
 
 This right side of inequality (\ref{eq:lem1}) contains several terms. The first corresponds to the latent Bellman residual error, the second corresponds to the latent reward estimation error w.r.t. policy $\pi$, and the third term is a reconstruction loss in the encoder-decoder path, which is commonly found in training auto-encoders (and is also a regularization term in PCC). Utilizing the inequality in \eqref{eq:prediction} and this lemma, one can further show that for any $V:\mathcal X\rightarrow\mathbb R$ and at any $x\in\mathcal X$,
 \begin{small}
\begin{align}
&\left|T_\mu[U](x)\!-\!U(x)\right|=\left|T_\mu[U](x)\!-\!T^\prime_\mu[U](x)+T^\prime_\mu[U](x)\!-\!U(x)\right|\geq \left|T^\prime_\mu[U](x)\!-\!U(x)\right|-\left|T_\mu[U](x)\!-\!T^\prime_\mu[U](x)\right|\nonumber\\
\geq& \left|\int_{z\in\mathcal Z}d\enc(z|x)(\mathcal T_{\pi}[V](z)-V(z))\right|-\frac{\gamma R_{\max}}{\sqrt{2}(1-\gamma)}\cdot \sqrt{D_{\text{KL}}\bigg(P_{\pi\circ\enc}(\cdot|x)||(\dec\circ\dyn\circ\pi\circ\enc)(\cdot|x)\bigg)}\nonumber\\
&\!-\!\left|\int_{z\in\mathcal Z}d\enc(z|x)\int_{a\in\mathcal A}d\pi(a|z)(r(x,a)-\bar{r}(z,a))\right|-\frac{R_{\max}}{1-\gamma}\sqrt{\frac{-1}{2}\int_{z\in\mathcal Z}d\enc(z|x)\log \dec(x|z)}\label{eq:lem_1_plus}.
\end{align}
\end{small}

Fourth, for any arbitrary latent value function $V$, we have the following result that to connects the observation Bellman residual $\left|T_\mu[U](x)-U(x)\right|$ and the latent Bellman residual $\int_{z\in\mathcal Z}d\enc(z|x)\left|\mathcal T_{\pi}[V](z)-V(z)\right|$ when the observation value function is $U(x)=\int_{\widetilde{z}\in\mathcal Z}d\enc(\widetilde{z}|x) V(\widetilde{z})$. 
 \begin{lemma}\label{lem:2}
For any observation $x\in\mathcal X$, encoder-transition-decoder tuple $(\enc,\dyn,\dec)$, latent-space policy $\pi$, such that the policy is parameterized as $\mu=\pi\circ\enc$, and latent value function $V$, such that the function is defined as $ U(x)=\int_{\widetilde{z}\in\mathcal Z}d\enc(\widetilde{z}|x) V(\widetilde{z})$, the following statement holds:
\begin{small}
\begin{align}
&\left|\int_{z\in\mathcal Z}d\enc(z|x)\cdot\big(\mathcal T_{\pi}[V](z)-V(z)\big)\right|\geq \big|T_\mu[U](x) - U(x) \big| -\left|\int_{z\in\mathcal Z}d\enc(z|x)\int_{a\in\mathcal A}d\pi(a|z)(r(x,a)-\bar{r}(z,a))\right|\nonumber\\
&\,\,- \frac{\gamma R_{\max}}{\sqrt{2}(1-\gamma)}\sqrt{  \int_{x'\in\mathcal X}dP_{\pi\circ\enc}(x'|x)\cdot D_\text{KL}\left( \enc(\cdot|x')||(\dyn_\pi\circ\enc)(\cdot|x)\right)}.
\end{align}
\end{small}
 \end{lemma}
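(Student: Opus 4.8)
The plan is to mirror the argument proving Lemma~\ref{lem:1}, the simplification being that here the observation value function is \emph{defined} as the encoded latent value function, $U(x)=\int_{\widetilde z\in\mathcal Z}d\enc(\widetilde z|x)\,V(\widetilde z)$, so the encoder--decoder reconstruction term that shows up in Lemma~\ref{lem:1} is absent. I would start from the triangle inequality
\begin{small}
\[
\big|T_\mu[U](x)-U(x)\big|\;\le\;\Big|T_\mu[U](x)-\int_{z\in\mathcal Z}d\enc(z|x)\,\mathcal T_{\pi}[V](z)\Big|\;+\;\Big|\int_{z\in\mathcal Z}d\enc(z|x)\,\mathcal T_{\pi}[V](z)-U(x)\Big|,
\]
\end{small}
and observe that since $U(x)=\int_z d\enc(z|x)V(z)$, the second term on the right equals $\big|\int_z d\enc(z|x)\,(\mathcal T_{\pi}[V](z)-V(z))\big|$. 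Rearranging already yields the claimed bound, \emph{provided} I can control the cross term $\big|T_\mu[U](x)-\int_z d\enc(z|x)\,\mathcal T_{\pi}[V](z)\big|$ by the reward-mismatch term plus the consistency term appearing in the statement.

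To bound the cross term I would expand both Bellman backups. Using $\mu=\pi\circ\enc$ gives $T_\mu[U](x)=\int_z d\enc(z|x)\int_a d\pi(a|z)\int_{x'}dP(x'|x,a)\big(r(x,a)+\gamma U(x')\big)$, and substituting $U(x')=\int_{\widetilde z}d\enc(\widetilde z|x')V(\widetilde z)$ shows its value part is $\gamma$ times $\int_{\widetilde z}(\enc\circ P_{\pi\circ\enc})(\widetilde z|x)\,V(\widetilde z)$ — here the intermediate latent $z$ only serves to draw the action $a$, so integrating it out leaves the next-latent law obtained by encoding a genuine environment transition. On the other hand $\int_z d\enc(z|x)\mathcal T_{\pi}[V](z)$ has reward part $\int_z d\enc(z|x)\int_a d\pi(a|z)\bar r(z,a)$ and value part $\gamma\int_{z'}(\dyn_\pi\circ\enc)(z'|x)\,V(z')$. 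Subtracting, the reward parts combine into exactly $\big|\int_z d\enc(z|x)\int_a d\pi(a|z)(r(x,a)-\bar r(z,a))\big|$, and the value parts give $\gamma\big|\int_{z'}\big((\enc\circ P_{\pi\circ\enc})(z'|x)-(\dyn_\pi\circ\enc)(z'|x)\big)V(z')\big|$, which I bound by $\gamma\|V\|_\infty\,D_{\text{TV}}\big((\enc\circ P_{\pi\circ\enc})(\cdot|x)\,\|\,(\dyn_\pi\circ\enc)(\cdot|x)\big)$, and then by Pinsker's inequality (exactly as in~\eqref{eq:prediction}) by $\tfrac{\gamma\|V\|_\infty}{\sqrt 2}\sqrt{D_{\text{KL}}\big((\enc\circ P_{\pi\circ\enc})(\cdot|x)\,\|\,(\dyn_\pi\circ\enc)(\cdot|x)\big)}$.

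Two standard estimates then close the proof. First, $\|V\|_\infty\le R_{\max}/(1-\gamma)$ because $V$ is a $\gamma$-discounted latent value function with per-step reward bounded by $R_{\max}$. Second, by convexity of $p\mapsto D_{\text{KL}}(p\,\|\,q)$ in its first argument and Jensen's inequality applied to the mixture $(\enc\circ P_{\pi\circ\enc})(\cdot|x)=\int_{x'}dP_{\pi\circ\enc}(x'|x)\,\enc(\cdot|x')$ (whose second argument $(\dyn_\pi\circ\enc)(\cdot|x)$ does not depend on $x'$),
\begin{small}
\[
D_{\text{KL}}\big((\enc\circ P_{\pi\circ\enc})(\cdot|x)\,\|\,(\dyn_\pi\circ\enc)(\cdot|x)\big)\;\le\;\int_{x'\in\mathcal X}dP_{\pi\circ\enc}(x'|x)\,D_{\text{KL}}\big(\enc(\cdot|x')\,\|\,(\dyn_\pi\circ\enc)(\cdot|x)\big),
\]
\end{small}
which is precisely the consistency-loss quantity under the square root in the statement. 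Plugging these bounds into the triangle inequality and rearranging yields the lemma. I do not anticipate a genuine obstacle; the only points that need care are (i) the bookkeeping of which variable the action is sampled from when identifying the two next-latent-state distributions $(\enc\circ P_{\pi\circ\enc})(\cdot|x)$ and $(\dyn_\pi\circ\enc)(\cdot|x)$, and (ii) applying convexity of KL in the correct direction so that the $x'$-average moves \emph{outside} the divergence — the same manipulation that turns a term of type (IV) in Theorem~\ref{thm:PI-Observation-Space} into the consistency loss $L_{\text{c}}$.
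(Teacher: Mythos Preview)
Your proposal is correct and follows essentially the same route as the paper's proof: the paper also reduces the claim via the triangle inequality to bounding the cross term $\big|\int_z d\enc(z|x)\,\mathcal T_\pi[V](z)-T_\mu[U](x)\big|$, splits it into the reward-mismatch piece and the value piece, and controls the latter by $\|V\|_\infty$ times a total-variation distance, followed by Pinsker and (joint) convexity of KL to obtain the consistency term. The only cosmetic difference is that the paper writes the triangle inequality directly as a lower bound on $\big|\int_z d\enc(z|x)(\mathcal T_\pi[V](z)-V(z))\big|$ rather than starting from an upper bound on $|T_\mu[U](x)-U(x)|$ and rearranging.
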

\begin{proof}
 Using the definition $ U(x)=\int_{\widetilde{z}\in\mathcal Z}d\enc(\widetilde{z}|x) V(\widetilde{z})$ and the triangular inequality, 
 \begin{small}
 \[
 \left|\int_{z\in\mathcal Z}d\enc(z|x)\cdot\big(\mathcal T_{\pi}[V](z)-V(z)\big)\right|\geq \big|T_\mu[U](x) - U(x) \big| -  \left|\int_{z\in\mathcal Z}d\enc(z|x)\cdot\mathcal T_{\pi}[V](z)-T_\mu[U](x) \right| 
 \]
 \end{small}
 the proof of this lemma is completed by bounding the difference of the first term via the following inequality:
 \begin{small}
 \[
 \begin{split}
&\left|\int_{z\in\mathcal Z}d\enc(z|x)\cdot\mathcal T_{\pi}[V](z)-T_\mu[U](x) \right|\leq \left|\int_{z\in\mathcal Z}d\enc(z|x)\int_{a\in\mathcal A}d\pi(a|z)(r(x,a)-\bar{r}(z,a))\right|+\\
&\quad\gamma \left|\int_{a\in\mathcal A} d\mu(a|x)\int_{x'\in\mathcal X}dP(x'|x,a)\cdot \int_{z'\in\mathcal Z}d\enc(z'|x')V(z')-\int_{z\in\mathcal Z} dE(z|x)\cdot \int_{a\in\mathcal A} d\pi(a|z)\cdot\int_{z'\in\mathcal Z}d\dyn(z'|z,a)\cdot V(z')\right|
 \end{split}
 \]
 \end{small}
 in which the second term can be further upper bounded as follows:
 \begin{small}
 \[
 \begin{split}
 &\left|\int_{a\in\mathcal A} d\mu(a|x)\int_{x'\in\mathcal X}dP(x'|x,a)\cdot \int_{z'\in\mathcal Z}d\enc(z'|x')V(z')-\int_{z\in\mathcal Z} dE(z|x)\cdot \int_{a\in\mathcal A} d\pi(a|z)\cdot\int_{z'\in\mathcal Z}d\dyn(z'|z,a)\cdot V(z')\right|\\
\leq&\int_{z'\in\mathcal Z}\left|\int_{a\in\mathcal A} d\mu(a|x)\int_{x'\in\mathcal X}dP(x'|x,a)\cdot \enc(z'|x')-\int_{z\in\mathcal Z} dE(z|x)\cdot \int_{a\in\mathcal A} d\pi(a|z)\cdot\dyn(z'|z,a)\right| \cdot \|V\|_\infty\\
=& D_\text{TV}\left(\int_{a\in\mathcal A} d\mu(a|x)\int_{x'\in\mathcal X}dP(x'|x,a)\cdot \enc(\cdot|x')||\int_{z\in\mathcal Z} dE(z|x)\cdot \int_{a\in\mathcal A} d\pi(a|z)\cdot\dyn(\cdot|z,a)\right)\cdot \|V\|_\infty\\
\leq& \frac{\|V\|_\infty}{\sqrt{2}}\sqrt{  \int_{x'\in\mathcal X}dP_{\pi\circ\enc}(x'|x)\cdot D_\text{KL}\left( \enc(\cdot|x')||(\dyn_\pi\circ\enc)(\cdot|x)\right)}.
 \end{split}
 \]
 \end{small}
 where the first equality follows definition of TV-divergence and the second inequality follows directly from Pinsker inequality and joint convexity of $D_{\text{KL}}(x||y)$. The proof of this lemma can be completed by combining the above results and using $\|V\|_\infty\leq R_{\max}/(1-\gamma)$.
\end{proof}

Notice that since both the observation Bellman operator $T_\mu$ and the latent Bellman operator $\mathcal T_{\pi}$ are contraction mappings, there exists a unique solution $U_\mu:\mathcal{X}\rightarrow\mathbb{R}$ to the observation fixed point equation $T_\mu[U](x)=U(x)$, $\forall x\in\mathcal X$, a unique solution $V_{\pi}:\mathcal Z\rightarrow\mathbb R$ to the latent fixed point equation $\mathcal T_{\pi}[V](z)=V(z)$, $\forall z\in\mathcal Z$.
Together with the result in (\ref{eq:lem_1_plus}), we can immediately show that theorem \ref{thm:policy_eval}, which connects the optimal observation bellman residual error and the optimal latent bellman residual error.

\begin{theorem}\label{thm:policy_eval}
Let $ V_\pi\circ \enc(x)=\int_{\widetilde{z}\in\mathcal Z}d\enc(\widetilde{z}|x) V_{\pi}(\widetilde{z})$, be the observation value function in which $V_{\pi}$ is the solution of the soft latent fixed-point equation $\mathcal T_{\pi}[V](z)=V(z)$ w.r.t. latent policy $\pi$, encoder-transition-decoder tuple $(\enc,\dyn,\dec)$, and parameterized observation-based policy $\mu=\pi\circ\enc$. Then the following statement holds at any $x\in\mathcal X$:
\begin{small}
\[
\begin{split}
&\big|T_\mu[V_\pi\circ \enc](x) - V_\pi\circ \enc(x) \big|\leq \frac{R_{\max}}{1-\gamma}\sqrt{\frac{-1}{2}\int_{z\in\mathcal Z}d\enc(z|x)\log \dec(x|z)}\\
&+2\left|\int_{z\in\mathcal Z}d\enc(z|x)\int_{a\in\mathcal A}d\pi(a|z)(r(x,a)-\bar{r}(z,a))\right|+ \frac{\gamma R_{\max}}{\sqrt{2}(1-\gamma)}\left\{ \sqrt{D_{\text{KL}}\bigg(P_{\pi\circ\enc}(\cdot|x)||(\dec\circ\dyn\circ\pi\circ\enc)(\cdot|x)\bigg)}\right.\\
&+\left.\sqrt{  \int_{x'\in\mathcal X}dP_{\pi\circ\enc}(x'|x)\cdot D_\text{KL}\left( \enc(\cdot|x')||(\dyn_\pi\circ\enc)(\cdot|x)\right)}\right\}.
\end{split}
\]
\end{small}
\end{theorem}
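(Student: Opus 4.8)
The plan is to obtain the stated inequality as a corollary of Lemma~\ref{lem:2}, by instantiating that lemma at the latent fixed point. Concretely, I would apply the framework developed above with $U=V_\pi\circ\enc$ and $V=V_\pi$, together with the policy parameterization $\mu=\pi\circ\enc$. The hypothesis of Lemma~\ref{lem:2}, namely $U(x)=\int_{\widetilde z\in\mathcal Z}d\enc(\widetilde z|x)\,V(\widetilde z)$, holds here by the very definition of $V_\pi\circ\enc$, so the lemma is applicable without further work.

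The key observation is that, because $V_\pi$ solves the latent fixed-point equation $\mathcal T_{\pi}[V](z)=V(z)$, we have $\mathcal T_{\pi}[V_\pi](z)=V_\pi(z)$ for every $z\in\mathcal Z$. Consequently $\int_{z\in\mathcal Z}d\enc(z|x)\big(\mathcal T_{\pi}[V_\pi](z)-V_\pi(z)\big)=0$ for all $x\in\mathcal X$, and also $\int_{z\in\mathcal Z}d\enc(z|x)\,\mathcal T_{\pi}[V_\pi](z)=V_\pi\circ\enc(x)$. Feeding these identities into Lemma~\ref{lem:2}, its left-hand side vanishes, and rearranging gives, for every $x\in\mathcal X$,
\[
\big|T_\mu[V_\pi\circ\enc](x)-V_\pi\circ\enc(x)\big|\;\le\; L_{\text{r}}(\enc,\bar{r},\pi,x)\;+\;\tfrac{\gamma R_{\max}}{\sqrt{2}(1-\gamma)}\sqrt{L_{\text{c}}(\enc,\dyn,\pi,x)},
\]
where $L_{\text{r}}$ is the reward-mismatch term (II) of~\eqref{eq:error-term-PI-observation} and $L_{\text{c}}$ is the consistency loss $\int_{x'}dP_{\pi\circ\enc}(x'|x)\,D_{\text{KL}}\big(\enc(\cdot|x')\,\|\,(\dyn_\pi\circ\enc)(\cdot|x)\big)$. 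Equivalently, as the text hints, one can re-derive the same estimate by splitting $T_\mu=(T_\mu-T^\prime_\mu)+T^\prime_\mu$: the first difference is controlled by the prediction inequality~\eqref{eq:prediction}, and the residual $|T^\prime_\mu[V_\pi\circ\enc](x)-V_\pi\circ\enc(x)|$ is bounded exactly as inside the proof of Lemma~\ref{lem:2}, using the fixed-point identity for $V_\pi$ and $\|V_\pi\|_\infty\le R_{\max}/(1-\gamma)$.

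To conclude, I would note that the right-hand side claimed in the theorem is exactly the bound above augmented by the three nonnegative quantities $\tfrac{R_{\max}}{1-\gamma}\sqrt{-\tfrac12\int_{z}d\enc(z|x)\log\dec(x|z)}$, one additional copy of the reward-mismatch term (so that $L_{\text{r}}$ becomes $2L_{\text{r}}$), and $\tfrac{\gamma R_{\max}}{\sqrt{2}(1-\gamma)}\sqrt{D_{\text{KL}}\big(P_{\pi\circ\enc}(\cdot|x)\,\|\,(\dec\circ\dyn\circ\pi\circ\enc)(\cdot|x)\big)}$; since each of these is $\ge 0$, the stated pointwise inequality follows immediately. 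I do not expect a genuine obstacle: the content is essentially a specialization of Lemma~\ref{lem:2}. The only point needing care is bookkeeping about which value function appears where --- Lemma~\ref{lem:1} and~\eqref{eq:lem_1_plus} are phrased for the value function that is the \emph{decoding} of $U$, which for $U=V_\pi\circ\enc$ is $V_\pi\circ\enc\circ\dec$ rather than $V_\pi$, so those cannot be invoked verbatim; routing through $T^\prime_\mu$ and the consistency estimate of Lemma~\ref{lem:2} is the clean path, and the prediction and reconstruction losses are then only used to recast the bound into the $\Delta$-shaped form appearing in Theorem~\ref{thm:PI-Observation-Space}.
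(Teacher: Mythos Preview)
Your proposal is correct, and it takes a cleaner route than the paper's own proof. The paper argues via \emph{both} lemmas: it first applies~\eqref{eq:lem_1_plus} (which is Lemma~\ref{lem:1} combined with the prediction bound~\eqref{eq:prediction}) at the \emph{observation} fixed point $U=U_\mu$, producing an upper bound on the latent residual $\big|\int d\enc(z|x)(\mathcal T_\pi[\hat V_\mu](z)-\hat V_\mu(z))\big|$ with $\hat V_\mu=\int d\dec(\tilde x|\cdot)U_\mu(\tilde x)$ in terms of the prediction, reward, and reconstruction losses; it then applies Lemma~\ref{lem:2} at the \emph{latent} fixed point $V=V_\pi$ to lower-bound that same residual by $|T_\mu[V_\pi\circ\enc]-V_\pi\circ\enc|$ minus reward and consistency; chaining the two gives the theorem with all four loss terms present and the reward term doubled. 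By contrast, you observe that a single application of Lemma~\ref{lem:2} at $V=V_\pi$ already yields $|T_\mu[V_\pi\circ\enc]-V_\pi\circ\enc|\le L_{\text r}+\tfrac{\gamma R_{\max}}{\sqrt 2(1-\gamma)}\sqrt{L_{\text c}}$, and the remaining terms in the stated bound are simply nonnegative padding. Your route is shorter and makes transparent that the reconstruction and prediction terms (and the extra factor of~$2$ on the reward mismatch) are not actually needed for this particular inequality; the paper's detour through $U_\mu$ and Lemma~\ref{lem:1} serves only to manufacture those additional terms so that the bound matches the $\Delta$-shaped error used downstream in Theorem~\ref{thm:policy_improve}. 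You also correctly flag the bookkeeping issue that Lemma~\ref{lem:1} cannot be invoked verbatim at $U=V_\pi\circ\enc$ because the induced latent function would be $V_\pi\circ\enc\circ\dec$, not $V_\pi$; the paper sidesteps this by instantiating Lemma~\ref{lem:1} at $U_\mu$ rather than at $V_\pi\circ\enc$.
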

\begin{proof}
Using Lemma \ref{lem:1} with $V=U_\mu$, the fixed-point solution $T_\mu[U_\mu](x)=U_\mu(x)$, and denoting by $\hat{V}_\mu(z)=\int_{\widetilde{x}\in\mathcal X}d\dec(\widetilde{x}|z) U_\mu(\widetilde{x})$, for any $x\in\mathcal X$, we have
\begin{small}
\begin{align}
&\left|T_\mu[U_\mu](x)\!-\!U_\mu(x)\right|\nonumber\\
\geq& \left|\int_{z\in\mathcal Z}d\enc(z|x)(\mathcal T_{\pi}[\hat{V}_\mu](z)-\hat{V}_\mu(z))\right|-\frac{\gamma R_{\max}}{\sqrt{2}(1-\gamma)}\cdot \sqrt{D_{\text{KL}}\bigg(P_{\pi\circ\enc}(\cdot|x)||(\dec\circ\dyn\circ\pi\circ\enc)(\cdot|x)\bigg)}\nonumber\\
&\!-\!\left|\int_{z\in\mathcal Z}d\enc(z|x)\int_{a\in\mathcal A}d\pi(a|z)(r(x,a)\!-\!\bar{r}(z,a))\right|-\frac{R_{\max}}{1-\gamma}\sqrt{\frac{-1}{2}\int_{z\in\mathcal Z}d\enc(z|x)\log \dec(x|z)}.\nonumber
\end{align}
\end{small}

Using the fact that $V_\pi$ is the fixed-point solution, i.e., $\mathcal T_\pi[V_\pi](z)=V_\pi(z)$, we have
\begin{small}
\[
\left|\int_{z\in\mathcal Z}d\enc(z|x)(\mathcal T_{\pi}[\hat{V}_\mu](z)-\hat{V}_\mu(z))\right|\geq \left|\int_{z\in\mathcal Z}d\enc(z|x)(\mathcal T_{\pi}[V_\pi](z)-V_\pi(z))\right|=0.
\]
\end{small}

On the other hand, using Lemma \ref{lem:2} with $V=U_\mu$, and with the definition of $V_\pi\circ \enc(x)=\int_{\widetilde{z}\in\mathcal Z}d\enc(\widetilde{z}|x) V_{\pi}(\widetilde{z})$ we can further show that
\begin{small}
\begin{align}
&\left|\int_{z\in\mathcal Z}d\enc(z|x)(\mathcal T_{\pi}[\hat{V}_\mu](z)-\hat{V}_\mu(z))\right|\geq \big|T_\mu[V_\pi\circ \enc](x) - V_\pi\circ \enc(x) \big| -\left|\int_{z\in\mathcal Z}d\enc(z|x)\int_{a\in\mathcal A}d\pi(a|z)(r(x,a)-\bar{r}(z,a))\right|\nonumber\\
&\,\,- \frac{\gamma R_{\max}}{\sqrt{2}(1-\gamma)}\sqrt{  \int_{x'\in\mathcal X}dP_{\pi\circ\enc}(x'|x)\cdot D_\text{KL}\left( \enc(\cdot|x')||(\dyn_\pi\circ\enc)(\cdot|x)\right)}.\nonumber
\end{align}
\end{small}

Together these inequalities imply that 
\begin{small}
\begin{align}
&\left|T_\mu[U_\mu](x)\!-\!U_\mu(x)\right|\geq \big|T_\mu[V_\pi\circ \enc](x) - V_\pi\circ \enc(x) \big| -2\left|\int_{z\in\mathcal Z}d\enc(z|x)\int_{a\in\mathcal A}d\pi(a|z)(r(x,a)-\bar{r}(z,a))\right|\nonumber\\
&\,\,- \frac{\gamma R_{\max}}{\sqrt{2}(1-\gamma)}\sqrt{  \int_{x'\in\mathcal X}dP_{\pi\circ\enc}(x'|x)\cdot D_\text{KL}\left( \enc(\cdot|x')||(\dyn_\pi\circ\enc)(\cdot|x)\right)}.\nonumber\\
&\,\,-\frac{\gamma R_{\max}}{\sqrt{2}(1-\gamma)}\cdot \sqrt{D_{\text{KL}}\bigg(P_{\pi\circ\enc}(\cdot|x)||(\dec\circ\dyn\circ\pi\circ\enc)(\cdot|x)\bigg)}-\frac{R_{\max}}{1-\gamma}\sqrt{\frac{-1}{2}\int_{z\in\mathcal Z}d\enc(z|x)\log \dec(x|z)}.\nonumber
\end{align}
\end{small}
By recalling $U_\mu$ is the fixed-point solution, i.e., $T_\mu[U_\mu](x)=U_\mu(x)$, $\forall x\in\mathcal X$, the proof is completed by setting the left side to be zero.
\end{proof}
This theory shows that the observation Bellmen residual error w.r.t. value function $V_\pi\circ \enc$, where $V_{\pi}$ is the optimal latent value function (w.r.t. soft Bellman fixed-point equation), depends on (i) the prediction error, (ii) the consistency term, (iii) latent reward estimation error, and (iv) the encoder-decoder reconstruction error.
Using analogous derivations as in (\ref{eq:prediction_worst_action}) for the prediction term, we can further derive a observation Bellman residual error upper bound w.r.t. value function $V_\pi\circ \enc$ that does not depend on the policy. 
\begin{corollary}\label{coro:policy_eval}
Let $ V_\pi\circ \enc(x)=\int_{\widetilde{z}\in\mathcal Z}d\enc(\widetilde{z}|x) V_{\pi}(\widetilde{z})$, be the observation value function in which $V_{\pi}$ is the solution of the soft latent fixed-point equation $\mathcal T_{\pi}[V](z)=V(z)$ w.r.t. latent policy $\pi$, encoder-transition-decoder tuple $(\enc,\dyn,\dec)$, and parameterized observation-based policy $\mu=\pi\circ\enc$. Then the following statement holds at any $x\in\mathcal X$:
\begin{small}
\[
\begin{split}
&\big|T_\mu[V_\pi\circ \enc](x) - V_\pi\circ \enc(x) \big|\leq \frac{R_{\max}}{1-\gamma}\sqrt{\frac{-1}{2}\int_{z\in\mathcal Z}d\enc(z|x)\log \dec(x|z)}\\
&\qquad+2\cdot\max_{a\in\mathcal A}\left\{\left|r(x,a)-\int_{z}d\enc(z|x)\bar{r}(z,a)\right|+ \frac{\gamma R_{\max}}{\sqrt{2}(1-\gamma)}\left\{ \sqrt{ D_{\text{KL}}\bigg(P(\cdot|x,a)||(\dec\circ\dyn\circ\enc)(\cdot|x,a)\bigg)}\right.\right.\\
&\qquad\qquad\qquad\qquad+\left.\left.\sqrt{ \int_{x'\in\mathcal X}dP(x'|x,a)\cdot D_\text{KL}\left(\enc(\cdot|x')||\dyn\circ E(\cdot|x,a)\right)}\right\}\right\}.
\end{split}
\]
\end{small}
\end{corollary}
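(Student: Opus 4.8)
The plan is to re-run the proof of Theorem~\ref{thm:policy_eval} essentially unchanged, but to replace every quantity that is averaged over actions $a\sim(\pi\circ\enc)(\cdot|x)$ inside a divergence or an absolute value by its worst-case-over-$a\in\mathcal A$ counterpart, exactly as one passes from \eqref{eq:prediction} to \eqref{eq:prediction_worst_action}. Theorem~\ref{thm:policy_eval}, via Lemmas~\ref{lem:1}--\ref{lem:2} and the fixed-point identity $T_\mu[U_\mu]=U_\mu$, already bounds $|T_\mu[V_\pi\circ\enc](x)-V_\pi\circ\enc(x)|$ by four ingredients: the encoder--decoder reconstruction loss, the latent reward-mismatch, the prediction divergence, and the consistency divergence. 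The reconstruction loss $\frac{R_{\max}}{1-\gamma}\sqrt{-\tfrac12\int_z d\enc(z|x)\log\dec(x|z)}$ is already policy-free, so only the other three need re-bounding.

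First I would handle the prediction term. In \eqref{eq:prediction} it enters as $|T_\mu[U](x)-T^\prime_\mu[U](x)|\le\gamma\|U\|_\infty\, D_{\text{TV}}\big(P_{\pi\circ\enc}(\cdot|x)\,\|\,(\dec\circ\dyn\circ\pi\circ\enc)(\cdot|x)\big)$; both arguments of this TV are mixtures over $a$ with the common weights $(\pi\circ\enc)(a|x)$, so joint convexity of $D_{\text{TV}}$ pulls the mixture out, bounds it by $\max_{a}D_{\text{TV}}\big(P(\cdot|x,a)\,\|\,(\dec\circ\dyn\circ\enc)(\cdot|x,a)\big)$, and Pinsker finishes --- this is precisely \eqref{eq:prediction_worst_action}. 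Next the consistency term from Lemma~\ref{lem:2}, which is the TV between $\int_a d\mu(a|x)\int_{x'}dP(x'|x,a)\enc(\cdot|x')$ and $\int_z d\enc(z|x)\int_a d\pi(a|z)\dyn(\cdot|z,a)$; these are again two action-mixtures with weights $(\pi\circ\enc)(a|x)$, so the same convexity-then-$\max_a$-then-Pinsker/Jensen chain yields $\frac{\gamma R_{\max}}{\sqrt2(1-\gamma)}\max_a\sqrt{\int_{x'}dP(x'|x,a)\,D_{\text{KL}}\big(\enc(\cdot|x')\,\|\,\dyn\circ\enc(\cdot|x,a)\big)}$. Finally the reward-mismatch $\big|\int_z d\enc(z|x)\int_a d\pi(a|z)(r(x,a)-\bar r(z,a))\big|$: bringing the absolute value inside by Jensen and bounding the $\pi(\cdot|z)$-average by $\max_a$ produces $\max_{a}\big|r(x,a)-\int_z d\enc(z|x)\bar r(z,a)\big|$, with the leading factor $2$ inherited from Theorem~\ref{thm:policy_eval}. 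Substituting these into the inequality of Theorem~\ref{thm:policy_eval} and using $T_\mu[U_\mu]=U_\mu$ to annihilate the left-hand Bellman residual of $U_\mu$ gives the claimed bound.

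The main obstacle is the consistency term (and, for a stochastic encoder, the reward term). Once the action-mixture is pushed outside the divergence, the second argument is the $\pi$-reweighted latent transition $\dyn_{\pi,\enc}(\cdot|x,a):=\frac{\int_z d\enc(z|x)\pi(a|z)\dyn(\cdot|z,a)}{(\pi\circ\enc)(a|x)}$, not $(\dyn\circ\enc)(\cdot|x,a)=\int_z d\enc(z|x)\dyn(\cdot|z,a)$; likewise one is left with $\int_z d\enc(z|x)\max_a|r(x,a)-\bar r(z,a)|$ rather than $\max_a|r(x,a)-\int_z d\enc(z|x)\bar r(z,a)|$. These pairs coincide exactly when the encoder is deterministic --- the regime of the paper's implementations --- in which case the derivation is clean; in the general stochastic case one either states the corollary for deterministic encoders or carries the $\pi$-reweighting through the bound. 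Apart from this point, every step is a routine convexity estimate mirroring \eqref{eq:prediction_worst_action}, so no idea beyond the proof of Theorem~\ref{thm:policy_eval} is required.
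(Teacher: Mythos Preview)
Your approach is exactly the paper's: the paper does not give a detailed proof of this corollary but simply remarks that one obtains it from Theorem~\ref{thm:policy_eval} by ``analogous derivations as in \eqref{eq:prediction_worst_action},'' i.e.\ by replacing each action-averaged divergence or reward-mismatch with its worst-case over $a\in\mathcal A$ via convexity and Pinsker. Your identification of the $\pi$-reweighting issue for stochastic encoders (in the consistency and reward terms) is a valid caveat that the paper glosses over; the paper's one-line justification implicitly relies on the same deterministic-encoder simplification you point out.
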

  
  \begin{lemma}\label{lem:policy_eval}
 The observation value function $U_{\pi\circ\enc}$ w.r.t. policy $\pi\circ\enc$ satisfies the following bound
 \begin{small}
 \[
 \left|V_{\pi}\circ\enc(x)- U_{\pi\circ\enc}(x)\right|\leq \frac{\gamma}{1-\gamma}\Delta(\enc,\dyn,\dec,\bar{r},\pi,x),\,\, \forall x\in\mathcal X.
 \]
 \end{small}
 where the error term is given by
 \begin{small}
\[
\begin{split}
\Delta(\,&\enc,\dyn,\dec,\bar{r},\pi,x)=\frac{R_{\max}}{1-\gamma}\sqrt{\frac{-1}{2}\int_{z}d\enc(z|x)\log \dec(x|z)} \\ 
&+ 2\big|\int_{z}d\enc(z|x)\int_a d\pi(a|z)(r(x,a)-\bar{r}(z,a))\big| + \frac{\gamma R_{\max}}{\sqrt{2}(1-\gamma)}\sqrt{D_{\text{KL}}\big(P_{\pi\circ E}(\cdot|x) \; || \; (D\circ F_\pi\circ E)(\cdot|x)\big)} \\
&+ \frac{\gamma R_{\max}}{\sqrt{2}(1-\gamma)}\sqrt{ \int_{x'\in\mathcal X}dP_{\pi\circ\enc}(x'|x)\cdot D_\text{KL}\left( \enc(\cdot|x')||(\dyn_\pi\circ\enc)(\cdot|x)\right)}.
\end{split}
\]
\end{small}
\end{lemma}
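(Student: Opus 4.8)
\textbf{Proof plan for Lemma~\ref{lem:policy_eval}.}
The plan is to read the statement off Theorem~\ref{thm:policy_eval} and then apply the elementary fact that, for a $\gamma$-contraction, the distance from an arbitrary function to the fixed point is controlled by its one-step Bellman residual. First I would set $\mu := \pi\circ\enc$ and $W := V_\pi\circ\enc$, and observe that the right-hand side of the bound in Theorem~\ref{thm:policy_eval} is \emph{exactly} $\Delta(\enc,\dyn,\dec,\bar r,\pi,x)$ as defined in the lemma (the same four pieces: the reconstruction term $\frac{R_{\max}}{1-\gamma}\sqrt{-\tfrac12\int d\enc\log\dec}$, the reward-mismatch term $2\,|\int d\enc\int d\pi\,(r-\bar r)|$, the prediction term, and the consistency term). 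Hence Theorem~\ref{thm:policy_eval} is precisely the statement that the observation-space Bellman residual of $W$ under $T_\mu$ is pointwise small:
\[
|T_\mu[W](x) - W(x)| \;\le\; \Delta(\enc,\dyn,\dec,\bar r,\pi,x), \qquad \forall\, x\in\mathcal X.
\]

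Next I would invoke the fact, recalled just before Theorem~\ref{thm:policy_eval}, that $T_\mu$ is a $\gamma$-contraction in $\|\cdot\|_\infty$ with unique fixed point $U_{\pi\circ\enc}=U_\mu$, i.e. $T_\mu[U_\mu]=U_\mu$. Writing $g := W - U_\mu$ and using that $T_\mu$ is affine with linear part $\gamma P_\mu$ (where $P_\mu$ is the $\mu$-induced transition operator),
\[
g \;=\; (W - T_\mu[W]) + (T_\mu[W] - T_\mu[U_\mu]) \;=\; (W - T_\mu[W]) + \gamma\,P_\mu\,g,
\]
so $g = \sum_{n\ge 0}\gamma^n P_\mu^n\,(W - T_\mu[W])$. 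Taking absolute values, using $|(P_\mu^n h)(x)| \le \mathbb{E}_{x_n\sim P_\mu^n(\cdot\,|\,x)}[\,|h(x_n)|\,]$, and plugging in the residual bound above gives
\[
|g(x)| \;\le\; \sum_{n\ge 0}\gamma^n\,\mathbb{E}_{x_n\sim P_\mu^n(\cdot\,|\,x)}\big[\Delta(\enc,\dyn,\dec,\bar r,\pi,x_n)\big],
\]
and summing the discounted series produces a prefactor of order $\tfrac{\gamma}{1-\gamma}$ (peeling off one $T_\mu$-step first, so that the leading discrepancy $T_\mu[W]-U_\mu=\gamma P_\mu g$ is already a $\gamma$-contracted transition gap, is what pulls the extra factor of $\gamma$ out front).

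The hard part will be converting the trajectory-averaged quantity $\sum_n\gamma^n\mathbb{E}_{x_n\sim P_\mu^n}[\Delta(\cdot,x_n)]$ into the clean pointwise bound $\tfrac{\gamma}{1-\gamma}\Delta(\cdot,x)$ asserted in the lemma: this is handled either by passing to the uniform bound $\sup_{x}\Delta(\cdot,x)$, for which the geometric sum is immediate, or — to retain the pointwise form — by absorbing the $n$-step averaging into a worst case over reachable observations. In either reading the standing bounds $\|U_\mu\|_\infty,\|V_\pi\|_\infty \le R_{\max}/(1-\gamma)$ used throughout Lemmas~\ref{lem:1} and~\ref{lem:2} keep every term finite. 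Everything else is bookkeeping; the structural content — that the gap between the surrogate value $V_\pi\circ\enc$ and the true observation-space value $U_{\pi\circ\enc}$ decomposes into reconstruction, reward-mismatch, prediction, and consistency errors — is already delivered by Theorem~\ref{thm:policy_eval}, which supplies in particular the prediction and reconstruction terms that a naive latent-path comparison (going directly through $\dyn$ rather than through $\dec$) would miss.
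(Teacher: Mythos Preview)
Your approach is essentially the paper's: invoke Theorem~\ref{thm:policy_eval} to bound the Bellman residual $|T_\mu[V_\pi\circ\enc]-V_\pi\circ\enc|$ by $\Delta$, then propagate through the $\gamma$-contraction $T_\mu$ to its fixed point $U_{\pi\circ\enc}$ (the paper telescopes $T_\mu^\ell-T_\mu^{\ell-1}$ where you write the resolvent $\sum_n\gamma^nP_\mu^n$, which is the same computation). You are right to flag the pointwise-versus-trajectory-averaged $\Delta$ as the delicate step; the paper's own proof glosses over exactly this, silently treating $\Delta(\cdot,x)$ as a constant when it passes through $T_\mu$, and its extraction of the $\gamma/(1-\gamma)$ rather than $1/(1-\gamma)$ prefactor is similarly loose --- so your hesitation there matches a genuine soft spot in the published argument rather than a defect in your plan.
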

 \begin{proof}
 To prove the right side of the approximate policy evaluation inequality, recall from Theorem \ref{thm:policy_eval} with $\mu=\pi\circ\enc$ and LCE-reward models $(\enc,\dyn,\dec,\bar{r})$ that 
 \begin{small}
 \[
 T_{\pi\circ\enc}[V_{\pi}\circ\enc](x)\leq V_{\pi}\circ\enc(x)+ \Delta(\enc,\dyn,\dec,\bar{r},\pi,x),\,\,\forall x\in\mathcal X.
 \]
 \end{small}
 Applying the Bellman operator $T_{\pi\circ\enc}$ on both sides we get
 \begin{small}
 \[
 T^2_{\pi\circ\enc}[V_{\pi}\circ\enc](x)\leq  T_{\pi\circ\enc}[V_{\pi}\circ\enc](x)+ \gamma\Delta(\enc,\dyn,\dec,\bar{r},\pi,x),\,\,\forall x\in\mathcal X.
 \]
 \end{small}
 Proceeding similarly it follows that
 \begin{small}
 \[
T^\ell_{\pi\circ\enc}[V_{\pi}\circ\enc](x)\leq  T^{\ell-1}_{\pi\circ\enc}[V_{\pi}\circ\enc](x)+ \gamma\Delta(\enc,\dyn,\dec,\bar{r},\pi,x),\,\,\forall x\in\mathcal X.
 \]
 \end{small}
 Therefore, for every $k>0$
 \begin{small}
 \[
 \begin{split}
 T^k_{\pi\circ\enc}[V_{\pi}\circ\enc](x)-V_{\pi}\circ\enc(x)&\leq  \sum_{\ell=1}^kT^\ell_{\pi\circ\enc}[V_{\pi}\circ\enc](x)-T^{\ell-1}_{\pi\circ\enc}[V_{\pi}\circ\enc](x)\\
 &\leq \frac{\gamma}{1-\gamma}\cdot\Delta(\enc,\dyn,\dec,\bar{r},\pi,x),\,\,\forall x\in\mathcal X.
 \end{split}
 \]
 \end{small}
 Taking $k\rightarrow\infty$ we then obtain $U_{\pi\circ\enc}(x)\leq V_{\pi}\circ\enc(x)+\frac{\gamma}{1-\gamma}\Delta(\enc,\dyn,\dec,\bar{r},\pi,x)$. On the other hand, the left side of the policy evaluation inequality follows analogous arguments. This completes the proof of the approximate policy evaluation. 
 \end{proof}
 
\section{Approximate Policy Iteration with CARL}
\label{sec:API-CARL-Proofs}

Consider the following latent policy iteration procedure that optimizes the policy in form of $\mu\circ\enc$. Starting at iteration $i=0$, given an initial observation policy $\mu^{(0)}$, an initial observation value function $U^{(0)}$, a LCE model $(\enc^{(0)},\dyn^{(0)},\dec^{(0)})$, and a latent reward model $\bar{r}^{(0)}$, do
\begin{enumerate}
\item Compute the distilled latent policy by $\pi^{(i)}\leftarrow\arg\min_{\mu}D_{\text{KL}}(\pi\circ\enc^{(i)}(\cdot|x)||\mu^{(i)}(\cdot|x))$
    \item Compute the $\pi^{(i)}$-induced latent value function $ V_{\pi^{(i)}}(z)=\lim_{n\rightarrow\infty}T^n_{\pi^{(i)}}[ V^{(i)}](z)$, $\forall z$ w.r.t. models $(\dyn^{(i)},\bar{r}^{(i)})$ and the state-action latent value function $ Q_{\pi^{(i)}}(z,a):= \bar{r}^{(i)}(z,a)+\gamma\int_{z'\in\mathcal X}dF^{(i)}(z'|z,a) V_{\pi^{(i)}}(z')$
    \item Compute the updated latent policy $\pi^{(i+1)}(\cdot|z)\in\arg\max_{p\in\Delta}\int_{a\in\mathcal A}dp(a)\cdot Q_{\pi^{(i)}}(z,a)$, and the updated observation policy $\mu^{(i+1)}(\cdot|x)=\pi^{(i+1)}\circ\enc^{(i)}(\cdot|x)$
     \item Update the LCE model $(\enc^{(i+1)},\dyn^{(i+1)},\dec^{(i+1)})$ and the latent reward model $\bar{r}^{(i+1)}$ by minimizing the loss $\Delta(\enc,\dyn,\dec,\bar{r},\pi^{(i+1)})$
\end{enumerate}
Repeat step 1-4 with the updated value function $U^{(i+1)}=U_{\pi^{(i)}}$.

Equipped with this technical property and the policy evaluation result in Theorem \ref{thm:policy_eval}, we can now provide a policy improvement result on the above proposed procedure.

\begin{theorem}\label{thm:policy_improve}
For any observation $x\in\mathcal X$, the latent policy iteration procedure has the approximate policy improvement property:
\begin{small}
\begin{equation}\label{eq:policy_improve_error}
U_{\mu^{(i+1)}}(x) \geq U_{\mu^{(i)}}(x)-\frac{\gamma}{1-\gamma}\sum_{\pi\in\{\pi^{(i)},\pi^{(i+1)}\}}\Delta(\enc^{(i)},\dyn^{(i)},\dec^{(i)},\bar{r}^{(i)},\pi,x)- \frac{R_{\max}}{1-\gamma}\cdot D_{\text{KL}}(\mu^{(i)}(\cdot|x)||\pi^{(i)}\circ\enc^{(i)}(\cdot|x)).
\end{equation}
\end{small}
where the first error term is given by
\begin{small}
\[
\begin{split}
\Delta(\,&\enc,\dyn,\dec,\bar{r},\pi,x)=\frac{R_{\max}}{1-\gamma}\sqrt{\frac{-1}{2}\int_{z}d\enc(z|x)\log \dec(x|z)} \\ 
&+ 2\big|\int_{z}d\enc(z|x)\int_a d\pi(a|z)(r(x,a)-\bar{r}(z,a))\big| + \frac{\gamma R_{\max}}{\sqrt{2}(1-\gamma)}\sqrt{D_{\text{KL}}\big(P_{\pi\circ E}(\cdot|x) \; || \; (D\circ F_\pi\circ E)(\cdot|x)\big)} \\
&+ \frac{\gamma R_{\max}}{\sqrt{2}(1-\gamma)}\sqrt{ \int_{x'\in\mathcal X}dP_{\pi\circ\enc}(x'|x)\cdot D_\text{KL}\left( \enc(\cdot|x')||(\dyn_\pi\circ\enc)(\cdot|x)\right)}.
\end{split}
\]
\end{small}
\end{theorem}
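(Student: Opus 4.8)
The plan is to sandwich $U_{\mu^{(i+1)}}$ and $U_{\mu^{(i)}}$ between the latent value functions $V_{\pi^{(i+1)}}\circ\enc^{(i)}$ and $V_{\pi^{(i)}}\circ\enc^{(i)}$ using the two-sided approximate policy-evaluation bound of Lemma~\ref{lem:policy_eval}, and to close the remaining gap between the two latent value functions by ordinary (exact) policy improvement in the latent MDP. Write $\nu^{(i)} := \pi^{(i)}\circ\enc^{(i)}$, and recall from the procedure that $\mu^{(i+1)} = \pi^{(i+1)}\circ\enc^{(i)}$ while $\mu^{(i)}$ need not equal $\nu^{(i)}$ (it was generated with an earlier encoder and then distilled in Step~1); all latent value functions below are computed with the iteration-$i$ model $(\dyn^{(i)},\bar r^{(i)})$. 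Concretely I would combine four inequalities: (a) $U_{\mu^{(i+1)}}(x) \ge V_{\pi^{(i+1)}}\circ\enc^{(i)}(x) - \frac{\gamma}{1-\gamma}\Delta(\enc^{(i)},\dyn^{(i)},\dec^{(i)},\bar r^{(i)},\pi^{(i+1)},x)$, from Lemma~\ref{lem:policy_eval} applied to $\pi=\pi^{(i+1)}$; (b) $V_{\pi^{(i+1)}}\circ\enc^{(i)}(x) \ge V_{\pi^{(i)}}\circ\enc^{(i)}(x)$, from latent policy improvement; (c) $V_{\pi^{(i)}}\circ\enc^{(i)}(x) \ge U_{\nu^{(i)}}(x) - \frac{\gamma}{1-\gamma}\Delta(\enc^{(i)},\dyn^{(i)},\dec^{(i)},\bar r^{(i)},\pi^{(i)},x)$, from Lemma~\ref{lem:policy_eval} applied to $\pi=\pi^{(i)}$; and (d) $U_{\nu^{(i)}}(x) \ge U_{\mu^{(i)}}(x) - \frac{R_{\max}}{1-\gamma}D_{\text{KL}}\big(\mu^{(i)}(\cdot|x)\,\|\,\nu^{(i)}(\cdot|x)\big)$, the distillation-mismatch bound. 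Adding (a)--(d) gives exactly (\ref{eq:policy_improve_error}), for every $x\in\mathcal X$.

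Step (b) is the familiar exact-PI argument transplanted to the latent space. Since $\pi^{(i+1)}(\cdot|z) \in \arg\max_{p}\int dp(a)\,Q_{\pi^{(i)}}(z,a)$ with $Q_{\pi^{(i)}}(z,a) = \bar r^{(i)}(z,a) + \gamma\int dF^{(i)}(z'|z,a)\,V_{\pi^{(i)}}(z')$, we have $\mathcal T_{\pi^{(i+1)}}[V_{\pi^{(i)}}](z) = \max_a Q_{\pi^{(i)}}(z,a) \ge \int d\pi^{(i)}(a|z)\,Q_{\pi^{(i)}}(z,a) = \mathcal T_{\pi^{(i)}}[V_{\pi^{(i)}}](z) = V_{\pi^{(i)}}(z)$ for all $z\in\mathcal Z$. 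Monotonicity and the $\gamma$-contraction of $\mathcal T_{\pi^{(i+1)}}$ then imply $\mathcal T_{\pi^{(i+1)}}^{n}[V_{\pi^{(i)}}] \ge V_{\pi^{(i)}}$ for all $n$ by induction, and letting $n\to\infty$ gives $V_{\pi^{(i+1)}} = \lim_n \mathcal T_{\pi^{(i+1)}}^{n}[V_{\pi^{(i)}}] \ge V_{\pi^{(i)}}$ pointwise on $\mathcal Z$; averaging both sides against $\enc^{(i)}(\cdot|x)$ preserves the inequality and yields (b).

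I expect the distillation bound (d) to be the main obstacle. The natural route is a performance-difference (telescoping) expansion: with $g(x') := \int_a\big(d\mu^{(i)}(a|x')-d\nu^{(i)}(a|x')\big)\big(r(x',a)+\gamma\int dP(x''|x',a)\,U_{\mu^{(i)}}(x'')\big)$ one has $U_{\mu^{(i)}}(x)-U_{\nu^{(i)}}(x) = \sum_{t\ge0}\gamma^{t}\,\mathbb E[g(x_t)\mid x_0=x,\nu^{(i)}]$, and $|g(x')| \le \|\mu^{(i)}(\cdot|x')-\nu^{(i)}(\cdot|x')\|_1\cdot\|Q^{\mu^{(i)}}\|_\infty$, which by Pinsker's inequality and $\|Q^{\mu^{(i)}}\|_\infty \le R_{\max}/(1-\gamma)$ becomes a multiple of $\sqrt{D_{\text{KL}}(\mu^{(i)}(\cdot|x')\,\|\,\nu^{(i)}(\cdot|x'))}$. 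The delicate point is that this generic bound produces a $\sqrt{D_{\text{KL}}}$, a supremum over states, and an extra factor $1/(1-\gamma)$, rather than the clean $\frac{R_{\max}}{1-\gamma}D_{\text{KL}}(\cdot\,\|\,\cdot)$ evaluated at $x$ that appears in the statement; reconciling the two requires exploiting that the distillation in Step~1 is (near-)exact, so that $\nu^{(i)}\approx\mu^{(i)}$ and the KL term is small, and being careful about the direction of the KL divergence. A secondary bookkeeping point is to make sure that $V_{\pi^{(i+1)}}$ throughout denotes the fixed point of $\mathcal T_{\pi^{(i+1)}}$ under the \emph{iteration-$i$} model, so that a single consistent tuple $(\enc^{(i)},\dyn^{(i)},\dec^{(i)},\bar r^{(i)})$ feeds both applications of Lemma~\ref{lem:policy_eval}.
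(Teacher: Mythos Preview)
Your four-step decomposition (a)--(d) is exactly the argument the paper gives: it applies Lemma~\ref{lem:policy_eval} with $\pi=\pi^{(i)}$ to upper-bound $U_{\pi^{(i)}\circ\enc^{(i)}}$ by $V_{\pi^{(i)}}\circ\enc^{(i)}+\frac{\gamma}{1-\gamma}\Delta$, runs the standard latent policy-improvement chain $\mathcal T_{\pi^{(i)}}[V_{\pi^{(i)}}]\le \mathcal T[V_{\pi^{(i)}}]=\mathcal T_{\pi^{(i+1)}}[V_{\pi^{(i)}}]\le V_{\pi^{(i+1)}}$ (your step (b), done via the same monotonicity/contraction induction you sketch), applies Lemma~\ref{lem:policy_eval} again with $\pi=\pi^{(i+1)}$, and finally appends the distillation bound.

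Your caution on (d) is well placed and, in fact, the paper does not work harder than you do here: it simply cites \cite{schulman2015trust} and writes the bound as $U_{\mu^{(i)}}(x)\le U_{\pi^{(i)}\circ\enc^{(i)}}(x)+\frac{R_{\max}}{\sqrt{2}(1-\gamma)}\sqrt{D_{\text{KL}}\big(\pi^{(i)}\circ\enc^{(i)}(\cdot|x)\,\|\,\mu^{(i)}(\cdot|x)\big)}$. Note that this differs from the theorem statement in exactly the ways you flagged (square root versus no square root, and the KL arguments reversed), and the pointwise-at-$x$ versus trajectory-supremum issue you raise is not addressed. So your plan matches the paper's proof, and the ``delicate point'' you identified is a genuine looseness in the paper rather than a gap in your proposal. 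Your remark that both invocations of Lemma~\ref{lem:policy_eval} must use the iteration-$i$ model is also correct and is how the paper proceeds.
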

 \begin{proof}
  Using the policy evaluation property, we have
  \begin{small}
 \[
U_{\pi^{(i)}\circ\enc^{(i)}}(x)\leq V_{\pi^{(i)}}\circ\enc^{(i)}(x)+\frac{\gamma}{1-\gamma}\Delta(\enc^{(i)},\dyn^{(i)},\dec^{(i)},\bar{r}^{(i)},\pi^{(i)},x), \,\,\,\forall x\in\mathcal X.
 \] 
 \end{small}
  Applying Bellman operator $T_{\pi^{(i)}\circ\enc^{(i)}}$ on both sides and noticing that $T_{\pi^{(i)}\circ\enc^{(i)}}[U](x)\leq T[U](x)$ uniformly at $x\in\mathcal X$ for any observation value function $U$, we can then show that for any $x\in\mathcal X$,
  \begin{small}
  \begin{equation}\label{eq:intermediate}
  \begin{split}
&U_{\pi^{(i)}\circ\enc^{(i)}}(x)\\
\leq& {V}_{\pi^{(i)}}\circ\enc^{(i)}(x)+\frac{\gamma}{1-\gamma}\Delta(\enc^{(i)},\dyn^{(i)},\dec^{(i)},\bar{r}^{(i)},\pi^{(i)},x)\\
=& \int_{z\in\mathcal Z}d\enc^{(i)}(z|x)\mathcal T_{\pi^{(i)}}[V_{\pi^{(i)}}](z)+\frac{\gamma}{1-\gamma}\cdot\Delta(\enc^{(i)},\dyn^{(i)},\dec^{(i)},\bar{r}^{(i)},\pi^{(i)},x)\\
\leq& \int_{z\in\mathcal Z}d\enc^{(i)}(z|x)\mathcal T[V_{\pi^{(i)}}](z)+\frac{\gamma}{1-\gamma}\cdot\Delta(\enc^{(i)},\dyn^{(i)},\dec^{(i)},\bar{r}^{(i)},\pi^{(i)},x)\\
=& \int_{z\in\mathcal Z}d\enc^{(i)}(z|x)\mathcal T_{\pi^{(i+1)}}[V_{\pi^{(i)}}](z)+\frac{\gamma}{1-\gamma}\cdot\Delta(\enc^{(i)},\dyn^{(i)},\dec^{(i)},\bar{r}^{(i)},\pi^{(i)},x)\\
\leq& \int_{z\in\mathcal Z}d\enc^{(i)}(z|x)\mathcal T_{\pi^{(i+1)}}[V_{\pi^{(i+1)}}](z)+\frac{\gamma}{1-\gamma}\cdot\Delta(\enc^{(i)},\dyn^{(i)},\dec^{(i)},\bar{r}^{(i)},\pi^{(i)},x)\\
=& \int_{z\in\mathcal Z}d\enc^{(i)}(z|x)V_{\pi^{(i+1)}}(z)+\frac{\gamma}{1-\gamma}\cdot\Delta(\enc^{(i)},\dyn^{(i)},\dec^{(i)},\bar{r}^{(i)},\pi^{(i)},x)\\
\leq&  \underbrace{U_{\pi^{(i+1)}\circ\enc^{(i)}}(x)}_{U_{\mu^{(i+1)}}(x)}+\frac{\gamma}{1-\gamma}\left\{\Delta(\enc^{(i)},\dyn^{(i)},\dec^{(i)},\bar{r}^{(i)},\pi^{(i)},x)+\Delta(\enc^{(i)},\dyn^{(i)},\dec^{(i)},\bar{r}^{(i)},\pi^{(i+1)},x)\right\}.
\end{split}
 \end{equation}
 \end{small}
The first inequality is based on Lemma \ref{lem:policy_eval}. The first equality is based on the property that $V_{\pi^{(i)}}$ is a unique solution to fixed-point equation $\mathcal T_{\pi^{(i)}}[V](z)=V(z)$. The second inequality is based on the definition 
\begin{small}
\[
\begin{split}
\mathcal T[V](z)&=\max_{p\in\Delta} \int_a dp(a)\left\{\bar{r}^{(i)}(z,a)+\gamma\int_{z'\in\mathcal X}dF^{(i)}(z'|z,a) V_{\pi^{(i)}}(z')\right\}\\
&\geq \int_a d\pi^{(i)}(a|z)\left\{\bar{r}^{(i)}(z,a)+\gamma\int_{z'\in\mathcal X}dF^{(i)}(z'|z,a) V_{\pi^{(i)}}(z')\right\}.
\end{split}
\]
\end{small}
The second equality is based on the definition of $\pi^{(i+1)}$.
The third inequality is based on the policy improvement property in latent policy iteration, i.e.,
\begin{small}
\[
\mathcal T[V_{\pi^{(i)}}]\geq \mathcal T_{\pi^{(i)}}[V_{\pi^{(i)}}]=V_{\pi^{(i)}}\implies V_{\pi^{(i+1)}}=\lim_{k\rightarrow\infty}\mathcal T_{\pi^{(i+1)}}^k[V_{\pi^{(i)}}]=\lim_{k\rightarrow\infty}\mathcal T^k[V_{\pi^{(i)}}]\geq V_{\pi^{(i)}},
\]
\end{small}
and the monotonicity property of latent Bellman operator.
The third equality is based on the fact that $V_{\pi^{(i+1)}}$ is a unique solution to fixed-point equation $\mathcal T_{\pi^{(i+1)}}[V](z)=V(z)$. The fourth inequality is again based on Lemma \ref{lem:policy_eval} (when $\pi=\pi^{(i+1)}$).

 Furthermore, considering the error from the distillation step,
 using the result from \cite{schulman2015trust}, one can show that 
 \begin{small}
 \begin{equation}
  U_{\mu^{(i)}}(x)\leq U_{\pi^{(i)}\circ\enc^{(i)}}(x)+ \frac{R_{\max}}{\sqrt{2}(1-\gamma)}\cdot \sqrt{D_{\text{KL}}(\pi^{(i)}\circ\enc^{(i)}(\cdot|x)||\mu^{(i)}(\cdot|x))}
 \end{equation}
 \end{small}
 Together this implies the result of the approximate policy improvement property.
 \end{proof}

\section{Offline CARL}
\label{app:offline_CARL}
  
Now consider the following offline latent policy iteration procedure that optimizes the policy in form of $\mu\circ\enc$. Starting at iteration $i=0$, given an initial observation policy $\mu^{(0)}$, an initial observation value function $U^{(0)}$, an offline LCE model $(\enc,\dyn,\dec)$, and an offline latent reward model $\bar{r}$, do
\begin{enumerate}
\item Compute the distilled latent policy by $\pi^{(i)}\leftarrow\arg\min_{\mu}D_{\text{KL}}(\pi\circ\enc(\cdot|x)||\mu^{(i)}(\cdot|x))$
    \item Compute the updated latent policy $\pi^{(i+1)}(\cdot|z)\in\arg\max_{p\in\Delta}\int_{a\in\mathcal A}dp(a)\cdot Q_{\pi^{(i)}}(z,a)$, and the updated observation policy $\mu^{(i+1)}(\cdot|x)=\pi^{(i+1)}\circ\enc(\cdot|x)$
\end{enumerate}
Repeat step 1-2 with the updated value function $U^{(i+1)}=U_{\pi^{(i)}}$ until convergence.
\begin{corollary}
For any observation $x\in\mathcal X$, the offline latent policy iteration procedure has the approximate policy improvement property
\begin{small}
\begin{equation}\label{eq:policy_improve_offline}
U_{\mu^{(i+1)}}(x) \geq U_{\mu^{(i)}}(x)-\frac{2\gamma}{1-\gamma}\max_{a\in\mathcal A}\Delta(\enc,\dyn,\dec,\bar{r}^{(i)},a,x),
\end{equation}
\end{small}
and the sub-optimality performance bound:
\begin{small}
\[
\lim_{i\rightarrow\infty}\|U_{\mu^{(i)}}(x) - U^*(x)\| \leq \frac{2\gamma}{(1-\gamma)^2}\max_{a\in\mathcal A,x\in\mathcal X}\Delta(\enc,\dyn,\dec,\bar{r},a,x),
\]
\end{small}
where the optimal value function is given by
\begin{small}
\[
U^*(x)=\mathbb E[\sum_{t=0}^\infty\gamma^tr_\mu(x_t) \mid P_\mu,\mu=\pi^*\circ\enc, x_0=x],\,\,\pi^*\in\arg\max_{\pi} \mathbb E[\sum_{t=0}^\infty\gamma^t\bar{r}_\pi(z_t) \mid F_\pi,z_0=z]
\]
\end{small}
and the action-dependent (and policy-independent) error term is given by
\begin{small}
\[
\begin{split}
\Delta(\,&\enc,\dyn,\dec,\bar{r},a,x)=\frac{R_{\max}}{1-\gamma}\sqrt{\frac{-1}{2}\int_{z}d\enc(z|x)\log \dec(x|z)} \\ 
&+ 2\big|r(x,a)-\int_{z}d\enc(z|x)\bar{r}(z,a)\big| + \frac{\gamma R_{\max}}{\sqrt{2}(1-\gamma)}\sqrt{D_{\text{KL}}\big(P(\cdot|x,a) \; || \; (D\circ F\circ E)(\cdot|x,a)\big)} \\
&+ \frac{\gamma R_{\max}}{\sqrt{2}(1-\gamma)}\sqrt{\int_{z}d\enc(z|x)\int_{x'}dP(x'|x,a)\cdot D_\text{KL}\big(E(\cdot|x') \; || \; F (\cdot|z,a)\big)}.
\end{split}
\]
\end{small}
\end{corollary}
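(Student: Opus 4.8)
The plan is to mirror the proof of Theorem~\ref{thm:policy_improve}, exploiting two simplifications that the offline setting affords. \textbf{First}, since the LCE model $(\enc,\dyn,\dec,\bar r)$ is held fixed across iterations, the distilled policy satisfies $\pi^{(i)}\circ\enc=\mu^{(i)}$ \emph{exactly} for every $i\ge 1$: Step~1 projects onto the family $\{\pi\circ\enc\}$, and by Step~2 the incoming $\mu^{(i)}=\pi^{(i)}\circ\enc$ already lies in that family, so the KL distillation term $\frac{R_{\max}}{1-\gamma}D_{\text{KL}}(\mu^{(i)}\|\pi^{(i)}\circ\enc^{(i)})$ that appears in~(\ref{eq:policy_improve_error}) vanishes. (For $i=0$ one either assumes $\mu^{(0)}$ is of the form $\pi\circ\enc$, or carries that extra KL term through the first step.) \textbf{Second}, because the model is fixed, the latent iteration in Steps~1--2 is \emph{exact} policy iteration in the latent MDP $\langle\mathcal Z,\mathcal A,\bar r,\dyn,\gamma\rangle$; hence it enjoys monotone latent improvement $V_{\pi^{(i+1)}}\ge V_{\pi^{(i)}}$ (greedy step plus monotonicity of $\mathcal T$) and converges, $V_{\pi^{(i)}}\uparrow V^*:=V_{\pi^*}$.

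For the policy improvement inequality~(\ref{eq:policy_improve_offline}), I would rerun the telescoping chain of~(\ref{eq:intermediate}) essentially verbatim, but feed it the \emph{policy-independent} evaluation bound. Concretely: start from $U_{\mu^{(i)}}(x)=U_{\pi^{(i)}\circ\enc}(x)$ and apply the offline form of Lemma~\ref{lem:policy_eval} to bound it by $V_{\pi^{(i)}}\circ\enc(x)+\frac{\gamma}{1-\gamma}\max_{a}\Delta(\enc,\dyn,\dec,\bar r,a,x)$; then use that $V_{\pi^{(i)}}$ is the latent fixed point, that $\mathcal T\ge\mathcal T_{\pi^{(i)}}$ and $\mathcal T=\mathcal T_{\pi^{(i+1)}}$ on $V_{\pi^{(i)}}$ (greedy step), that $V_{\pi^{(i+1)}}\ge V_{\pi^{(i)}}$ with monotonicity of $\mathcal T_{\pi^{(i+1)}}$, and the latent fixed point again to reach $\int d\enc(z|x)V_{\pi^{(i+1)}}(z)=V_{\pi^{(i+1)}}\circ\enc(x)$; finally apply Lemma~\ref{lem:policy_eval} once more (other direction) to bound this by $U_{\mu^{(i+1)}}(x)+\frac{\gamma}{1-\gamma}\max_a\Delta(\enc,\dyn,\dec,\bar r,a,x)$. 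Collecting the two evaluation errors gives~(\ref{eq:policy_improve_offline}). The only genuinely new ingredient relative to Theorem~\ref{thm:policy_improve} is the offline version of Lemma~\ref{lem:policy_eval}, obtained by replacing Theorem~\ref{thm:policy_eval} with Corollary~\ref{coro:policy_eval} in its proof while leaving the telescoping untouched; Corollary~\ref{coro:policy_eval} itself comes from the $\pi$-dependent bound via the worst-case-action estimate $D_{\text{TV}}(P_{\pi\circ\enc}(\cdot|x)\|\cdot)\le\max_a D_{\text{TV}}(P(\cdot|x,a)\|\cdot)$ of~(\ref{eq:prediction_worst_action}), plus Jensen on the reward mismatch and joint convexity of the KL to replace the consistency term $\enc\circ P_{\pi\circ\enc}$ by its action-wise upper bound.

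For the sub-optimality bound, I would combine the offline Lemma~\ref{lem:policy_eval} with the convergence of latent PI through the triangle inequality
\begin{small}
\[
\|U_{\mu^{(i)}}-U^*\|_\infty\le\|U_{\pi^{(i)}\circ\enc}-V_{\pi^{(i)}}\circ\enc\|_\infty+\|V_{\pi^{(i)}}\circ\enc-V^*\circ\enc\|_\infty+\|V^*\circ\enc-U_{\pi^*\circ\enc}\|_\infty.
\]
\end{small}
The first and third terms are each at most $\frac{\gamma}{1-\gamma}\max_{a,x}\Delta(\enc,\dyn,\dec,\bar r,a,x)$ by Lemma~\ref{lem:policy_eval} (with $\pi=\pi^{(i)}$ and $\pi=\pi^*$, using $U^*=U_{\pi^*\circ\enc}$ and $V^*=V_{\pi^*}$), and the middle term is $\le\|V_{\pi^{(i)}}-V^*\|_\infty\to 0$ since exact latent PI converges. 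Letting $i\to\infty$ gives $\limsup_i\|U_{\mu^{(i)}}-U^*\|_\infty\le\frac{2\gamma}{1-\gamma}\max_{a,x}\Delta$, which is dominated by the stated $\frac{2\gamma}{(1-\gamma)^2}\max_{a,x}\Delta$ since $1-\gamma\le 1$; alternatively, iterating~(\ref{eq:policy_improve_offline}) and summing the geometric series gives the bound in the standard approximate-policy-iteration fashion.

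\textbf{Main obstacle.} The structural argument is routine rearrangement of Theorem~\ref{thm:policy_improve}; the delicate part is the bookkeeping of constants and of the form of $\Delta$ — in particular, verifying that the action-dependent quantity $\max_a\Delta(\enc,\dyn,\dec,\bar r,a,x)$ appearing in the statement genuinely upper-bounds the policy-dependent error produced by the telescoping proof, which requires the worst-case-action step, Pinsker, and the joint-convexity bound on the consistency KL to be chained in the correct order, together with the (mild) boundary bookkeeping at $i=0$ where the distillation KL need not vanish.
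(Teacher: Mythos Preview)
Your argument for the improvement inequality~(\ref{eq:policy_improve_offline}) matches the paper's: both rerun the chain~(\ref{eq:intermediate}) with the fixed model so the distillation KL drops out, and both replace the $\pi$-dependent error $\Delta(\enc,\dyn,\dec,\bar r,\pi,x)$ by the action-wise maximum via the worst-case step of~(\ref{eq:prediction_worst_action}) and the analogous bounds on the reward and consistency terms.

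For the sub-optimality bound you take a genuinely different and simpler route. The paper does \emph{not} pass through the triangle inequality in latent space; instead it proves a one-step comparison with the $\mu^*$-Bellman operator,
\[
U_{\mu^{(i+1)}}(x)\ \ge\ T_{\mu^*}[U_{\mu^{(i)}}](x)\;-\;\frac{2\gamma}{1-\gamma}\max_{a}\Delta(\enc,\dyn,\dec,\bar r,a,x),
\]
obtained by inserting $\mathcal T_{\pi^*}$ in the latent chain and transferring back with Lemma~\ref{lem:policy_eval}, then uses the $\gamma$-contraction of $T_{\mu^*}$ around $U^*$ together with $U_{\mu^{(i+1)}}\le U^*$ to get the recursion $\|U_{\mu^{(i+1)}}-U^*\|_\infty\le\gamma\|U_{\mu^{(i)}}-U^*\|_\infty+\frac{2\gamma}{1-\gamma}\max_{a,x}\Delta$, whose fixed point gives the stated $\frac{2\gamma}{(1-\gamma)^2}$ constant. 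Your triangle-inequality argument (two applications of the offline Lemma~\ref{lem:policy_eval} at $\pi^{(i)}$ and $\pi^*$, plus $\|V_{\pi^{(i)}}-V_{\pi^*}\|_\infty\to 0$ from exact latent PI) is cleaner and in fact yields the sharper constant $\frac{2\gamma}{1-\gamma}$, which you then relax to the paper's constant. What the paper's contraction approach buys is a non-asymptotic rate: it tracks how fast $\|U_{\mu^{(i)}}-U^*\|_\infty$ shrinks, whereas your argument only controls the limit.

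One small caveat: your closing ``alternatively'' is not right as written. Iterating~(\ref{eq:policy_improve_offline}) alone gives only $U_{\mu^{(i+k)}}\ge U_{\mu^{(i)}}-k\cdot\frac{2\gamma}{1-\gamma}\max_a\Delta$, with no geometric series to sum; the standard approximate-PI argument needs precisely the extra $T_{\mu^*}$ comparison that the paper establishes. If you want that alternative to go through, you must first derive $U_{\mu^{(i+1)}}\ge T_{\mu^*}[U_{\mu^{(i)}}]-\text{(error)}$, which is exactly the paper's step.
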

\begin{proof}
Using the result from Theorem \ref{thm:policy_improve}, when the LCE model $(\enc,\dyn,\dec)$ and the latent reward model $\bar r$ do not change online, and there is no need for the distillation step. We also follow analogous arguments as in Corollary \ref{coro:policy_eval} that for $\Delta(\enc,\dyn,\dec,\bar{r}^{(i)},\pi,x)$ with the more conservative, worst-case error term over actions, i.e., $\max_{a\in\mathcal A}\Delta(\enc,\dyn,\dec,\bar{r}^{(i)},a,x)$, one can eliminate the dependencies on policies. Therefore, the policy improvement property can be simplified as in (\ref{eq:policy_improve_offline}). 

Denote by $
T_{\mu^*}[U](x)
$ the observation Bellman operator w.r.t. optimal latent policy $\pi^*$. Recall that $\int_{z\in\mathcal Z}d\enc(z|x)\mathcal T_{\pi^{(i+1)}}[V_{\pi^{(i+1)}}](z)=V_{\pi^{(i+1)}}\circ\enc(x)$. Using the results in Lemma \ref{lem:policy_eval}, we have the following chain on inequalities
\begin{small}
\[
\begin{split}
U_{\mu^{(i+1)}}(x)\geq&\int_{z\in\mathcal Z}d\enc(z|x)\mathcal T_{\pi^{(i+1)}}[V_{\pi^{(i+1)}}](z)-\frac{\gamma}{1-\gamma}\cdot\max_{a\in\mathcal A}\Delta(\enc,\dyn,\dec,\bar{r},a,x)\\
\geq& \int_{z\in\mathcal Z}d\enc(z|x)\mathcal T_{\pi^{(i+1)}}[V_{\pi^{(i)}}](z)-\frac{\gamma}{1-\gamma}\cdot\max_{a\in\mathcal A}\Delta(\enc,\dyn,\dec,\bar{r},a,x)\\
=& \int_{z\in\mathcal Z}d\enc(z|x)\mathcal T[V_{\pi^{(i)}}](z)-\frac{\gamma}{1-\gamma}\cdot\max_{a\in\mathcal A}\Delta(\enc,\dyn,\dec,\bar{r},a,x)\\
\geq& \int_{z\in\mathcal Z}d\enc(z|x)\mathcal T_{\pi^*}[V_{\pi^{(i)}}](x)-\frac{\gamma}{1-\gamma}\cdot\max_{a\in\mathcal A}\Delta(\enc,\dyn,\dec,\bar{r},a,x)\\
\geq& T_{\mu^*}[V_{\mu^{(i)}}\circ \enc](x)-\frac{\gamma+\gamma(1-\gamma)}{1-\gamma}\cdot\max_{a\in\mathcal A}\Delta(\enc,\dyn,\dec,\bar{r},a,x),\\
\geq& T_{\mu^*}[U_{\mu^{(i)}}](z)-\frac{\gamma+\gamma(1-\gamma)+\gamma^2}{1-\gamma}\cdot\max_{a\in\mathcal A}\Delta(\enc,\dyn,\dec,\bar{r},a,x),\\
\end{split}
\]
\end{small}
where the fourth inequality follows from direct algebraic manipulations and the last inequality follows from Lemma \ref{lem:policy_eval} when applied to $U_{\mu^{(i)}}$, i.e., 
\begin{small}
\[
U_{\mu^{(i)}}(x)\leq V_{\pi^{(i)}}\circ \enc(x)+\frac{\gamma}{1-\gamma}\cdot\max_{a\in\mathcal A}\Delta(\enc,\dyn,\dec,\bar{r},a,x)
\]
\end{small}
and from the contraction property of $T_{\mu^*}$, i.e.,
\begin{small}
\[
T_{\mu^*}[U_{\mu^{(i)}}](x)\leq T_{\mu^*}[V_{\pi^{(i)}}\circ \enc](x)+\frac{\gamma^2}{1-\gamma}\cdot\max_{a\in\mathcal A}\Delta(\enc,\dyn,\dec,\bar{r},a,x).
\]
\end{small}
Also notice that with $U^*$ equal to the fixed-point solution of this Bellman operator, we have the following property:
\begin{small}
\[
-\gamma\|U_{\mu^{(i)}}-U^*\|_\infty\leq T_{\mu^*}[U_{\mu^{(i)}}]-T_{\mu^*}[U^*]=T_{\pi^*}[U_{\mu^{(i)}}]-U^*\leq\gamma\|U_{\mu^{(i)}}-U^*\|_\infty.
\]
\end{small}
Furthermore, since $U_{\mu^{(i+1)}}(x)\leq U^*(x)$, we have the chain of inequalities
\begin{small}
\[
\begin{split}
-\|U_{\mu^{(i+1)}}-U^*\|_\infty=U_{\mu^{(i+1)}}(x)-U^*(x)\geq& T_{\mu^*}[U_{\mu^{(i)}}](x)-U^*(x)-\frac{2\gamma}{1-\gamma}\max_{x\in\mathcal X,a\in\mathcal A}\Delta(\enc,\dyn,\dec,\bar{r},a,x)\\
\geq &-\gamma \|U_{\mu^{(i)}}-U^*\|_\infty-\frac{2\gamma}{1-\gamma}\cdot\max_{x\in\mathcal X,a\in\mathcal A}\Delta(\enc,\dyn,\dec,\bar{r},a,x).
\end{split}
\]
\end{small}
In other words, we have
\begin{small}
\[
\|U_{\mu^{(i+1)}}(x)-U^*(x)\|_\infty\leq \gamma \|U_{\mu^{(i)}}-U^*\|_\infty+\frac{2\gamma}{1-\gamma}\max_{x\in\mathcal X,a\in\mathcal A}\Delta(\enc,\dyn,\dec,\bar{r},a,x).
\]
\end{small}
The proof is completed by taking $i\rightarrow\infty$ and noticing that 
\begin{small}
\[
\lim_{i\rightarrow\infty}\|U_{\mu^{(i+1)}}(x)-U^*(x)\|_\infty=\lim_{i\rightarrow\infty}\|U_{\mu^{(i+1)}}(x)-U^*(x)\|_\infty.
\]
\end{small}
\end{proof}

  \section{Value-Guided CARL}
  \label{app:Value-guided CARL}
  Previously the LCE model $(\enc,\dyn,\dec)$ and the latent reward model $\bar{r}$ are learned to minimize the lower bound of approximate policy improvement. While this procedure depends on the current policy $\pi$ and encoder $\enc$ to generate new data for updating these models, the model learning part only consists of the (i) the prediction loss, (ii) consistency loss, and (iii) the policy matching regularization loss between the observation policies $\mu$ and $\pi\circ\enc$ (distillation loss). The LCE model learning objective does not explicitly take into the account of the primary RL objective. 
  
To tackle this issue, we apply the techniques from variational model-based policy optimization \citep{chow2020Variational}, which aims at learning a dynamics model that is also sensitive to the value function of the RL objective, to learn the LCE model. In particular, according to (16) of \cite{chow2020Variational}, the "optimal" observation dynamics model that also takes the value function w.r.t. policy $\mu$ in to the account has the form
\begin{small}
\begin{equation}
\label{eq:optimal-posteriors-closed-form}
P^*(x'|x,a) = \frac{P(x'|x,a)\cdot\exp\big( \tau\cdot\tilde{U}_\mu(x')\big)}{\exp\big(\tau\cdot(\tilde{W}_\mu(x,a) - r(x,a))/\gamma\big)}= P(x'|x,a)\cdot\exp\left(\tau\cdot\frac{r(x,a)+ \gamma\tilde{U}_\mu(x')-\tilde{W}_\mu(x,a) }{\gamma}\right),
\end{equation}
\end{small}
in which $\tilde{U}_\mu(x)$ is the risk-adjusted observation value function at policy $\mu$, i.e.,
\begin{small}
\[
\tilde{U}_\mu(x):=\frac{1}{\tau}\log\mathbb E\left[\exp\left(\tau\cdot\sum_{t=0}^\infty\gamma^tr_\mu(x_t) \right)\mid P_\mu,x_0=x\right],
\]
\end{small}
which is also a unique solution that satisfies the fixed-point property:
\begin{small}
\[
\tilde{U}_\mu(x)=\int_{a}d\mu(a|x)\left[r(x,a)+\gamma\cdot\frac{1}{\tau}\cdot\log\mathbb E_{x'\sim P(\cdot|x,a)}\big[\exp\big(\tau\cdot \tilde{U}(x')\big)\big]\right],
\]
\end{small}
and $\tilde{W}_\mu(x)$ is the corresponding risk-adjusted observation state-action value function at policy $\mu$, i.e.,
\begin{small}
\[
\tilde{W}_\mu(x,a):=r(x,a)+\gamma\cdot\frac{1}{\tau}\cdot\log\mathbb E_{x'\sim P(\cdot|x,a)}\big[\exp\big(\tau\cdot U_\mu(x')\big)\big].
\]
\end{small}
The above value functions are termed "risk-adjusted" because the next state is no longer marginalized by taking an expectation over the original transition probability $P(x'|x,a)$, but instead it is marginalized by taking the exponential risk \citep{ruszczynski2006optimization}, i.e., $\rho_\tau(U(\cdot)|x,a)=\frac{1}{\tau}\log\mathbb E_{x'\sim P(\cdot|x,a)}[\exp(\tau\cdot U(x'))]$, where $\tau$ is the temperature constant.
This modified dynamics model $P^*$ is an exponential twisting of the original transition dynamics $P$ with weight 
\begin{small}
\begin{equation}
w(x,a,x')= \tau\cdot (r(x,a)+\gamma \tilde{U}_\mu(x')-\tilde{W}_\mu(x,a))/\gamma,
\end{equation}
\end{small}
which corresponds to the standard discounted TD-error of the risk-adjusted value functions.

 To incorporate the ``value-guided'' transition model $P^*$ in the LCE model learning, all we need to do is to replace the original transition model $P$ in the prediction loss and in the consistency loss with the value-guided counterpart. Recall the prediction loss:
 \begin{small}
 \[
 L_p(E,F,D,\pi,x)=D_{\text{KL}}\big(P_{\mu}(\cdot|x) \; || \; (D\circ F_\pi\circ E)(\cdot|x)\big)=\int_{x'}dP_{\mu}(x'|x)\log\frac{(D\circ F_\pi\circ E)(x'|x)}{P_{\mu}(x'|x)}.
 \]
 \end{small}
 Since the term $\log P_{\mu}(x'|x)$ is independent to the LCE model, one can equivalent optimize the maximum likelihood (MLE):
 \begin{small}
  \[
 L_{\text{mle}}(E,F,D,\pi,x)=-\int_{x'}dP_{\mu}(x'|x) \cdot
 \log (D\circ F_\pi\circ E)(x'|x).
 \]
 \end{small}
 
 Now with the value-guided transition model, this MLE loss function can be re-written as
 \begin{small}
   \begin{equation}
   \begin{split}
 L_{\text{mle}}(E,F,&\,D,\pi,x)=-\int_{x'}dP^*_{\mu}(x'|x) \cdot
 \log (D\circ F_\pi\circ E)(x'|x)\\
 =&-\int_{x'}dP^*_{\mu}(x'|x) \cdot
 \log (D\circ F_\pi\circ E)(x'|x)\\
 =&-\int_{a}d\mu(a|x)\int_{x'} dP(x'|x,a) \cdot \exp(w(x,a,x'))\cdot
 \log (D\circ F_\pi\circ E)(x'|x).
 \end{split}
 \end{equation}
 \end{small}
 For the consistency loss, consider 
 \begin{small}
 \[
 L_c(E,F,D,\pi,x)=\int_{x'\in\mathcal X}dP_{\mu}(x'|x)\cdot D_\text{KL}\left( \enc(\cdot|x')||(\dyn_\pi\circ\enc)(\cdot|x)\right).
 \]
 \end{small}
 Similar to the derivations of the prediction loss, with the value-guided transition model, this consistency loss function be re-written as
 \begin{small}
 \begin{equation}
 \begin{split}
 L_c(E,F,&D,\pi,x) =\int_{x'\in\mathcal X}dP^*_{\mu}(x'|x)\cdot D_\text{KL}\left( \enc(\cdot|x')||(\dyn_\pi\circ\enc)(\cdot|x)\right)\\
 =&\int_{a}d\mu(a|x)\int_{x'} dP(x'|x,a) \cdot \exp(w(x,a,x'))\cdot D_\text{KL}\left( \enc(\cdot|x')||(\dyn_\pi\circ\enc)(\cdot|x)\right).
 \end{split}
 \end{equation}
 \end{small}

 Below we propose ways to efficiently compute the exponential twisting weight $w(x,a,x')=\tau\cdot (r(x,a)+\gamma \tilde{U}_\mu(x')-\tilde{W}_\mu(x,a))/\gamma$. For simplicity we consider the case when $\tau$ is small, where in this case $\rho_\tau(U(\cdot)|x,a)\approx\mathbb E[U|x,a]$.
 Extending the following arguments requires directly learning the risk-adjusted value function $\tilde U_\mu(x)$ and state-action value functions $\tilde W_\mu(x,a)$, whose details can be found in \cite{borkar2002q}. 
 
 Under this condition, for any $(x,a)\in\mathcal X\times\mathcal A$, we have 
 \begin{small}
 \[
 \tilde{U}_\mu(x)\approx U_\mu(x), \quad\quad \tilde{W}_\mu(x,a)\approx W_\mu(x,a):=r(x,a)+\int_{x'}dP(x'|x,a)U_\mu(x').
 \]
 \end{small}
 
 Instead of computing the value functions in the observation space, we can approximate them with their low-dimensional latent-space counterparts. In particular, Lemma \ref{lem:policy_eval} implies that
 \begin{small}
 \[
 \left|V_{\pi}\circ\enc(x)- U_{\mu}(x)\right|\leq \frac{\gamma}{1-\gamma}\Delta(\enc,\dyn,\dec,\bar{r},\pi,x)+\frac{R_{\max}}{1-\gamma}\cdot D_{\text{KL}}(\pi\circ\enc(\cdot|x)||\mu(\cdot|x)),\,\, \forall x\in\mathcal X.
 \]
 \end{small}
Since we are optimizing the terms on the right side of the bound for the LCE model, 
 if these terms are small, then $U_{\mu}(x)\approx V_{\pi}\circ\enc(x)$. Following analogous derivations we also have the following error bound for the state-action value function: 
 \begin{small}
 \[
 \begin{split}
 &\left|Q_{\pi}\circ\enc(x,a)- W_{\mu}(x,a)\right|\\
 \leq& \frac{\gamma}{1-\gamma}\Delta(\enc,\dyn,\dec,\bar{r},x,a)+\frac{\gamma R_{\max}}{1-\gamma}\cdot D_{\text{KL}}(\pi\circ\enc(\cdot|x)||\mu(\cdot|x)),\,\, \forall x\in\mathcal X,\,\,a\in\mathcal A.
 \end{split}
 \]
 \end{small}
 where the state-action error term is given by
 \begin{small}
 \[
\begin{split}
\Delta(\enc,\dyn,\dec,\bar{r},\,&x,a):=\frac{R_{\max}}{1-\gamma}\sqrt{\frac{-1}{2}\int_{z}d\enc(z|x)\log \dec(x|z)} \\ 
&+ 2\big|r(x,a)-\bar{r}(z,a)\big| + \frac{\gamma R_{\max}}{\sqrt{2}(1-\gamma)}\sqrt{D_{\text{KL}}\big(P(\cdot|x,a) \; || \; (D\circ F\circ E)(\cdot|x,a)\big)} \\
&+ \frac{\gamma R_{\max}}{\sqrt{2}(1-\gamma)}\sqrt{ \int_{x'\in\mathcal X}dP(x'|x,a)\cdot D_\text{KL}\left( \enc(\cdot|x')||(\dyn\circ\enc)(\cdot|x,a)\right)}.
\end{split}
\]
\end{small}
If this terms is small the state-action value function can be approximated by $W_{\mu}(x,a)\approx Q_{\pi}\circ\enc(x,a)$.
 
Finally, recall that we are learning the latent reward model by minimizing the following reward loss: $\big|\int_{z,a}d\enc(z|x)d\pi(a|z)(r(x,a)-\bar{r}(z,a))\big|$.
Therefore, we have that $r(x,a)\approx\bar{r}\circ E(x,a)$.
 Together, the exponential twisting weights can be approximated by the latent reward, latent value function, and latent state-action value function as follows:
 \begin{small}
 \[
 w(x,a,x')\approx \widehat{w}(x,a,x'):=\int_{z,z'}d\enc(z|x)\cdot d\enc(z'|x')\cdot(\bar{r}(z,a)-Q_\pi(z,a))+\gamma V_\pi(z'),
 \]
 \end{small}
and correspondingly the exponential twisting term can be approximated by $\exp(\widehat{w}(x,a,x'))$.

\newpage
\section{CARL Algorithms}
\label{app:CARL Algorithm}
Below in Algorithm \ref{alg:CARL_implementation} we present the practical implementation of the CARL algorithm with notation for all of its variants (offline CARL, online CARL, Value-Guided CARL).

\begin{algorithm}[H]
    \caption{Control Aware Representation Learning (CARL)}
\begin{algorithmic}[1]
    \STATE {\bf Inputs}: A dataset $\mathcal{D}_{\text{real}}$ tuples $(x, a, x^\prime, x_{g})$ from the environment, $Env$. A latent controllable embedding (LCE) network $\mathcal{M}$ consisting of an encoder $E: \mathcal{X}\rightarrow\mathcal{Z}$, transition dynamics $F:\mathcal{Z}\times\mathcal{A}\rightarrow\mathcal{Z}$, decoder $D:\mathcal{Z}\rightarrow\mathcal{X}$, and backwards encoder $B:\mathcal{X}\times\mathcal{Z}\times\mathcal{A}\rightarrow\mathcal{Z}$; plus networks for control: latent critic networks $V_{\phi_1}, V_{\phi_2}: \mathcal{Z} \rightarrow \mathbb{R}$, $Q_{\theta_1}, Q_{\theta_2}: \mathcal{Z} \times \mathcal{A} \rightarrow \mathbb{R}$, and latent actor network $\pi_{\psi}: \mathcal{Z}\rightarrow \mathcal{A}$

    \FOR{$i = 0,\ldots, T$ {\color{gray}\#{\em when $T=0$ this is offline CARL}}} 
        \FOR{$j = 1, \ldots,$ num\_pcc\_epochs}
            \STATE {\color{gray}\#{\em representation learning.}}
            \STATE Train $\mathcal{M}^{(i)}$ using dataset $\mathcal{D}_{\text{real}}$ model 
            \STATE For value-guided CARL, the prediction and consistency loss functions requires the exponential twisting weight $\exp(\frac{\tau}{\gamma}\cdot\hat w(x,a,x'))$, where $
            \hat w(x,a,x')=\int_{z,z'}\enc^{(i-1)}(z|x)\cdot\enc^{(i-1)}(z'|x')\cdot(-||z_{g} - z^\prime||_2-Q_{\bar{\phi}}(z,a))+\gamma V_{\bar{\phi}}(z'))$
        \ENDFOR
        \STATE Initialize a soft actor critic (SAC) policy $\pi^{(i)}$
        \IF {Do policy distillation and $i\ge 1$}
            \STATE {\color{gray} \#{\em Corresponds to the policy distillation loss $L_p$}}
            \FOR {Each policy distillation epoch}
                \STATE $\pi^{(i)}_{\psi} \leftarrow \pi^{(i)}_\psi - \nabla_{\psi}\mathbb{E}_{x\sim D}\left[ D_{KL}\left(\pi^{(i)}_{\psi}\left(E^{(i)}(\cdot|x)\right)||\pi^{(i)}_{\psi}\left(E^{(i-1)}(\cdot|x)\right)\right)\right]$ 
            \ENDFOR
        \ENDIF
        \STATE Initialize a latent space buffer $\mathcal{B}_{latent}$\\
        \STATE {\color{gray}\#{\em learning a latent space policy}}
        \FOR{Each soft actor critic step}
            \STATE Sample real dataset $(x, a, x_{g}) \sim \mathcal{D}_{\text{real}}^{(i)}$\\
            \STATE Generate necessary latent space variables:\\ 
            $z \sim E(\cdot |x), z^\prime \sim F(\cdot|z, u), z_{g} \sim E(\cdot | x_{g}), r = -||z_{g} - z^\prime||_2$
            \STATE Add latent batch to latent buffer $\mathcal{B}_{latent} \leftarrow \mathcal{B}_{latent} \cup (z, a, z^\prime, r, z_{g})$
            \STATE Sample latent buffer $(z, a, z^\prime, r, z_{g}) \sim \mathcal{B}_{latent}$
            \STATE {\color{gray} \#{\em Train the policy $\pi^{(i)}$ with $(z, u, z^\prime, r, z_{g})$ with the SAC algorithm}}
            \STATE $\theta_i \leftarrow \theta_i - \kappa_Q\nabla_{\theta_i}J_Q(\theta_i)$ for $i\in\{1, 2\}$ \hfill{\color{gray} \#{\em Update the Q-function weights}}
            \STATE $\phi_i \leftarrow \phi_i - \kappa_V\nabla_{\phi_i}J_V(\phi_i)$ for $i\in\{1, 2\}$ \hfill{\color{gray} \#{\em Update the V-function weights}}
            \STATE $\psi \leftarrow \psi - \kappa_\pi \nabla_\psi J_\psi(\psi)$ \hfill{\color{gray} \#{\em Update the policy weights}}
            \STATE $\bar{\theta_i} \leftarrow \nu\theta_i +(1-\nu)\bar{\theta_i}$ for $i\in\{1,2\}$ \hfill{\color{gray} \#{\em Update the Q-target critic networks weights}}
            \STATE $\bar{\phi_i} \leftarrow \nu\phi_i +(1-\nu)\bar{\phi_i}$ for $i\in\{1,2\}$ \hfill{\color{gray} \#{\em Update the V-target critic networks weights}}
        \ENDFOR
        \STATE {\color{gray} \#{\em Sample the environment for new real data}}
        \FOR{Each Interaction with Environment}
            \STATE Sample Actions $a \sim \pi^{(i)}(E(\cdot | x))$\\
            \STATE Interact with the environment $x^\prime\leftarrow Env(x,a)$\\
            \STATE Update real data dataset $\mathcal{D}_{\text{real}} \leftarrow \mathcal{D}_{\text{real}} \cup (x,a,x^\prime, x_{g})$
            \STATE Update current state $x \leftarrow x^\prime$
        \ENDFOR
    \ENDFOR
\end{algorithmic}
\label{alg:CARL_implementation}
\end{algorithm}

{\bf Soft Actor Critic (SAC) Updates}
The policy parameters $\psi$ are optimized to update the latent space policy towards the exponential of the soft Q-function,

\begin{equation}
    J_{\pi}(\psi) = \underset{z_t \sim\mathcal{D}}{\mathbb{E}}\left[\underset{a_t\sim\pi_\psi}{\mathbb{E}}\left[\alpha log(\pi_{\psi}(a_t|z_t) -Q_{\theta}(z_t, a_t)\right]\right]
\end{equation}

Our updates to the Q network minimize the following loss function:

\begin{equation}
    J_Q(\theta) = \mathbb{E}_{(z_t, a_t)\sim\mathcal{D}} \left[\frac{1}{2}\left(Q_{\theta}(z_t, a_t) - \hat{Q}(z_t, a_t)\right)^2\right]
\end{equation}

where:

\begin{equation}
    \label{eq:SAC Q Update}
    \hat{Q}_\theta(z_t, a_t) = r(z_t, a_t) + \sum_{i=0}^{k-1}\gamma^{i+1}r(z_{t+i+1}, a_{t+i+1}) | a_{t} \sim \pi_{\psi}(\cdot | z_t), z_{t+1} \sim F(\cdot | z_t, a_t)
\end{equation}

here, $F$ is the learned latent space transition model and $r(z_t, a_t) = ||z_{goal} - z_{t+1}||^2_2$ where, $z_{t+1} \sim F(\cdot| z_t, a_t)$ and $z_{goal} \sim E(\cdot|x_{goal})$ and $x_{goal}$ is the observation of the environment; additionally, $k$ is a tunable hyperparameter of the number of rollouts in the latent space should we rollout our model to sufficiently approximate the Q-value. As in \cite{pmlr-v80-haarnoja18b} we utilize two Q-functions and take the minimum of the Q-functions to generate the value in the actor loss function. We note that we don't have value network updates as we tried adding value networks but were unable to get good results.

\section{Experimental Details}
\label{app:Experimental Details}
In the following section we will provide a description of the domains and implementation details used for the experiments. For all the experiments we define the reward function as $r(z, a) = -(z-z_{g})^\top Q(z-z_{g}) - a^{\top}Ra$, where $z$ and $z_{g}$ are latent states of the current and goal observation, and  $Q=\kappa\cdot I_{n_{z}}, R=I_{n_a}$ where $\kappa=50$, are respectively penalty weights on the state error and action. This reward configuration follows exactly from \cite{Levine2020Prediction}.

\begin{paragraph}{Planar System}
In this task the main goal is to navigate an agent in a surrounded area on a 2D plane \citep{breivik2005principles}, whose goal is to navigate from a corner to the opposite one, while avoiding the six obstacles in this area. The system is observed through a set of $40 \times 40$ pixel images taken from the top view, which specifies the agent's location in the area. Actions are two-dimensional and specify the $x-y$ direction of the agent's movement, and given these actions the next position of the agent is generated by a deterministic underlying (unobservable) state evolution function. \textit{Start State:} top-left corner. \textit{Goal State:} one of three corners (excluding top-left corner). \textit{Agent's Objective:} agent is within Euclidean distance of $5$ from the goal state. 

For the biased variant of this experiment we uniformly sample a proportion, $p$, of the total samples within a $30 \times 30$ pixel region, which doesn't include any of the goal states, and the other $1-p$ proportion of the samples are sampled uniformly from the entire underlying state space.

\end{paragraph}

\begin{paragraph}{Inverted Pendulum -- SwingUp} This is the classic problem of controlling an inverted
pendulum \citep{furuta1991swing} from $48 \times 48$ pixel images. The goal of this task is to swing up an under-actuated pendulum from the downward resting position (pendulum hanging down) to the top position and to balance it. The underlying state $\state_t$ of the system has two
dimensions: angle and angular velocity, which is unobservable. The control (action) is $1$-dimensional, which is the torque applied to the joint of the pendulum. For all PCC based algorithms, we opt to consider each observation $\obs_t$ as two images generated from consecutive time-frames (the current time and the previous time; this was also done in the original PCC paper \citep{Levine2020Prediction}. This is because each image only shows the position of the pendulum and does not contain any information about the velocity. \textit{Start State:} Pole is resting down, \textit{Agent's Objective:} pole's angle is within $\pm \pi/6$ from an upright position.

For the biased experimentation of this experiment we sample a proportion, $p$, of the total samples from when the pendulum is in it's closer to its resting position $[-\pi, -2.0]\cup[2, \pi]$ and the other $1-p$ samples when the pendulum is within $\pm 0.5$ from an upright position. 

\end{paragraph}

\begin{paragraph}{CartPole} This is the visual version of the classic task of controlling a cart-pole system \citep{geva1993cartpole}. The goal in this task
is to balance a pole on a moving cart, while the cart avoids hitting the left and right boundaries. The control (action) is $1$-dimensional, which is the force applied
to the cart. The underlying state of the system $\state_t$ is $4$-
dimensional, which indicates the angle and angular velocity of the pole, as well as the position and velocity of the cart. Similar to the inverted pendulum, in order to maintain the Markovian property the observation $\obs_t$ is a stack of two
$80\times 80$ pixel images generated from consecutive time-frames.
\textit{Start State:} Pole is randomly sampled in $\pm \pi/6$. \textit{Agent's Objective:} pole's angle is within $\pm \pi/10$ from an upright position.

For the biased experiment we sample a proportion, $p$, of the total samples from when the pole's angle is sampled from $[-\pi/6, -\pi/10]\cup[\pi/10, \pi/6]$ and the other $1-p$ samples are sampled as before uniformly from the given state space.

\end{paragraph}

\begin{paragraph}{$3$-link Manipulator --- SwingUp \& Balance} The goal in this task is to move a $3$-link manipulator from the initial position (which is the downward resting position) to a final position (which is the top position) and balance it. In the $1$-link case, this experiment is reduced to inverted pendulum. In the $2$-link case the setup is similar to that of arcobot \citep{spong1995swing}, except that we have torques applied to all intermediate joints, and in the $3$-link case the setup is similar to that of the $3$-link planar
robot arm domain that was used in the E2C paper, except that the robotic arms are modeled by simple rectangular rods (instead of real images of robot arms), and our task success criterion requires both swing-up (manipulate to final position) and balance.\footnote{Unfortunately due to copyright issues, we cannot test our algorithms on the original $3$-link planar
robot arm domain.} The underlying (unobservable) state $\state_t$ of the system is $2N$-dimensional, which indicates the relative angle and angular velocity at each link, and the actions are $N$-dimensional,
representing the force applied to each joint of the arm. The state evolution is modeled by the standard Euler-Lagrange equations \citep{spong1995swing, lai2015singularity}.
Similar to the inverted pendulum and cartpole, in order to maintain the Markovian property, the observation state $\obs_t$ is a stack of two $80\times 80$ pixel images of the $N$-link manipulator generated from consecutive time-frames. In the experiments we will evaluate the models based on the case of $N=2$ ($2$-link manipulator) and $N=3$ ($3$-link manipulator).
\textit{Start State:} $1^{\text{st}}$ pole with angle $\pi$, $2^{\text{nd}}$ pole with angle $2\pi/3$, and $3^{\text{rd}}$ pole with angle $\pi/3$, where angle $\pi$ is a resting position. \textit{Agent's Objective:} the sum of all poles' angles is within $\pm \pi/6$ from an upright position.

For the biased experiment we sample a proportion, $p$ of the total samples of when the 1st pole is within $\pm \pi/2$, the 2nd pole is within angle $\pm\pi/3$, and the 3rd pole is within angle $\pm\pi/6$ of the upright position and the other $1-p$ samples are sampled as before uniformly from the given state space.
\end{paragraph}

\subsection{Data Generation Procedure}
For all algorithms that use the PCC framework for representation learning, we always start by sampling triplets of the form $(x_t, a_t, x_{t+1})$, which is done by (1) uniformly randomly sampling an underlying state $s_t$ from the environment and creating the corresponding observation $x_t$, (2) uniformly randomly sampling a valid action $a_t$, and (3) obtaining the next state $s_{t+1}$ through the environment's true dynamics and creating the corresponding observation $x_{t+1}$. 

When interacting with the true underlying MDP, sampling the environment for more data for the iterative online variant of our algorithm, at iteration $i$ of our algorithm, we are following our learned policy $\pi^{(i)}$. We start with an initial observation $x_0$ and generate our initial action $a_0$, $a_0 \sim \pi^{(i)}_\psi(E(\cdot | x_0))$ and continue following our learned policy $\pi^{(i)}$ to get our action $a_j$, $a_j \sim \pi^{(i)}_\psi(E(\cdot|x_j))$.We continue this process until we have reached the end of the episode or the pre-defined number of samples we draw from the environment. 

In SOLAR each training sample is an episode $\{x_1, a_1, x_2, \cdots, x_T, a_T, x_{T+1}\}$ where $T$ is the control horizon. We uniformly sample $T$ actions from the action space, apply the dynamics $T$ times, and generate the $T$ corresponding observations. 

\subsection{Implementation}\label{sec:nn_arch}

In the following we describe architectures and hyper-parameters that were used for training the different algorithms.

\subsubsection{Training Hyper-Parameters and Regulizers}
SOLAR training specifics, we used their default setting:

\begin{itemize}
    \item Batch size of 2.
    \item ADAM \citep{kingma2014adam} with $\beta_1=0.9$, $\beta_2=0.999$, and $\epsilon=10^{-8}$. We also use a learning rate $\alpha_{model}=2\cdot10^{-5}\times \textrm{horizon}$ for the learning rate $\mathcal{M}\mathcal{N}\mathcal{I}\mathcal{W}$ prior and $\alpha=10^{-3}$ for other parameters.
    \item $\beta_{start}, \beta_{end}, \beta_{rate}=(10^{-4}, 10.0, 5\cdot 10^{-5})$
    \item Local inference and control:
        \begin{itemize}
            \item Data strength: 50
            \item KL step: 2.0
            \item Number of Iterations: 10
        \end{itemize}
\end{itemize}


PCC training specifics, we use their reported optimal hyperparameters:
\begin{itemize}
    \item Batch size of $128$.
    \item ADAM with $\alpha = 5 \cdot 10^{-4}$, $\beta_1 = 0.9, \beta_2 = 0.999$, and $\epsilon = 10^{-8}$.
    \item L2 regularization with a coefficient of $10^{-3}$.
    \item $(\lambda_p, \lambda_c, \lambda_{\text{cur}}) = (1,7,1)$, and the additive Gaussian noise in the curvature loss is $\mathcal{N}(0, \sigma^2)$, where $\sigma^2 = 0.01$.
    \item Additional VAE~\citep{kingma2013auto} loss term $\ell_{\text{VAE}} = -\mathbb{E}_{q(z|x)}[\log p(x|z)] + D_{\text{KL}}(q(z|x)||p(z))$ with a very small coefficient of $0.01$, where $p(z) = \mathcal{N}(0,1)$.
    \item Additional deterministic reconstruction loss with coefficient $0.3$: given the current observation $x$, we take the means of the encoder output and the dynamics model output, and decode to get the reconstruction of the next observation.
\end{itemize}

CARL training specifics:
\begin{itemize}
    \item Batch size of 128.
    \item ADAM with $\alpha = 5 \cdot 10^{-4}$, $\beta_1 = 0.9, \beta_2 = 0.999$, and $\epsilon = 10^{-8}$.
    \item L2 regularization with a coefficient of $10^{-3}$.
    \item The additive Gaussian noise in the curvature loss is $\mathcal{N}(0, \sigma^2)$, where $\sigma^2 = 0.01$.
    \item As in \cite{Levine2020Prediction} we use the deterministic reconstruction loss with coefficient $0.3$.
\end{itemize}

\subsubsection{Network Architectures}
We next present the specific architecture choices for each domain. For fair comparison, The numbers of layers and neurons of each component were shared across all algorithms. ReLU non-linearities were used between each two layers.

\textbf{Encoder:} composed of a backbone (either a MLP or a CNN, depending on the domain) and an additional fully-connected layer that outputs mean variance vectors that induce a diagonal Gaussian distribution (for PCC, SOLAR, and all CARL variants).

\textbf{Decoder:} composed of a backbone (either a MLP or a CNN, depending on the domain) and an additional fully-connected layer that outputs logits that induce a Bernoulli distribution (for PCC, SOLAR, and all CARL variants)

\textbf{Dynamical model:} the path that leads from $\{ \latent_t, a_t \}$ to  $\hat{\latent}_{t+1}$. Composed of a MLP backbone and an additional fully-connected layer that outputs mean and variance vectors that induce a diagonal Gaussian distribution (for PCC, SOLAR, and all CARL variants).

\textbf{Backwards dynamical model:} the path that leads from $\{\hat{z}_{t+1}, a_t, x_t\}$ to $z_t$. Each of these inputs goes to fully-connected layer of $\{N_z, N_u, N_x\}$ neurons respectively. These outputs are then concatenated and passed through another layer of $N_{joint}$ neurons, and finally with an additionally fully-connected layer we output the mean and variance vectors that induce a diagonal Gaussian distribution.

\textbf{SAC Architecture:} For all of our environments with all CARL algorithms, we utilized the same SAC architecture as seen in Table \ref{tab:SAC Params}:

\begin{table}[h]
\centering
\begin{tabular}{|l|c|}
\hline
\textbf{Hyper Parameters for SAC} & \textbf{Value(s)} \\ [0.5ex]
\hline
Discount Factor & 0.99 \\
\hline
Critic Network Architecture & MLP with 2 hidden layers of size 256 \\
\hline
Actor Network Architecture & MLP with 2 hidden layers of size 256 \\
\hline
Exploration policy & $\mathcal{N}(0, \sigma=1)$ \\
\hline
Exploration noise ($\sigma$) decay & 0.999 \\
\hline
Exploration noise ($\sigma$) minimum & 0.025 \\
\hline
Temperature & 0.99995 \\
\hline
Soft target update rate ($\tau$) & 0.005 \\
\hline
Replay memory size & $10^6$ \\
\hline
Minibatch size & 128 \\
\hline
Number of Rollouts in the Latent space, $k$ in (\ref{eq:SAC Q Update}) & 5 \\
\hline
Critic learning rate & 0.001 \\
\hline
Actor learning rate & 0.0005 \\
\hline
Neural network optimizer & Adam \\
\hline
\end{tabular}
\caption{Hyper parameters for the SAC controller.}
\label{tab:SAC Params}
\end{table}

\begin{paragraph}{Planar system}
\begin{itemize}
    \item \textbf{Input:} $40\times 40$ images.
    \item \textbf{Actions space:} $2$-dimensional
    \item \textbf{Latent space:} $2$-dimensional
    \item \textbf{Encoder:} $3$ Layers: $300$ units - $300$ units - $4$ units ($2$ for mean and $2$ for variance)
    \item \textbf{Decoder:} $3$ Layers: $300$ units - $300$ units - $1600$ units (logits)
    \item \textbf{Dynamics:} $3$ Layers: $20$ units - $20$ units - $4$ units
    \item \textbf{Backwards dynamics:} $N_\latent=5, N_a=5, N_\obs=100$ - $N_{\text{joint}}=100$ - $4$ units
    \item \textbf{Number of control actions:} or the planning horizon $T = 40$
    \item \textbf{Offline and Online CARL hyperparameters: } $\lambda_{ed} = 0.01, \lambda_{p} = 1, \lambda_{c} = 7, \lambda_{cur} = 1$
    \item \textbf{Value-Guided CARL hyperparameters: } $\lambda_{ed} = 0.01, \lambda_{p} = 2, \lambda_{c} = 11, \lambda_{cur} = 1, \tau = 1/30.0$
    \item \textbf{Proportion of biased samples:} $p = 0.5$
    \item \textbf{Number of samples from the environment per iteration $i$ in Algorithm  \ref{alg:CARL_implementation}}: 128
    \item \textbf{Initial standard deviation for collecting data (SOLAR):} 1.5 for both global and local training.
\end{itemize}
\end{paragraph}

\begin{paragraph}{Inverted Pendulum -- SwingUp}
\begin{itemize}
    \item \textbf{Input:} Two $48 \times 48$ images. 
    \item \textbf{Actions space:} $1$-dimensional
    \item \textbf{Latent space:} $3$-dimensional
    \item \textbf{Encoder:} $3$ Layers: $500$ units - $500$ units - $6$ units ($3$ for mean and $3$ for variance)
    \item \textbf{Decoder:} $3$ Layers: $500$ units - $500$ units - $4608$ units (logits)
    \item \textbf{Dynamics:} $3$ Layers: $30$ units - $30$ units - $6$ units
    \item \textbf{Backwards dynamics:} $N_\latent=10, N_a=10, N_\obs=200$ - $N_{\text{joint}}=200$ - $6$ units
    \item \textbf{Number of control actions:} or the planning horizon $T = 400$ 
    \item \textbf{Offline and Online CARL environment hyperparameters:} $\lambda_{ed} = 0.01, \lambda_{p} = 1, \lambda_{c} = 11, \lambda_{cur} = 1$
    \item \textbf{Value-Guided CARL environment hyperparameters: } $\lambda_{ed} = 0.01, \lambda_{p} = 1, \lambda_{c} = 7, \lambda_{cur} = 1, \tau = 1/60.0$
    \item \textbf{Proportion of biased samples:} $p = 0.95$
    \item \textbf{Number of samples from the environment per iteration $i$ in Algorithm  \ref{alg:CARL_implementation}}: 128
    \item \textbf{Initial standard deviation for collecting data (SOLAR):} 0.5 for both global and local training.

\end{itemize}
\end{paragraph}

\begin{paragraph}{Cart-pole -- Balancing}
\begin{itemize}
    \item \textbf{Input:} Two $80 \times 80$ images. 
    \item \textbf{Actions space:} $1$-dimensional
    \item \textbf{Latent space:} $8$-dimensional
    \item \textbf{Encoder:} $6$ Layers: Convolutional layer: $32 \times 5 \times 5$; stride $(1,1)$ - Convolutional layer: $32 \times 5 \times 5$; stride $(2,2)$ - Convolutional layer: $32 \times 5 \times 5$; stride $(2,2)$ - Convolutional layer: $10 \times 5 \times 5$; stride $(2,2)$ - $200$ units - $16$ units ($8$ for mean and $8$ for variance)
    \item \textbf{Decoder:} $6$ Layers: $200$ units - $1000$ units - $100$ units - Convolutional layer: $32 \times 5 \times 5$; stride $(1,1)$ - Upsampling $(2,2)$ - convolutional layer: $32 \times 5 \times 5$; stride $(1,1)$ - Upsampling $(2,2)$ - Convolutional layer: $32 \times 5 \times 5$; stride $(1,1)$ - Upsampling $(2,2)$ - Convolutional layer: $2 \times 5 \times 5$; stride $(1,1)$
    \item \textbf{Dynamics:} $3$ Layers: $40$ units - $40$ units - $16$ units
    \item \textbf{Backwards dynamics:} $N_\latent=10, N_a=10, N_\obs=300$ - $N_{\text{joint}}=300$ - $16$ units
    \item \textbf{Number of control actions:} or the planning horizon $T = 200$
    \item \textbf{Offline and Online CARL environment hyperparameters:} $\lambda_{ed} = 0.01, \lambda_{p} = 1, \lambda_{c} = 7, \lambda_{cur} = 1$
    \item \textbf{Value-Guided CARL environment hyperparameters: } $\lambda_{ed} = 0.01, \lambda_{p} = 1, \lambda_{c} = 7, \lambda_{cur} = 1, \tau = 1/40.0$
    \item \textbf{Proportion of biased samples:} $p = 0.8$
    \item \textbf{Number of samples from the environment per iteration $i$ in Algorithm  \ref{alg:CARL_implementation}}: 256
    \item \textbf{Initial standard deviation for collecting data (SOLAR):} 10 for global and 5 for local training.

\end{itemize}
\end{paragraph}

\begin{paragraph}{$3$-link Manipulator --- Swing Up \& Balance}
\begin{itemize}
    \item \textbf{Input:} Two $80 \times 80$ images.
    \item \textbf{Actions space:} $3$-dimensional
    \item \textbf{Latent space:} $8$-dimensional
    \item \textbf{Encoder:} $6$ Layers: Convolutional layer: $32 \times 5 \times 5$; stride $(1,1)$ - Convolutional layer: $32 \times 5 \times 5$; stride $(2,2)$ - Convolutional layer: $32 \times 5 \times 5$; stride $(2,2)$ - Convolutional layer: $10 \times 5 \times 5$; stride $(2,2)$ - $500$ units - $16$ units ($8$ for mean and $8$ for variance)
    \item \textbf{Decoder:} $6$ Layers: $200$ units - $1000$ units - $100$ units - Convolutional layer: $32 \times 5 \times 5$; stride $(1,1)$ - Upsampling $(2,2)$ - convolutional layer: $32 \times 5 \times 5$; stride $(1,1)$ - Upsampling $(2,2)$ - Convolutional layer: $32 \times 5 \times 5$; stride $(1,1)$ - Upsampling $(2,2)$ - Convolutional layer: $2 \times 5 \times 5$; stride $(1,1)$
    \item \textbf{Dynamics:} $3$ Layers: $40$ units - $40$ units - $16$ units
    \item \textbf{Backwards dynamics:} $N_\latent=10, N_a=10, N_\obs=400$ - $N_{\text{joint}}=400$ - $16$ units
    \item \textbf{Number of control actions:} or the planning horizon $T = 200$
    \item \textbf{Offline and Online CARL environment hyperparameters:} $\lambda_{ed}= 0.01, \lambda_{p} = 1, \lambda_{c} = 11, \lambda_{cur} = 1$
    \item \textbf{Value-Guided CARL environment hyperparameters: } $\lambda_{ed} = 0.01, \lambda_{p} = 2, \lambda_{c} = 11, \lambda_{cur} = 1, \tau = 1/60.0$
    \item \textbf{Proportion of biased samples:} $p = 0.2$
    \item \textbf{Number of samples from the environment per iteration $i$ in Algorithm  \ref{alg:CARL_implementation}}: 128
    \item \textbf{Initial standard deviation for collecting data (SOLAR):} 1 for global and 0.5 for local training.

\end{itemize}
\end{paragraph}

\section{Additional Experiments}
\label{app:Additional Experiments}
\textbf{Policy Distillation}
\label{app:Policy Distill}
In our iterative algorithm, we describe a method to connect policies from two different latent spaces in eq. (\ref{eq:policy-distillation}). In Figure \ref{fig:train_curve} we show the learning curves for Online CARL with and without policy distillation. In general, when utilizing policy distillation, we achieve similar performance to the iterative variant of our algorithm. Additionally, these results show that with policy distillation, in the three-pole and swingup tasks we are able to achieve faster convergence. Another observed added benefit is that with policy distillation we achieve more stability in the final metrics as we add more samples form our environment across environments. 

\begin{figure}
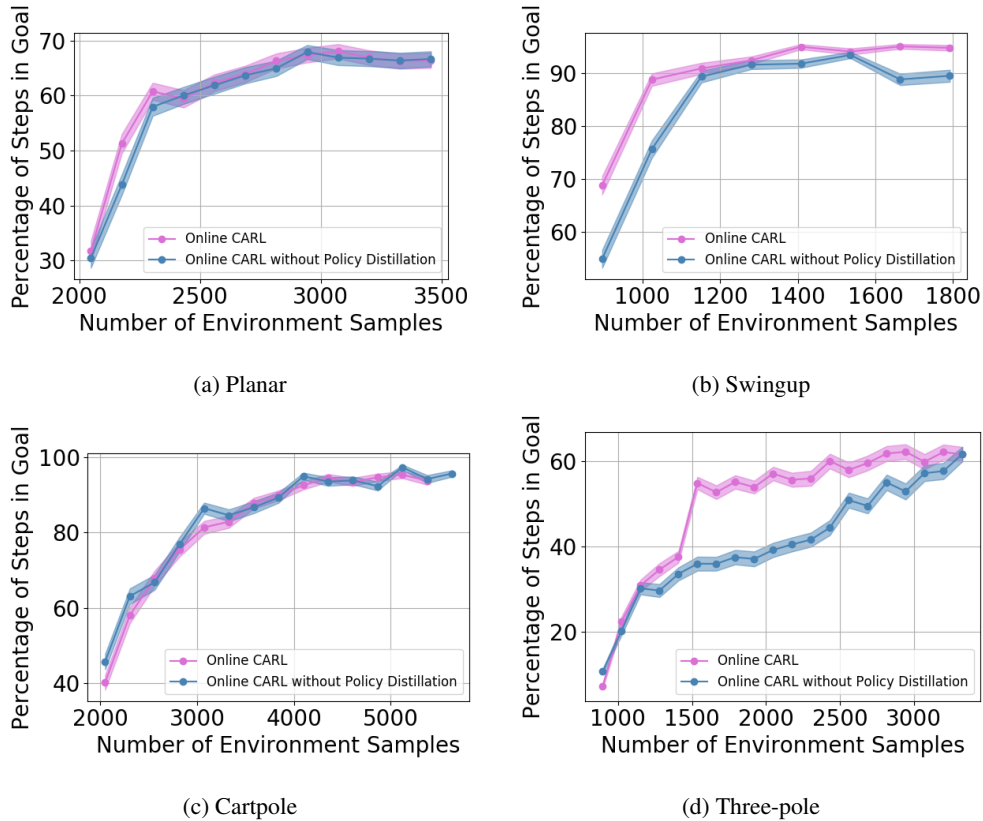

    \centering
    \begin{subfigure}[b]{0.48\textwidth}
        \centering
        \includegraphics[width=\linewidth]{Figures/planar_distill.png}
        \subcaption{Planar}
    \end{subfigure}
    \begin{subfigure}[b]{0.48\textwidth}
        \centering
        \includegraphics[width=\linewidth]{Figures/swingup_distill.png}
        \caption{Swingup}
    \end{subfigure}
    \begin{subfigure}[b]{0.48\textwidth}
        \centering
        \includegraphics[width=\linewidth]{Figures/cartpole_distill.png}
        \caption{Cartpole}
    \end{subfigure}
    \begin{subfigure}[b]{0.48\textwidth}
        \centering
        \includegraphics[width=\linewidth]{Figures/three_pole_distill.png}
        \subcaption{Three-pole}
    \end{subfigure}
    \caption{Training curves comparing our online algorithm with and without policy distillation on continuous control environments. The solid curves depict the mean of the experiments and the standard deviations correspond to the standard deviation of the means.}
    \label{fig:train_curve}
\end{figure}

\textbf{Latent Representation and Performance for the Planar System}
\label{app:latent rep}
All of the following figures were trained using the PCC framework. We present 5 representations with the worst control performance (Figure \ref{fig:bad_planar_ilqr}) and 5 representations that had the best control performance (Figure \ref{fig:good_planar_ilqr}), with the PCC algorithm; thus we were using iLQR as the controller. Additionally, we present 5 representations that performed the worst (Figure \ref{fig:bad_planar_sac}) and 5 representations that performed the best (Figure \ref{fig:good_planar_sac}) with our offline CARL algorithm; thus, we were using SAC as the controller. These maps were generated by uniformly sampling a state $s$ from the underlying environment, creating a corresponding observation $x$, and using the encoder create the latent representation $z = \mathbb{E}[E(\cdot |x)]$. All of the latent maps presented in Figures 6-9 are generated from the same PCC framework and same hyperparameters so the best performing maps are all fairly similar and the worst performing maps have similarities. It is important to note that even though these latent maps are similar, it is clear that their is a large difference in performance in table 3. Importantly, it is clear that iLQR struggles significantly more than SAC in these non-linear environments as seen in the worst case performance and the corresponding latent maps, where the latent maps contain additional twisting or curvature resulting in poorer performance.

In this case it is obvious that control in several of the latent representations that performed poorly would be difficult as there are regions that are highly non-smooth, non-locally-linear; thus, a locally linear controller such as iLQR is likely to perform poorly. We compare the top and worst 5 representations trained using the PCC framework, with the only difference being the controller (SAC vs. iLQR) in table \ref{tab:top_bottom_resu}. 

\begin{figure}[h]
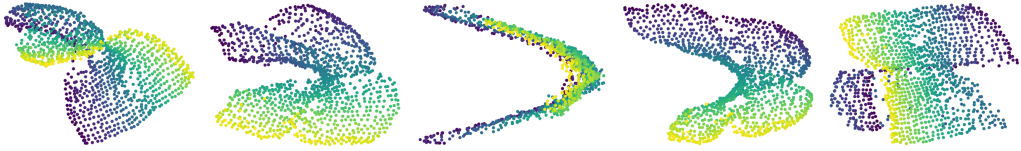

    \centering
    \begin{subfigure}[b]{0.19\textwidth}
        \centering
        \includegraphics[width=\linewidth]{Latent_Maps/Vanilla_PCC/Bad_Maps_iLQR/lat1.png}
    \end{subfigure}
    \begin{subfigure}[b]{0.19\textwidth}
        \centering
        \includegraphics[width=\linewidth]{Latent_Maps/Vanilla_PCC/Bad_Maps_iLQR/lat2.png}
    \end{subfigure}
    \begin{subfigure}[b]{0.19\textwidth}
        \centering
        \includegraphics[width=\linewidth]{Latent_Maps/Vanilla_PCC/Bad_Maps_iLQR/lat3.png}
    \end{subfigure}
    \begin{subfigure}[b]{0.19\textwidth}
        \centering
        \includegraphics[width=\linewidth]{Latent_Maps/Vanilla_PCC/Bad_Maps_iLQR/lat4.png}
    \end{subfigure}
    \begin{subfigure}[b]{0.19\textwidth}
        \centering
        \includegraphics[width=\linewidth]{Latent_Maps/Vanilla_PCC/Bad_Maps_iLQR/lat5.png}
    \end{subfigure}
    \caption{Latent maps for the 5 worst performing representations on average with PCC as the algorithm; thus, latent control with iLQR as the controller.}
    \label{fig:bad_planar_ilqr}
\end{figure}

\begin{figure}[h]
    \centering
    \begin{subfigure}[b]{0.19\textwidth}
        \centering
        \includegraphics[width=\linewidth]{Latent_Maps/Vanilla_PCC/Good_Maps_iLQR/lat1.png}
    \end{subfigure}
    \begin{subfigure}[b]{0.19\textwidth}
        \centering
        \includegraphics[width=\linewidth]{Latent_Maps/Vanilla_PCC/Good_Maps_iLQR/lat2.png}
    \end{subfigure}
    \begin{subfigure}[b]{0.19\textwidth}
        \centering
        \includegraphics[width=\linewidth]{Latent_Maps/Vanilla_PCC/Good_Maps_iLQR/lat3.png}
    \end{subfigure}
    \begin{subfigure}[b]{0.19\textwidth}
        \centering
        \includegraphics[width=\linewidth]{Latent_Maps/Vanilla_PCC/Good_Maps_iLQR/lat4.png}
    \end{subfigure}
    \begin{subfigure}[b]{0.19\textwidth}
        \centering
        \includegraphics[width=\linewidth]{Latent_Maps/Vanilla_PCC/Good_Maps_iLQR/lat5.png}
    \end{subfigure}
    \caption{Latent maps for the 5 best performing representations on average with PCC as the algorithm; thus, latent control with iLQR as the controller.}
    \label{fig:good_planar_ilqr}
\end{figure}

\begin{figure}[h]
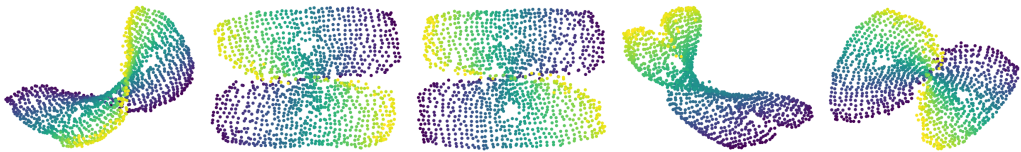

    \centering
    \begin{subfigure}[b]{0.19\textwidth}
        \centering
        \includegraphics[width=\linewidth]{Latent_Maps/Vanilla_PCC/Bad_Maps_SAC/lat1.png}
    \end{subfigure}
    \begin{subfigure}[b]{0.19\textwidth}
        \centering
        \includegraphics[width=\linewidth]{Latent_Maps/Vanilla_PCC/Bad_Maps_SAC/lat2.png}
    \end{subfigure}
    \begin{subfigure}[b]{0.19\textwidth}
        \centering
        \includegraphics[width=\linewidth]{Latent_Maps/Vanilla_PCC/Bad_Maps_SAC/lat3.png}
    \end{subfigure}
    \begin{subfigure}[b]{0.19\textwidth}
        \centering
        \includegraphics[width=\linewidth]{Latent_Maps/Vanilla_PCC/Bad_Maps_SAC/lat4.png}
    \end{subfigure}
    \begin{subfigure}[b]{0.19\textwidth}
        \centering
        \includegraphics[width=\linewidth]{Latent_Maps/Vanilla_PCC/Bad_Maps_SAC/lat5.png}
    \end{subfigure}
    \caption{Latent maps for the 5 worst performing representations on average with Offline CARL as the algorithm; thus, latent control with SAC as the controller.}
    \label{fig:bad_planar_sac}
\end{figure}

\begin{figure}[h]
    \centering
    \begin{subfigure}[b]{0.19\textwidth}
        \centering
        \includegraphics[width=\linewidth]{Latent_Maps/Vanilla_PCC/Good_Maps_SAC/lat1.png}
    \end{subfigure}
    \begin{subfigure}[b]{0.19\textwidth}
        \centering
        \includegraphics[width=\linewidth]{Latent_Maps/Vanilla_PCC/Good_Maps_SAC/lat2.png}
    \end{subfigure}
    \begin{subfigure}[b]{0.19\textwidth}
        \centering
        \includegraphics[width=\linewidth]{Latent_Maps/Vanilla_PCC/Good_Maps_SAC/lat3.png}
    \end{subfigure}
    \begin{subfigure}[b]{0.19\textwidth}
        \centering
        \includegraphics[width=\linewidth]{Latent_Maps/Vanilla_PCC/Good_Maps_SAC/lat4.png}
    \end{subfigure}
    \begin{subfigure}[b]{0.19\textwidth}
        \centering
        \includegraphics[width=\linewidth]{Latent_Maps/Vanilla_PCC/Good_Maps_SAC/lat5.png}
    \end{subfigure}
    \caption{Latent maps for the 5 best performing representations on average with PCC as the algorithm; thus, latent control with SAC as the controller.}
    \label{fig:good_planar_sac}
\end{figure}

\begin{table}[h]
\centering
\begin{tabular}{|l | l | l | l |}
\hline
\textbf{Environment} & \textbf{Algorithm} & \textbf{Worst 5 Avg. Results} & \textbf{Top 5 Avg. Results} \\
\hline
Planar & PCC & $6.15 \pm 2.89$ & $62.25\pm 4.45$\\
Planar & Offline CARL  & $\textbf{33.46} \pm \textbf{4.61}$ & $\textbf{78.87}\pm \textbf{0.19}$\\
\hline\hline
Swingup & PCC & $\textbf{68.07} \pm \textbf{3.49}$ & $95.71\pm 0.38$\\
Swingup & Offline CARL & $57.22 \pm 3.71$ & $\textbf{98.50}\pm \textbf{0.0}$\\
\hline\hline
Cartpole & PCC & $50.98\pm 5.44$ & $99.85\pm 0.08$\\
Cartpole & Offline CARL & $\textbf{74.44} \pm \textbf{5.28}$ & $\textbf{100.0} \pm \textbf{0.0}$\\
\hline\hline
Three-pole & PCC & $0\pm 0$ & $18.42\pm 2.98$\\
Three-pole & Offline CARL & $\textbf{6.17}\pm \textbf{1.71}$ & $\textbf{85.77} \pm \textbf{0.23}$\\
\hline

\end{tabular}
\caption{Percentage of steps in goal state; averaged over the 5 worst models and the 5 best models.}
\label{tab:top_bottom_resu}
\end{table}

\newpage
\newpage

\textbf{Additional SOLAR Results}
\label{sec:Additional SOLAR Results}
In our experiments for the planar, swingup, and cartpole environments we start from a point randomly chosen from a region surrounding the start point in the underlying MDP; additionally, in the planar case we randomize the target every episode. In Table \ref{tab:main_results2}, we present results where the start and goal states are from fixed points, to see if there is improvement in the SOLAR results. Also we try to shorten the horizon for swingup to $100$ to see if shorter horizons can play a factor in domains with rather long horizons. We don't present any new results on the three-pole task as there was already a fixed starting state and fixed goal. We note that there is a dramatic improvement for the planar case when there is a fixed start and goal state and modest improvement in the cartpole and swingup cases. However, we still need to note that these results still are incomparable to the performance of any of CARL variants, offline CARL, online CARL, value-guided CARL, introduced in this paper.

\begin{table}[h]
\begin{small}
\centering
\begin{tabular}{|l | l | l | l | l|}
\hline
\textbf{Environment} & \textbf{Algorithm} & \textbf{Number of
Samples} & \textbf{Avg Result} & \textbf{Best Result} \\
\hline
Planar & SOLAR & 5000 (VAE) + 40000 (Control) & $26.70 \pm 5.92$ & $41\pm 7.28$\\
Cartpole & SOLAR & 10000 (VAE) + 40000 (Control) & $14.60\pm 1.70$ & $20.05 \pm 2.91$\\
Swingup & SOLAR & 20000 (VAE) + 40000 (Control) & $22.40 \pm 3.07$ & $34.03\pm 2.09$\\
\hline
\end{tabular}
\caption{Percentage of steps in goal state; averaged over all models and the best model. Additionally, the number of samples used for training for SOLAR are under the condition that there is the same start and same goal state for all episodes.}
\label{tab:main_results2}
\end{small}
\end{table}

\end{document}